\def\eqref#1{equation~\ref{#1}}
\def\1{\bm{1}}
\newcommand{\train}{\mathcal{D}}
\def\vs{{\bm{s}}}
\DeclareMathAlphabet{\mathsfit}{\encodingdefault}{\sfdefault}{m}{sl}
\SetMathAlphabet{\mathsfit}{bold}{\encodingdefault}{\sfdefault}{bx}{n}
\def\gC{{\mathcal{C}}}
\def\gD{{\mathcal{D}}}
\def\gE{{\mathcal{E}}}
\def\gF{{\mathcal{F}}}
\def\gG{{\mathcal{G}}}
\def\gL{{\mathcal{L}}}
\def\gN{{\mathcal{N}}}
\def\gO{{\mathcal{O}}}
\def\gR{{\mathcal{R}}}
\def\gT{{\mathcal{T}}}
\def\gU{{\mathcal{U}}}
\def\gV{{\mathcal{V}}}
\def\sR{{\mathbb{R}}}
\def\sS{{\mathbb{S}}}
\newcommand{\E}{\mathbb{E}}
\newcommand{\Var}{\mathrm{Var}}
\DeclareMathOperator*{\argmin}{arg\,min}
\DeclareMathOperator*{\supp}{supp}
\DeclareMathOperator*{\var}{Var}
\newtheorem{theorem}{Theorem}[section]
\newtheorem{lemma}[theorem]{Lemma}
\newtheorem{prop}[theorem]{Proposition}
\newtheorem{definition}[theorem]{Definition}
\newtheorem{assumption}[theorem]{Assumption}
\theoremstyle{definition}
\def\ie{\textit{i.e.,~}}
\def\eg{\textit{e.g.,~}}
\def\wrt{\textit{w.r.t.~}}
\def\vs{\textit{v.s.~}}
\def\aka{\textit{a.k.a.~}}
\def\CC{\textcolor{black}}
\title{Chaos is a Ladder: A New Theoretical Understanding of Contrastive Learning via Augmentation Overlap}
\author{
% Yifei Wang\thanks{Equal contribution. Contact: {yifei\_wang@pku.edu.cn}.}
% \And Qi Zhang$^*$ \And Yisen Wang\thanks{Corresponding author: Yisen Wang ({yisen.wang@pku.edu.cn}).} \\ \quad \\ Peking University 
% \And Jiansheng Yang \And Zhouchen Lin 

Yifei Wang$^1$\thanks{Equal contribution. Qi Zhang's work was done during an internship at Peking University.} \quad Qi Zhang$^{2*}$\quad {Yisen Wang}$^{3,4}$\thanks{Corresponding author: Yisen Wang (yisen.wang@pku.edu.cn).} \quad {Jiansheng Yang}$^1$ \quad {Zhouchen Lin}$^{3,4,5}$
\vspace{0.05in} \\
$^1$ School of Mathematical Sciences, Peking University \\
$^2$ School of Computer Science and Engineering, Sun Yat-sen University \\
$^3$ Key Lab. of Machine Perception (MoE), School of Artificial Intelligence, Peking University \\
$^4$ Institute for Artificial Intelligence, Peking University \\
$^5$ Pazhou Lab, Guangzhou, 510330, China

% \\

% $^1$ School of Mathematical Sciences, Peking University, Beijing, China \\
% $^2$ Key Lab. of Machine Perception, School of Artificial Intelligence, Peking University, Beijing, China \\
% $^3$ Institute for Artificial Intelligence, Peking University, Beijing, China \\
% $^4$ Pazhou Lab, Guangzhou, China
%   \texttt{yifei\_wang@pku.edu.cn, yisen.wang@pku.edu.cn} \\ \texttt{yjs@math.pku.edu.cn, zlin@pku.edu.cn}
}
\begin{document}

\maketitle

% \vspace{-0.3in}
\begin{abstract}
Recently, contrastive learning has risen to be a promising approach for large-scale self-supervised learning. However, theoretical understanding of how it works is still unclear. In this paper, we propose a new guarantee on the downstream performance without resorting to the conditional independence assumption that is widely adopted in previous work but hardly holds in practice. Our new theory hinges on the insight that the support of different intra-class samples will become more overlapped under aggressive data augmentations, thus simply aligning the positive samples (augmented views of the same sample) could make contrastive learning cluster intra-class samples together. Based on this \textit{augmentation overlap} perspective, theoretically, we obtain asymptotically closed bounds for downstream performance under weaker assumptions, and empirically, we propose an unsupervised model selection metric ARC that aligns well with downstream accuracy. Our theory suggests an alternative understanding of contrastive learning: the role of aligning positive samples is more like a surrogate task than an ultimate goal, and the overlapped augmented views (i.e., the chaos) create a ladder for contrastive learning to gradually learn class-separated representations. The code for computing ARC is available at \textcolor{blue}{\url{https://github.com/zhangq327/ARC}}.
% Based on this understanding, we also propose a new unsupervised metric for contrastive learning models, ARC, which is shown to align well with supervised accuracy. 
% this \href{https://github.com/zhangq327/ARC}{\textcolor{blue}{url}}.
% More rigorously, we characterize when these assumptions hold through both theoretical models and real-world experiments. 
% To provide guarantees for its downstream performance, previous work relies on assumptions on the conditional independence of positive samples, which hardly holds in practice. and we have shown that our assumptions on positive samples are strictly weaker than conditional independence. 
\end{abstract}

% \vspace{-0.1in}
\section{Introduction}
\label{sec:introduction}
Contrastive Learning (CL) emerges to be a promising paradigm for learning data representations without labeled data \citep{InfoNCE,infomax}. Recently, it has achieved impressive results and gradually closed the gap between supervised and unsupervised learning, hopefully leading to a new era that resolves the hunger for labeled data in the deep learning field \citep{moco,simclrv2,wang2021residual}. However, despite its intriguing empirical success, a theoretical understanding of how contrastive learning actually works in practice is still under-explored.

The general methodology of contrastive learning is quite simple, that is to maximize the similarity between augmented views of the same image (\aka positive samples), and minimize the similarity between that of two random images (\aka negative samples). Intuitively, it is \emph{an instance discrimination task} (differing each image from others) instead of \emph{a classification task} (clustering images from the same class together and differing with other classes). Nevertheless, as shown in Figure \ref{fig:clusted-features}, CL representations are also class-separated. Therefore, understanding how the pretraining task (CL) and the downstream task (classification) interact plays a central role in both theoretical understandings and practical designings of contrastive methods.

Previously, \citet{arora} and \citet{jason} have tried to establish guarantees on the classification performance for self-supervised representations. However, their analysis relies heavily on the assumption that the two positive samples, as augmented views of the same image, are (nearly) conditionally independent on the class $y$. However, this is hardly practical as the augmented views are still strongly input-dependent (see Figure \ref{fig:positive-samples}). In fact, if the conditional independence is satisfied, the unsupervised task will become as informative as the supervised task, making this discussion almost unnecessary. This motivates us to find more practical and weaker assumptions to understand how contrastive learning actually works (even without conditional independence). To achieve this, we need to re-examine the contrastive learning process. Previously, \citet{wang} show that CL objective involves two goals: alignment (for positive samples) and uniformity (for negative samples). Nevertheless, we show that there exist bad cases where the features could still have poor performance even with perfect alignment and uniformity. Thus, contradictory to the common belief of contrastive learning as learning invariance, we note that invariance alone is inadequate to learning useful representations for downstream tasks.

% Motivated by this situation, we 
In this paper, we provide a novel understanding of contrastive learning that requires only practical and minimal assumptions, while also guarantee class-separated representations. Our core insight hinges on the observation that contrastive learning usually adopts much more aggressive data augmentations than that in supervised learning \citep{moco,simclr}. As shown in Figure \ref{fig:positive-samples}, we notice that aggressive random cropping of two images can generate views that are very much alike that we could even hardly tell them apart, \eg the wheels of two different cars. In other words, there will be support overlap between different intra-class images through aggressively augmented views of them, a phenomenon we call \textit{augmentation overlap}. Thus, the alignment of positive samples will also cluster all the intra-class samples together, and lead to class-separated representations. From our perspective, the role of data augmentation is to create a certain degree of ``chaos'' between intra-class samples, and the role of contrastive loss is to ``climb the ladder of chaos'', \ie the process that we gradually cluster intra-class samples by aligning positive samples.

Following this intuition, we develop a new theory for understanding the effectiveness of contrastive learning from the perspective of augmentation overlap. Specifically, we derive the upper and lower bounds for its downstream performance and show how the two bounds will asymptotically converge with our assumptions on augmentation overlap. Driven by this analysis, we further discuss how varying augmentation will affect the performance of contrastive learning from both synthetic and real-world datasets, and show that the results align well with our theory. In summary, 
\begin{itemize}
    \item We characterize the failure of the previous analysis of contrastive learning, and develop a new understanding through the augmentation overlap effect. Compared to existing theories on contrastive learning, ours can provide guidance to the practical designing of contrastive methods and evaluation metrics.
    \item We establish general guarantees (both upper and lower bounds) for the downstream performance without assumptions on conditional independence. And we further show how the two bounds could asymptotically converge under our less restrictive assumptions.
    \item We provide a quantitative discussion on the effect of augmentation strength, which verifies our theory from both theoretical and empirical aspects. Motivated by our theory, we further propose a new unsupervised evaluation metric for contrastive learning named ARC and show that it aligns well with downstream performance on real-world datasets.
\end{itemize}

\begin{figure}
% \vspace{-0.4in}
    \centering
    \subfigure[Contrastive learning learns clustered features.]{
    \label{fig:clusted-features}
    \includegraphics[width=0.6\textwidth]{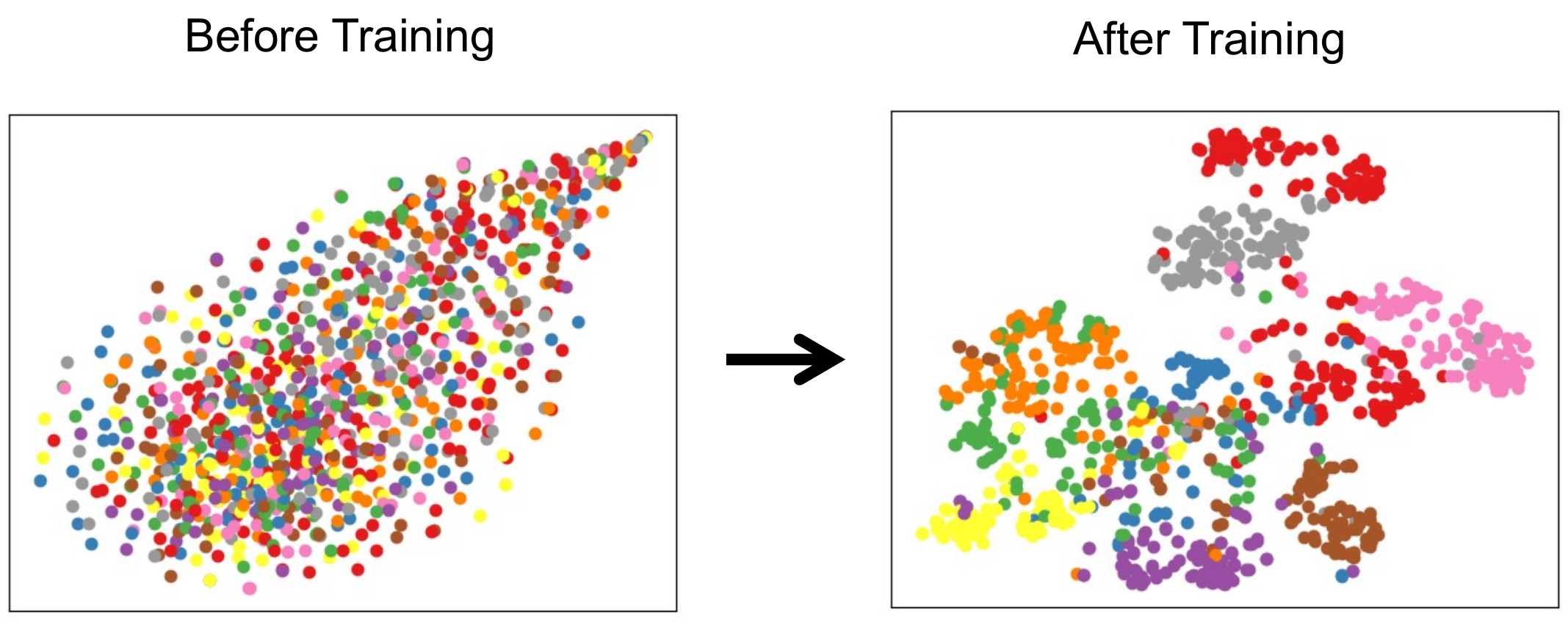}}
    \hfill
    \subfigure[Intra-class samples are more alike via augmented views.]{
    \label{fig:positive-samples}
    \includegraphics[width=0.3\textwidth]{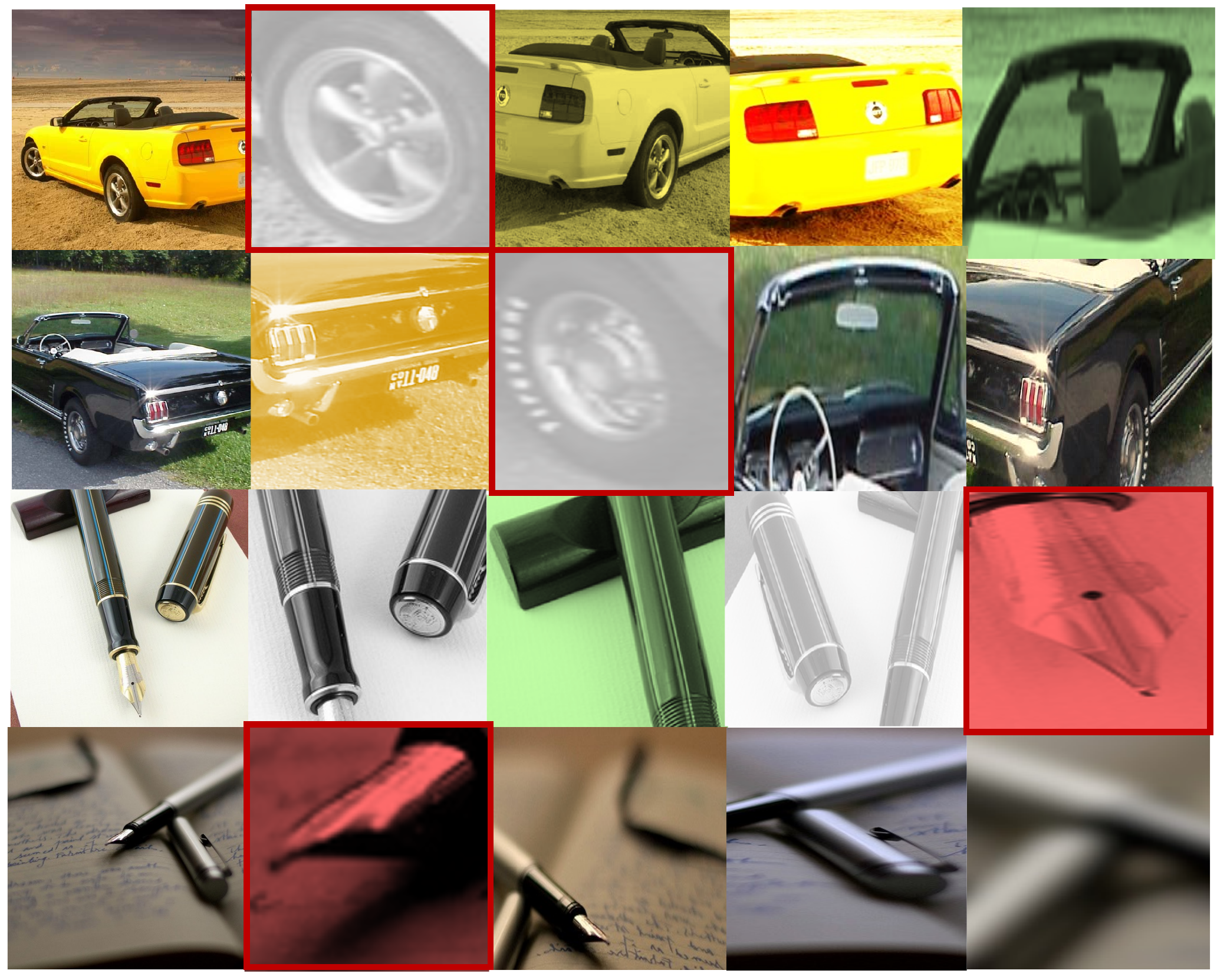}}
    \vspace{-0.1in}
    \caption{(a) t-SNE visualization of representations before and after contrastive learning. Each point denotes a sample and its color denotes its class. (b) Applying aggressive data augmentations \citep{simclr} to four images from ImageNet (two are cars and two are pens). The 1st column shows the raw (center-cropped) images and the 2-5th colums show the augmented ones.}
    \label{fig:my_label}
    \vspace{-0.1in}
\end{figure}

% \vspace{-0.1in}

\section{Related Work}
\label{sec:related-work}

\textbf{Contrastive Learning in Practice.} Contrastive self-supervised learning originates from a mutual information perspective of representation learning \citep{InfoNCE,infomax}, and soon becomes a general learning paradigm that contrasts between positive and negative pairs \citep{moco,simclr}. It is rapidly closing the performance gap between unsupervised and supervised learning on large-scale dataset like ImageNet \citep{mocov3}, and outperforms supervised learning when combined with a few (\eg 10\%) labels \citep{simclrv2}. 
Several recent works show that similar performance could be achieved without negative samples by adopting certain training techniques \citep{BYOL,simsiam}. 
% We regard them all as contrastive learning in general as we will show the alignment term
% sometimes contrastive learning can even enjoy better feature transferability than supervised learning. 

\textbf{Understanding Contrastive Learning Objectives.} Both the original InfoNCE loss \citep{InfoNCE} and its InfoMax variants \citep{infomax,bounds} are designed as variational estimates of the mutual information between inputs and representations, but these estimators are shown to have poor bias-variance trade-offs \citep{song2019understanding}. Instead, \citet{wang} simply understand contrastive learning through the two terms in the InfoNCE loss: alignment of positive samples and uniformity of negative samples. However, as we show later, this perspective is also insufficient to explain the effectiveness of contrastive learning, and we should take the interplay between augmentation and alignment into consideration.
% From the perspective of our theory, the alignment of positive samples is mainly a surrogate task for clustering intra-class samples together. 

\textbf{Understanding Downstream Generalization.} \citet{arora} propose the first theoretical guarantees by bridging the contrastive and classification objectives. 
% However, as they adopt the hinge and logistic loss, their bounds suffer from the class collision problem, \ie an incompressible error caused by negative samples from the same class. 
\citet{jason} further link the reconstruction-based objective to the downstream objective. However, both \citet{arora} and \citet{jason} rely on the unrealistic assumption that the positive samples are (nearly) conditionally independent. \citet{huang2021towards} establish bounds by assuming a very small intra-class support diameter, which is also not practical. Besides, some also explore the information-theoretical perspectives for analyzing contrastive learning {\citep{InfoMin,tsai2020self,tosh2020contrastive,tosh21a}}, though their mutual information assumptions are hard to verify. 
{Recently, similar to our analysis, \citet{haochen2021provable} also study the augmentation graph and establish guarantees in terms of graph connectivity. Our work differs to theirs mainly in three aspects: 1) our analysis is applicable for the widely adopted InfoNCE and CE losses, while theirs is developed for their own spectral loss; 2) ours starts from the alignment and uniformity perspective while theirs starts from the matrix decomposition perspective; 3) our theory is empirically verified and inspires a useful evaluation metric for data augmentation, while their analysis focusing on minimizing the decomposition error is farther from the practical designing of positive and negative samples. In a nutshell, compared to previous discussions, our theory has a closer connection to the actual contrastive learning process, and we verify the feasibility of each assumption with empirical evidence. }
% provide guarantees for contrastive learning also from the perspective of augmentation graph. 
% a spectral graph theory perspective
% develop a spectral 
% Recently, and spectral graph theory perspective \citep{haochen2021provable}. In par
% Nevertheless, their assumptions are usually technical and the analysis does not involve the actual learning process, and thus can hardly provide guides for designing practical contrastive methods. 
% In comparison, our theory is directly built on a new understanding of the contrastive learning process, and we verify the feasibility of each assumption with empirical evidence. 
% {A detailed comparison can be found in Appendix \ref{sec:additional-related-work}.}

% Besides, our generalization bounds are closely connected to the augmentation strength adopted, based on which we can provide a theoretical and empirical characterization on the effect of augmentations and show that it aligns well with the empirical results.

% \section{Previous Understandings and Their Limitations}
% \vspace{-0.05in}
\section{Limitations of Previous Understandings}
% \vspace{-0.05in}

% \subsection{Problem Setup}

% \textbf{Contrastive Pretraining.}  
We begin by introducing the basic notations and common practice of contrastive learning in the image classification task. In general, it has two stages, unsupervised pretraining, and supervised finetuning. In the first stage, with $N$ unlabeled samples $\train_u=\{x_i\}_{i=1}^N$, we pretrain an encoder mapping from the $d$-dimensonal input space to a unit hypersphere $f\in\gF:\sR^d\to\sS^{m-1}$ in the $m$-dimensional space. In the second stage, we evaluate the learned representations $z$ with the labeled data $\train_l=\{(x_i,y_i)\}$ where labels $y_i\in\{1,\dots,K\}$. Specifically, we fix the encoder and learn a linear classification head $g:\gR^m\to\gR^K$ on top from $\tilde\train_l=\{(z,y)|z=f(x)\in\gR^m\}$.

\textbf{Contrastive Pretraining}. 
% Here, we introduce the general framework of contrastive learning. 
Taking a training example $x\in\train_u$, we draw its positive sample $x^+=t(x)$ by applying a random data augmentation $t\sim\gT$, and draw $M$ randomly augmented samples $\{x_i^-\}_{i=1}^M$ from $\train_u$ as its negative samples. Then, we can learn the encoder $f$ with the widely used InfoNCE loss
% \footnote{Unless specified, we adopt the single-sample loss functions across this paper for clarity and simplicity.} 
\citep{InfoNCE}
\begin{equation}\small
\begin{aligned}
% \E_{x\sim\train_u}
\gL_{\rm NCE}(f)
=&\E_{p(x,x^+)}\E_{\{p(x_i^-)\}}\left[-\log\frac{\exp(f(x)^\top f(x^+))}{\sum_{i=1}^M\exp(f(x)^\top f(x^-_i))}\right].\\
% =&\E_{p(x)}\left[-\E_{p(x^+|x)}f(x)^\top f(x^+)+\E_{\{p(x_i^-)\}}\log\sum_{i=1}^M\exp(f(x)^\top f(x^-_i))\right].
% =\E_{x^+,\{x_i^-\}}\left(-f(x)^\top f(x^+)+\log\sum_{i=1}^M\exp(f(x)^\top f(x^-_i))\right)
\end{aligned}
\label{eqn:infonce-frac}
\end{equation}
% In general, we will have the following natural assumptions on the consistency between the data distributions:
Let $p(x)$ be the data distribution, $p(x,x^+)$ be the joint distribution of positive pairs, and we simply assume  $p(x,x^+)=p(x^+,x)$ and $p(x)=\int p(x,x^+)d x^+,\forall\ x\in\sR^d$ following \citet{wang}.
% Intuitively, minimizing the InfoNCE loss will learn the representation of $x$ by 1) maximizing its similarity with the representation of the positive sample $x^+$, while 2) minimizing its similarity with the representations of the negative samples $\{x^-_i\}_{i=1}^M$.

\noindent\textbf{Linear Evaluation}. To evaluate the learned representations by contrastive learning, we usually adopt the 
Cross Entropy (CE) loss \citep{simclr} for a labeled pair $(x,y)\in\gD_l$
\begin{equation}\small
\gL_{\rm CE}(f,g)=\E_{p(x,y)}\left[
-\log\frac{\exp\left(f(x)^\top w_y\right)}{\sum_{i=1}^K\exp\left(f(x)^\top w_i\right)}\right],
\label{eq:linear-evaluation}
\end{equation}
with a linear classifier $g(z)=Wz$ where $W=[w_1,w_2,\dots,w_K]$.

\subsection{Existing Theoretical Assumptions and Their Limitations}

As discussed above, there are some previous understandings on how contrastive learning yields good performance, and they mainly differ by their theoretical assumptions.

% \vspace{-0.4in}

\begin{wrapfigure}{r}{0.35\textwidth} 
 \centering
% \includegraphics{}[Some Figure]
% \subfigure[The failure of contrastive learning.]{
% \label{fig:wang-couterexample}
% \vspace{-0.1in}
\includegraphics[width=0.35\textwidth]{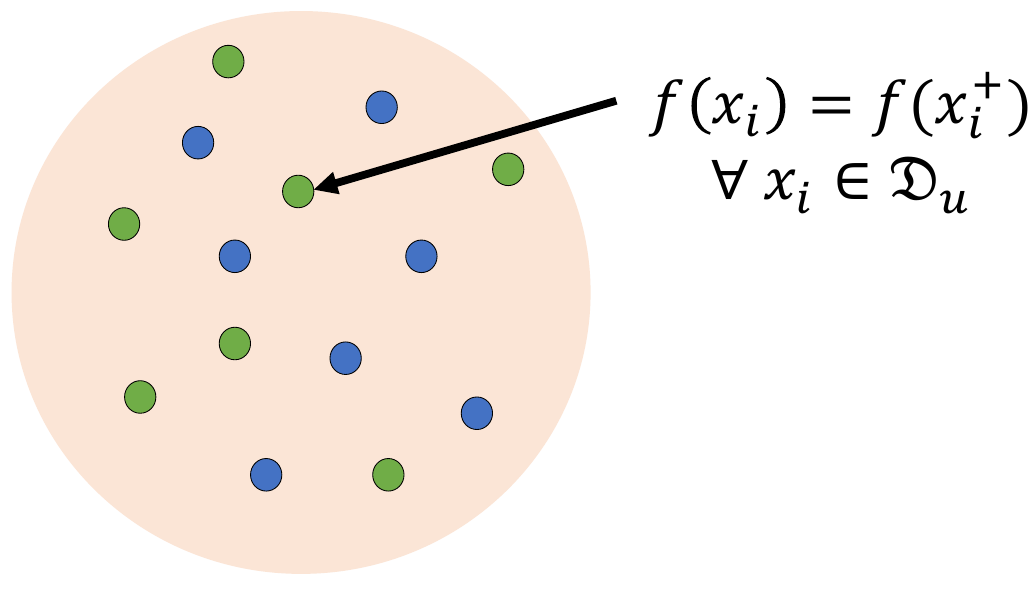}
% \quad
% \caption{Failure of contrastive learning even if features are uniformly distributed and positive samples are perfect aligned. } 
% \caption{An illustration of the case when contrastive learning fails to learn class-separated features even if the features are uniformly distributed and positive samples are perfect aligned. Each color denotes a class.}
\caption{Contrastive learning may learn class inseparable features even with perfect aligned postive samples and uniform negative samples. Colors denote classes.
}
\label{fig:wang-couterexample}
\vspace{-0.2in}
\end{wrapfigure}

First, \cite{wang} interpret the first and second terms of the InfoNCE loss (Eq.~\ref{eqn:infonce-frac}) as they are aiming at the following two properties: 1) {alignment} (the nominator): positive samples $x,x^+$ has similar features, \ie $f(x)\approx f(x^+)$; 2) {uniformity} (the denominator): features are roughly uniformly distributed in the unit hypersphere $\sS^{m-1}$. In particular, they show that InfoNCE can be minimized with 1) perfect alignment  and 2) perfect uniformity. However, as we illustrate in Figure \ref{fig:wang-couterexample}, the features could still have very poor downstream performance in the finite sample scenario. This issue can be described rigorously by the following proposition.
% even with both perfect alignment and perfect uniformity. 
% \begin{figure}
%     \centering
%     \includegraphics[scale=0.2]{gaussian_graph/图片1.png}
%     \caption{example for proposition 3.2}
%     \label{fig:my_label}
% \end{figure}

\begin{prop}[Class-uniform Features Also Minimize the InfoNCE Loss]
For $N$ training examples of $K$ classes, consider the case when features $\{f(x_i)\}_{i=1}^N$ are randomly distributed in $\sS^{m-1}$ with maximal uniformity  {(\ie, minimizing the 2nd term of Eq.~\ref{eqn:infonce-frac})} while also satisfying $\forall x_i,x_i^+\sim p(x,x^+), f(x_i)=f(x^+_i)$. Because we have {these two properties}, the InfoNCE loss achieves its minimum. However, the downstream classification accuracy is at most $1/K+\varepsilon$ and $\varepsilon$ is nearly zero when $N$ is large enough.
% For $N$ training examples of $K$ classes, consider the case when features $\{f(x_i)\}_{i=1}^N$ are randomly distributed in $\sS^{m-1}$ with maximal uniformity {under perfect alignment, \ie $\forall x_i,x_i^+\sim p(x,x^+), f(x_i)=f(x^+_i)$, which has nearly optimal InfoNCE loss. Because we have perfect alignment and perfect uniformity, the InfoNCE loss achieves its minimum. Even so, there could still be a large gap (\eg Figure \ref{fig:wang-couterexample}) where} the downstream classification accuracy is at most $1/K+\varepsilon$ and $\varepsilon$ is nearly zero when $N$ is large enough.
\label{prop:wang-couterexample}
\end{prop}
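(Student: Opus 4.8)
The plan is to exhibit an explicit configuration of features satisfying both hypotheses (perfect alignment and maximal uniformity, hence minimizing the InfoNCE loss) and then upper-bound the accuracy of any linear classifier built on top of it. First I would make the alignment condition concrete: since $f(x_i) = f(x_i^+)$ for every positive pair, $f$ is constant on each augmentation-connected component; for the counterexample it suffices to let $f$ be constant on each training point, i.e. assign to each $x_i$ a single representation vector $v_i \in \sS^{m-1}$, and choose the $v_i$ i.i.d.\ uniformly on the sphere (this is the ``class-uniform'' / maximally uniform assignment — it ignores class labels entirely). I should note that this choice drives the second (uniformity) term of Eq.~\ref{eqn:infonce-frac} to its optimum in expectation over the negative sampling, and combined with perfect alignment it attains the global minimum of $\gL_{\rm NCE}$; this part is essentially the observation of \citet{wang} reused here, so I would state it with a one-line justification (alignment kills the numerator term, i.i.d.\ uniform draws minimize the expected log-sum-exp of the denominator).

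The substance is the accuracy bound. Fix any linear head $W = [w_1,\dots,w_K]$. The classifier's prediction on $x_i$ depends only on $v_i$ through $\arg\max_k v_i^\top w_k$, which partitions $\sS^{m-1}$ into $K$ cones $R_1,\dots,R_K$ (cells of a linear arrangement through the origin). The key point is that because the $v_i$ are drawn independently of the labels $y_i$, conditioning on the event $v_i \in R_k$ gives no information about $y_i$: for the classifier to be correct on $x_i$ we need $y_i$ to equal the (fixed) label $k$ of the cell containing $v_i$. So I would bound the expected number of correctly classified points. Intuitively, the best the adversary-free bound can do is: in each region, predict the plurality class; but since the $v_i$ land in the regions independently of their labels, the fraction of points in region $R_k$ whose label matches the region's assigned class concentrates around $\max_k' \Pr[y = k']$... — and this is where I have to be a little careful, because a fully adaptive $W$ chosen after seeing the data could in principle do slightly better, which is exactly the source of the $\varepsilon$ slack.

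So the real argument for the ``$1/K + \varepsilon$'' claim, assuming a roughly balanced label distribution (or more precisely, stating the bound as $\max_k \Pr[y=k] + \varepsilon$), goes via concentration: let $n_k$ be the number of training points with label $k$. For a \emph{fixed} linear arrangement, the assignment of $v_i$'s to cells is independent of labels, so the count of correctly-labeled points is a sum of independent indicators and concentrates around its mean $\approx \sum_k |R_k\text{-points}| \cdot \Pr[y=k \mid \text{fixed cell class}]$, which is at most $(\max_k n_k/N)\cdot N$. To handle the $\arg\max$ over all $W$, I would take a union bound over the combinatorially many cells of linear arrangements of $K$ hyperplanes through the origin in $\sR^m$ (polynomially many in $N$ once we note only the induced partition of the $N$ points matters, so at most $N^{O(Km)}$ distinct behaviors), and a Hoeffding/Chernoff bound gives deviation $O(\sqrt{(Km\log N)/N})$ uniformly. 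Hence the accuracy is at most $1/K + \varepsilon$ with $\varepsilon = O(\sqrt{(Km\log N)/N}) \to 0$ as $N \to \infty$ (using the balanced assumption $n_k \approx N/K$). The main obstacle is precisely this uniformity-over-all-classifiers step — getting a clean, honest bound on the effective number of distinct labelings realizable by a linear head and arguing the concentration survives the union bound; the rest (alignment $\Rightarrow$ numerator optimal, i.i.d.\ uniform $\Rightarrow$ uniformity term optimal, $v_i \perp y_i$ $\Rightarrow$ per-cell accuracy governed by the marginal label distribution) is routine.
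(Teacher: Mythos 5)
Your proposal takes the same route as the paper's own proof: same counterexample (no $\gT$-connectivity between distinct training samples, so $f(x_i)$ may be assigned i.i.d.\ uniformly on $\sS^{m-1}$, independent of the labels), same observation that this simultaneously saturates alignment and uniformity and hence minimizes $\gL_{\rm NCE}$, and same conclusion that because the representation carries no label information, classification accuracy collapses to $1/K$. Where you go beyond the paper is in the rigor of the final step. The paper invokes the law of large numbers for each \emph{fixed} measurable set $\gU \subseteq \sS^{m-1}$ and concludes that ``any classifier $g$ that classifies $\gU$ to class $k$ will only have $1/K$ accuracy asymptotically,'' but this per-classifier concentration statement does not by itself bound the accuracy of the \emph{best} linear head, which is chosen after seeing the same data. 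You correctly flag this uniformity-over-classifiers step as the substantive obstacle and supply the missing argument: bound the number of distinct cell-labelings a $K$-way linear head can induce on $N$ points by $N^{O(Km)}$ (a Sauer/growth-function count for the multiclass linear hypothesis class), take a union bound, and apply Hoeffding to obtain a deviation of order $\sqrt{Km\log N/N}$ uniformly over all heads. You are also more careful in noting the bound should really read $\max_k \Pr[y=k] + \varepsilon$ unless classes are roughly balanced. In short, your proof is a tightened version of the paper's that closes a gap the paper leaves implicit.
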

\vspace{-0.05in}
Proofs can be found in Appendix \ref{sec:ommited-proofs}. This proposition indicates that 
% feature alignment and uniformity alone (or Assumption \ref{assumption:wang}) are inadequate to guarantee good downstream performance. In other words,
the instance discrimination task (alignment + uniformity) alone cannot guarantee {the learning of} class-discriminative features as desired in the downsteam classification. 
% This requires us to look into the relationship between the pretraining task and the downstream task.
Instead, \citet{arora} and \cite{jason} both establish the relationship between pretraining and classification objectives and provide guarantees for the downstream performance. In fact, the two works both assume the conditional independence of the two positive samples, \ie $p(x,x^+|y)=p(x|y)p(x^+|y)$. However, this assumption is too strong as it is hardly practical. As shown in Figure \ref{fig:positive-samples}, augmented views from the same class are not actually independent as views from the same sample are more alike than that from other samples. 
% In fact, This assumption is nearly as strong as supervised learning which could draw two positive samples independently from the same class. Nevertheless, contrastive learning still performs well even if the positive pairs are not conditionally independent. 

% \vspace{-0.1in}
\section{New Augmentation Overlap Theory for Contrastive Learning}
\label{sec:overlap-theory}
The analysis above motivates us to find a \emph{minimal} and \emph{practical} assumption: 1) it is enough to guarantee good performance on downstream tasks; 2) it is less restrictive than the i.i.d. assumptions as in \citet{arora} and \cite{jason}. 
% To achieve this, we propose provide a theoretical analysis of the gap between the contrastive learning loss and supervised learning loss as follows. 

% \vspace{-0.1in}
\subsection{Gap Between Contrastive Learning and Downstream Classification}
% {Add explanation on failure cases?}
% In other words, the selected augmentations should be label-preserving.
% That is, it cannot avoid learn a class-uniformly distributed feature space with contrastive learning {This sentence is very strange here.}. 
% With this minimal assumption,
% Previous works like \citet{arora} and \cite{jason} establish guarantees the downstream performance of contrastive learning under strong and unrealistic assumptions. 
% Here, we begin by showing that we can obtain guarantees for the downstream performance of contrastive learning in general without resort to i.i.d. assumptions.

We start with an assumption on the label consistency between positive samples, that is, any pair of positive samples $(x,x^+)$ should belong to the same class. 
% In other words, the data augmentation will not alter the class information of the input.
\begin{assumption}[Label Consistency]
 $\forall\ x,x^+ \sim p(x,x^+)$, {we assume the labels are deterministic (one-hot) and consistent: $p(y|x)=p(y|x^+)$.}
 \label{assumption:label-consistency}
\end{assumption}
\vspace{-0.05in}
This is a natural and minimal assumption that is likely to hold in practice. As shown in Figure \ref{fig:positive-samples}, the widely adopted augmentations in contrastive learning \citep{simclr} like images cropping, color distortion, and horizontal flipping will hardly alter the belonging image classes. 
% {Add explanation on failure cases?}
% In other words, the selected augmentations should be label-preserving.
% That is, it cannot avoid learn a class-uniformly distributed feature space with contrastive learning {This sentence is very strange here.}. 

With this minimal assumption, we can characterize the generalization gap between unsupervised and supervised learning risks. We first introduce the {mean CE loss}, $L^\mu_{\rm CE}(f)=\E_{p(x,y)}\left[-\log\frac{\exp\left(f(x)^\top \mu_y\right)}{\sum_{i=1}^K\exp\left(f(x)^\top \mu_i\right)}\right]$, where we use the classwise \emph{mean representation} $\mu_k=\E_{p(x|y=k)}[f(x)]$  as the weight $w_k$ of the classifier $g$. 
It is easy to see that the mean CE loss upper bounds the CE loss, \ie $L^\mu_{\rm CE}(f)\geq\min_g \gL_{\rm CE}(f,g)$ and \cite{arora} showed that the mean classifier could achieve comparable performance to learned weights. Then, we have the following upper and lower bounds on the downstream risk (measured by mean CE loss).

\begin{theorem}[Guarantees for General Encoders] If Assumption \ref{assumption:label-consistency} holds, then, for any $f \in \mathcal{F}$, its downstream classification risk $\gL_{\rm CE}^\mu(f)$ can be bounded by the contrastive learning risk $\gL_{\rm NCE}(f)$
% \begin{equation}
% \gL_{\rm CE}^{\mu}(f)\geq \gL_{\rm NCE}(f) - \log(M/K) - \sqrt{\var(f(x)\mid y)} -L\cdot\sum_{j=1}^m\sqrt{\var(f_j(x)\mid y)}-\gO\left(M^{-1/2}\right).
% \end{equation}
% Combining the two bounds, we have the joint upper and lower bounds of the downstream risk:
\begin{equation}
\begin{aligned}
&\gL_{\rm NCE}(f)- \sqrt{\var(f(x)\mid y)} -\frac{1}{2}{\var(f(x)\mid y)}-\gO\left(M^{-1/2}\right)\\
\leq\ &\gL_{\rm CE}^{\mu}({f})+\log(M/K)\leq 
\gL_{\rm NCE}(f) + \sqrt{\var(f(x)|y)} + \gO\left(M^{-1/2}\right),
\end{aligned}
\end{equation}
% \begin{equation}
% \gL_{\rm CE}^{\mu}({f}) \leq \gL_{\rm NCE}(f) - \log(M/K) + \sqrt{\var(f(x)|y)} + \gO\left(M^{-1/2}\right),
% \end{equation}
where $\log(M/K)$ is a constant\footnote{$\log(M/K)$ could be absorbed in the loss functions by replacing $\operatorname{sum}$ with $\operatorname{mean}$ in InfoNCE and CE.}, $\Var(f(x)|y)=\E_{p(y)}\left[\E_{p(x|y)}\Vert f(x)-\E_{p(x|y)}f(x)\Vert^2\right]$ denotes the conditional (intra-class) feature variance, and $\gO\left(M^{-1/2}\right)$ denotes the order of the approximation error by using $M$ negative samples.
% {$f_j(x)$ denotes the $j$-th coordinate of $f(x)$}, 
% and 
% $\gG(N)=O\left(R \frac{\mathcal{R}_{\mathcal{S}}(\mathcal{F})}{N}+R^{2} \sqrt{\frac{\log \frac{1}{\delta}}{N}}\right)$ denotes the generalization gap caused by the empirical estimate of $f$.

% Further assuming that the InfoNCE loss is $L$-smooth, \ie $\Vert\nabla f(x_i)-\nabla f(x_j)\Vert\leq L\cdot\Vert x_i-x_j\Vert,\forall\ x_i,x_j\in\sR^d$, the downstream risk can also be lower bounded by the InfoNCE loss
\label{thm:general-generalization-gap}
\end{theorem}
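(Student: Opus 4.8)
The plan is to split each loss into a ``pull-together'' term and a ``log-partition'' term, bound the two pull-together terms against each other, bound the two log-partition terms against each other, and then add. Using $\|f\|\equiv 1$, I would write
\[
\gL_{\rm NCE}(f)=\E_{p(x,x^+)}[-f(x)^\top f(x^+)]+\E_{p(x)}\E_{\{x_i^-\}}\Big[\log\textstyle\sum_{i=1}^M e^{f(x)^\top f(x_i^-)}\Big],
\]
and similarly $\gL^\mu_{\rm CE}(f)=\E_{p(x,y)}[-f(x)^\top\mu_y]+\E_{p(x)}[\log\sum_{k=1}^K e^{f(x)^\top\mu_k}]$. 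The only use of Assumption~\ref{assumption:label-consistency} is the observation that a positive pair $(x,x^+)$ carries a single label $y$, and since $p(x,x^+)$ is symmetric its two marginals agree, so $p(x^+\mid y)=p(x\mid y)$ and $\mu_y=\E_{p(x|y)}f(x)=\E_{p(x^+|y)}f(x^+)$; in particular there is a joint $p(x,x^+,y)$ with the correct marginals.

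\textbf{Step 1 (alignment).} Writing both pull-together terms over $p(x,x^+,y)$, their difference is $\E_{p(x,x^+,y)}[f(x)^\top(f(x^+)-\mu_y)]$. I would bound this in absolute value by $\E\|f(x^+)-\mu_y\|$ (Cauchy--Schwarz and $\|f(x)\|=1$), and then by $\sqrt{\E\|f(x^+)-\mu_y\|^2}=\sqrt{\var(f(x)\mid y)}$ (Jensen, applied first inside each class and then over $y$, using $p(x^+|y)=p(x|y)$). Hence the two pull-together terms differ by at most $\sqrt{\var(f(x)\mid y)}$, the term that appears on both sides of the claimed inequality.

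\textbf{Step 2 (log-partition).} First I would remove the randomness of the negative draw: the summands $e^{f(x)^\top f(x_i^-)}$ lie in $[e^{-1},e]$, so $\frac1M\sum_i e^{f(x)^\top f(x_i^-)}$ concentrates around $\bar Z(x):=\E_{p(x^-)}[e^{f(x)^\top f(x^-)}]\ge e^{-1}$ at rate $\gO(M^{-1/2})$ uniformly in $x$, and since $\log$ is Lipschitz on $[e^{-1},e]$,
\[
\E_{p(x)}\E_{\{x_i^-\}}\Big[\log\textstyle\sum_{i=1}^M e^{f(x)^\top f(x_i^-)}\Big]=\log M+\E_{p(x)}[\log\bar Z(x)]+\gO(M^{-1/2}).
\]
Then I would group negatives by class, $\bar Z(x)=\sum_{k=1}^K p(y{=}k)\,\E_{p(x^-|y=k)}[e^{f(x)^\top f(x^-)}]$, and compare the inner conditional moment generating function to $e^{f(x)^\top\mu_k}$. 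For the upper bound on $\gL^\mu_{\rm CE}$ I need a lower bound on $\bar Z$: Jensen inside each class gives $\E_{p(x^-|k)}[e^{f(x)^\top f(x^-)}]\ge e^{f(x)^\top\mu_k}$, hence $\bar Z(x)\ge\frac1K\sum_k e^{f(x)^\top\mu_k}$ (balanced classes; otherwise the discrepancy is absorbed into the additive constant, cf.\ the footnote), so the log-partition term of $\gL_{\rm NCE}$ is at least $\log(M/K)+\E_{p(x)}[\log\sum_k e^{f(x)^\top\mu_k}]-\gO(M^{-1/2})$. For the lower bound on $\gL^\mu_{\rm CE}$ I instead need an upper bound on the per-class MGF: writing $f(x)^\top f(x^-)=f(x)^\top\mu_k+w$ with $\E_{p(x^-|k)}w=0$, $|w|$ bounded, and $\E_{p(x^-|k)}w^2\le\E_{p(x^-|k)}\|f(x^-)-\mu_k\|^2=:\sigma_k^2$ (Cauchy--Schwarz), a second-order / sub-exponential expansion of $\E_{p(x^-|k)}[e^{w}]$ yields an extra factor $e^{\frac12\sigma_k^2+(\text{h.o.t.})}$, so $\bar Z(x)\le e^{\frac12\var(f(x)\mid y)+\dots}\cdot\frac1K\sum_k e^{f(x)^\top\mu_k}$ and the log-partition term of $\gL_{\rm NCE}$ is at most $\log(M/K)+\E_{p(x)}[\log\sum_k e^{f(x)^\top\mu_k}]+\frac12\var(f(x)\mid y)+\gO(M^{-1/2})$.

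\textbf{Step 3 (assemble) and the hard part.} Adding the Step~1 and Step~2 estimates gives both inequalities: the upper bound collects only the $\sqrt{\var}$ from alignment, while the lower bound collects $\sqrt{\var}$ from alignment plus the extra $\frac12\var$ from the partition comparison, with $\gO(M^{-1/2})$ tracking the negative-sample concentration. I expect the main obstacle to be the last estimate in Step~2 — controlling, with the tight constant, how much the within-class spread of $f$ inflates the partition function; this is exactly where the ``asymptotically closed'' character comes from, since the gap between the two bounds shrinks to zero as $\var(f(x)\mid y)\to0$ and $M\to\infty$. A secondary nuisance is the class-frequency bookkeeping that produces the clean $\log(M/K)$, which is immediate under balanced classes and otherwise handled by replacing the sums with means in the two losses, as in the footnote.
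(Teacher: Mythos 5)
Your decomposition, Step~1 (alignment control via Cauchy--Schwarz plus $\|f\|\equiv 1$ giving $\pm\sqrt{\var(f(x)\mid y)}$), and the ``easy'' direction of the log-partition comparison (Jensen inside each class, $\E_{p(x^-|k)}e^{f(x)^\top f(x^-)}\geq e^{f(x)^\top\mu_k}$, plus the Monte-Carlo $\gO(M^{-1/2})$ control) all match the paper's argument. The divergence, and the genuine gap, is in Step~2's other direction.

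The paper does not bound a per-class moment generating function. Instead it applies a \emph{reverse Jensen inequality for $L$-smooth convex functions} (their Lemma~A.3, quoting a known corollary): for $L$-smooth convex $g$, $\E\,g(z)-g(\E z)\leq L\sum_j\var(z^{(j)})$. It takes $g$ to be $\operatorname{logsumexp}$, verifies $L=\tfrac12$ via a Lipschitz-gradient calculation (exploiting $|f(x)^\top f(x^-)|\leq 1$), sets $z=f(x^-)$ with $\E[z\mid y]=\mu_y$, and reads off the exact compensation term $\tfrac12\sum_j\var(f_j(x^-)\mid y)=\tfrac12\var(f(x)\mid y)$. That is the key lemma missing from your proposal. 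Your substitute --- a sub-exponential expansion $\E\,e^w\leq e^{\tfrac12\sigma_k^2+\text{h.o.t.}}$ applied class-by-class --- has two problems that will not close. First, for a bounded zero-mean $w$ (here $|w|\leq 2$) the sharpest variance-based MGF bound is Bennett-type, $\E\,e^w\leq\exp(\sigma^2(e^b-1-b)/b^2)$ with $b=2$, whose constant $\approx 1.10$ is strictly larger than $\tfrac12$; the Taylor heuristic gives the leading $\tfrac12\sigma^2$ but the higher-order terms are nonnegative and do not vanish, whereas the theorem has exactly $\tfrac12\var$ with no remainder. Second, even granting a per-class factor $e^{c_k}$, the step from $\sum_k p(k)\,e^{f(x)^\top\mu_k+c_k}$ to $e^{\sum_k p(k)c_k}\sum_k p(k)\,e^{f(x)^\top\mu_k}$ is false in general: writing $q_k\propto p(k)e^{f(x)^\top\mu_k}$, the gap is $\log\sum_k q_k e^{c_k}-\sum_k p(k)c_k\geq\sum_k(q_k-p(k))c_k$, which can be positive. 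So averaging the per-class inflations into the single quantity $\var(f(x)\mid y)$ does not follow from your route; you would at best get $\max_k\sigma_k^2$. Applying the reverse Jensen at the level of the whole $\operatorname{logsumexp}$, as the paper does, is precisely what sidesteps both issues: the smoothness constant delivers the clean $\tfrac12$ and the variance appears already summed/averaged in the right way via $\E\|z\|^2-\|\E z\|^2$. You correctly self-flagged this as the hard part; the fix is to replace the MGF expansion with the $L$-smoothness reverse Jensen lemma.
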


% \subsection{The Augmentation-Invariant Exemplar Assumption}
% \subsection{The Latent Class Assumption}
% \subsection{Assumptions}
% Pretraining stage, Downstream Classification

% \begin{remark}
Notably, our generalization bounds above improve over previous ones in the following aspects: 
\begin{enumerate}[1)]
    \item we do not require the conditional independence assumption as in \citet{arora}; 
    \item we directly analyze the widely adopted InfoNCE loss (for contrastive learning) and CE loss (for supervised finetuning), while \citet{arora} are restricted to hinge and logistic objectives that have worse performance in practice \citep{simclr};  
    \item the class collision error terms introduced in \citet{arora} (due to the existence of same-class samples in the negative samples) now disappear in our bounds by adopting the InfoNCE loss, which also {helps understand} why InfoNCE performs better in practice; and 
    \item the bounds in \citet{arora} will become looser with more negative samples, which is contradictory to the common practice \citep{simclr}. While in our bounds, a larger $M$ indeed has a lower approximation error and helps close the generalization gap. 
\end{enumerate}

% Besides, our bounds also have some extra advantages over \cite{arora}. For example, 
{In fact, several recent works have also been devoted to resolve the last ``large-$M$'' problem \citep{ash2021investigating,merad2020contrastive}. Nevertheless, their analysis also requires the conditional independence assumption as in \citet{arora}, while we show this problem can be resolved even without conditional independence. \citet{nozawa2021understanding} also establish bounds for the InfoNCE loss, but their bounds have incompressible class collision terms while ours do not.}
% Besides, our generalization bounds are also more aligned with the empirical results in terms of the class collision term and number of negative samples. Class Collision cause by negative samples.
% \end{remark}

% From the theorem above, we can see that the only nontrivial gap between the upstream and downstream tasks is the maximal intra-class variance term $\max_y\var(f(x)|y)$. In the worst case as shown in Prop. \ref{}, the intra-class variance could be as large as the total variance, leading to a very large generalization gap. 
% \vspace{-0.1in}
Nevertheless, an important message of the theorem above is that Assumption \ref{assumption:label-consistency} alone is still \textbf{insufficient} to guarantee good downstream performance. As there are intra-class variance terms in the upper and lower bounds, when they are large enough, contrastive learning might still have inferior performance as shown in Proposition \ref{prop:wang-couterexample}.
% and the core to close the generalization gap is to obtain small intra-class variance terms ($\var(f(x)|y)$ and $\var(f_j(x)|y)$). 
% Although we can easily eli
Although the variance terms can be easily eliminated with the canonical conditional independence assumption, discussions in Section \ref{sec:introduction} have already demonstrated its impracticality. 
% With assumptions on conditional independence, it is easy to see that the alignment of positive samples will learn make the variance terms smaller and asymptotically close this gap. 
% we still prefer to develop a new understanding of how contrastive learning could control this variance term in practice.
In the next part, we will present a new understanding of how contrastive learning could control this variance term in practice.

\begin{figure}
    \centering
    % \includegraphics{}[Some Figure]
    % \vspace{-0.2in}
    \subfigure[Contrastive learning with an augmentation graph satisfying intra-class connectivity.]{
    \includegraphics[width=.42\textwidth]{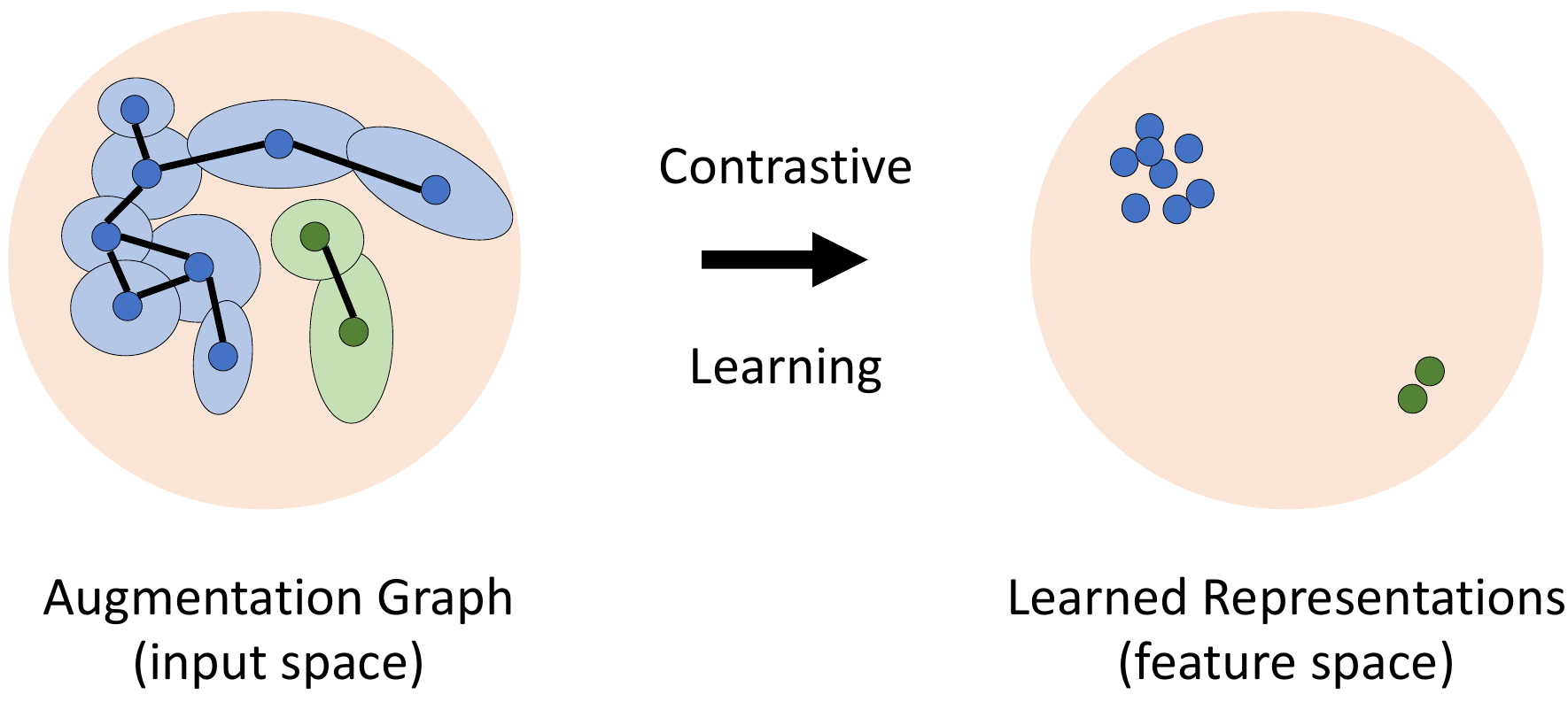}
    \label{fig:augmentation-graph}
    }
    \hfill
    \subfigure[Augmentation  graph  under  increasing  augmentation  strengthes (left to right).]{
    \includegraphics[width=.51\textwidth]{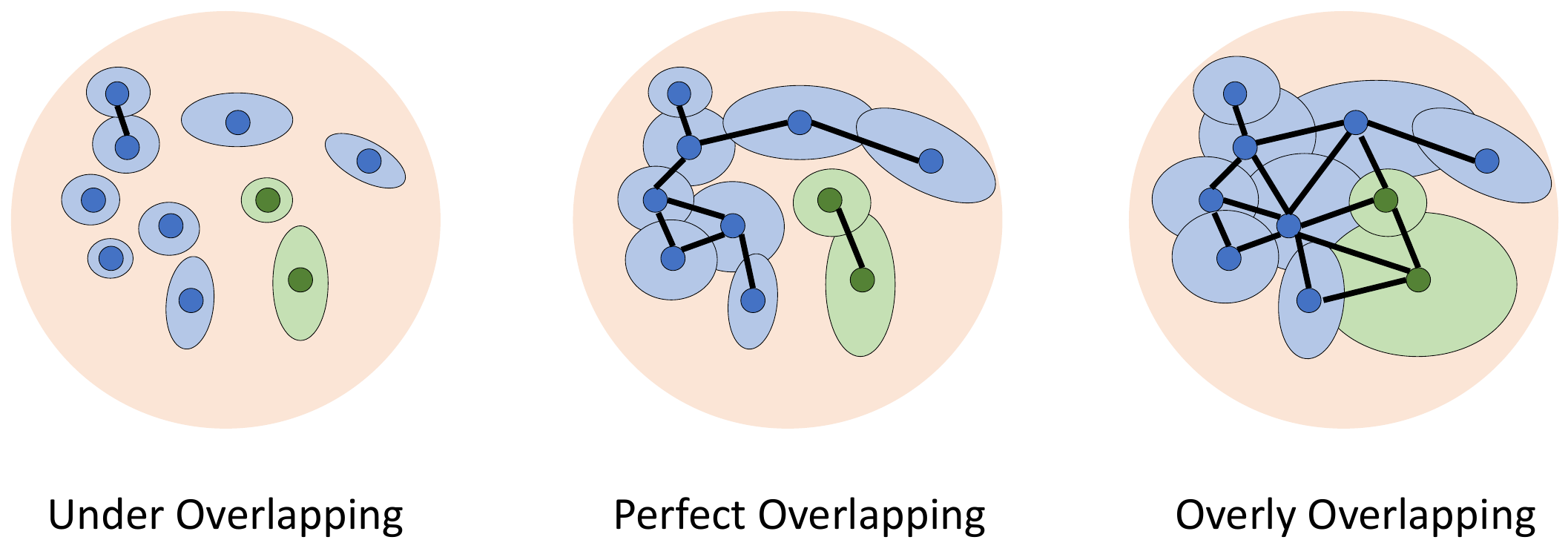}
    \label{fig:aug-strength}
    }
    % \vspace{-0.1in}
    \caption{Illustrative examples of augmentation graphs, where each dot denotes a sample $x\in\gD_u$ and its color denotes its class. The lighter disks denote the support of the positive samples $p(x^+|x)$. We draw a solid edge for each $\gT$-connected pair. 
    % The graph is classwise connected. 
    % Right: contrastive learning with the left augmentation graph will learn to cluster intra-class samples together (alignment) while pushing inter-class samples away (uniformity).
    }
    % \vspace{-0.2in}
\end{figure}

% \subsection{Generalization Guarantees with augmentation overlap} 
% \subsection{A New Theory of Contrastive Learning}
\subsection{Closing the Gap with Intra-class Connectivity}
\label{sec:intra-class-connectivity}
% \subsection{Rethinking the Role of Data Augmentation: chaos as a Ladder} 
% In view of the gap downstream
The theorem above motivates us to study how contrastive learning could effectively control its intra-class variance and learn class-separated features.
% \textbf{augmentation overlap via Data Augmentation.}  
% Then, how contrastive learning could align intra-class samples as in Figure \ref{}?
Here, we propose a new understanding of this clustering ability through a dissection of the augmented views. In particular, we notice that although samples are different from each other, applying aggressive augmentations like that in SimCLR \citep{simclr} can largely make them more alike. For example, in Figure \ref{fig:positive-samples}, two different cars become very similar when they are both cropped to the wheels. Then, with contrastive learning, the two cars will have closer representations as they share a common view of the wheels. In other words, two different intra-class samples could be aligned together if they have overlapped augmented views. If all intra-class samples could be bridged by data augmentations, we can successfully cluster the whole class together. Below, we formalize the intuition above with the language of graphs.

\textbf{Notations.} A graph $\gG$ is represented by a tuple $\gG=(\gV,\gE)$ where $\gV=(v_1,v_2,\dots,v_N)$ is a set of vertices and $\gE\subseteq\gV\times\gV$ is a set of edges. A path is a sequence of edges that joins a sequence of vertices, \eg $v_{i_1}- v_{i_2}-\cdots- v_{i_k}$. We say that two vertices $v$ and $u$ are connected if $\gG$ contains a path from $v$ to $u$.  A graph is said to be connected if every pair of vertices in the graph is connected. Two graphs are said to be disjoint if any pair of inter-graph vertices are not connected.
% A graph $\widetilde\gG=(\widetilde\gV,\widetilde\gE)$ is the subgraph of $\gG$ if $\widetilde\gV\subseteq\gV, \widetilde\gE\subseteq\gE$.

% Formally, we define
To begin with, we define the concept of $\gT$-connectivity of sample pairs, which describes whether two samples could be connected via the augmentation overlap of their augmented views.

\begin{definition}[$\gT$-connectivity] Given a collection of augmentations $\gT=\{t\mid t:\sR^d\to\sR^d\}$, we say that two different images $x_i,x_j\in\sR^d$ are \emph{$\gT$-connected} if they have overlapped views:  $\supp(p(x_i^+|x_i))\bigcap\supp(p(x_j^+|x_j))\neq\varnothing$, or equivalently,  $\exists\ t_i,t_j\in\gT$ such that  $t_i(x_i)=t_j(x_j)$. 
% has overlapped in their support. 
\end{definition}
Then, we can define an augmentation graph of all training samples in terms of their $\gT$-connectivity.
% That is, two vertices (samples)
% Using this relationship between any two samples in the dataset, we can build an augmentation graph that characterizes the connectivity of all training samples.
\begin{definition}[Augmentation Graph]
Given a set of $N$ samples $\gD=\{x_i\}_{i=1}^N$ and an augmentation set $\gT=\{t\mid t:\sR^d\to\sR^d\}$, we can define an augmentation graph $\gG(\gD,\gT)=(\gV,\gE)$ as
\begin{itemize}
    \item we take the $N$ natural samples as the vertices of the graph, \ie $\gV=\{x_i\}_{i=1}^N$;
    \item there exists an edge $e_{ij}$ between two vertices $x_i$ and $x_j$ if they are $\gT$-connected.
\end{itemize}
% we take the $N$ samples as $N$ vertices, \ie $\gV=\{x_i\}$ of  as $N$ vertices in a graph.
\end{definition}

Based on these concepts, we introduce the following assumption that  with a proper choice of data augmentations, all intra-class samples could form a connected graph, as depicted in Figure \ref{fig:augmentation-graph}.
\begin{assumption}[Intra-class Connectivity]
Given a training set $\gD_u$, there exists an appropriate augmentation set $\gT$ such that the  augmentation graph $\gG(\gD_u,\gT)$ is class-wise connected, \ie $\forall\ k\in\{1,\dots,K\}$, the subgraph $\gG_k$ (graph $\gG$ restricted to vertices in class $k$) is connected. 
\label{assumption:intra-class connectivity}
% (which means that there exists a path connecting two arbitrary vertices). 
% support of all augmented samples $\{t(x)|x\sim\gD_u,t\sim\gT\}$ 
\end{assumption}
% [Comparing Intra-class Conneectivity to Conditional Independence]
% With our assumptions on augmentation overlap, 
Comparing to \cite{arora} and \cite{jason} that require (nearly) conditional independence $p(x,x^+|y)=p(x|y)p(x^+|y)$, ours only requires the connectivity of intra-class samples as in Figure \ref{fig:positive-samples}, and does not need them to be conditionally independent. 
% As shown in Proposition \ref{prop:conditional-dependence}, conditional independence is \emph{strictly stronger} than our intra-class connectivity.

% Meanwhile, we also assume that the augmentation graph is class-wise disjoint, which means that data augmentations will not alter the label of input images.
% \begin{assumption}[Inter-class Disjointness]
% Given a training set $\gD_u$, there exists an appropriate augmentation set $\gT$ such that the augmentation graph $\gG(\gD_u,\gT)$ is class-wise disjoint, \ie $\{\gG_k\}_{k=1}^K$ are disjoint, or equivalently, $\forall\ x,x^+ \sim p(x,x^+)$, we have $p(y|x)=p(y|x^+)$.
%  \label{assumption:label-consistency}
% \end{assumption}
% % With these concepts, we start with an assumption on the label consistency between positive samples, that is, any pair of positive samples $(x,x^+)$ should belong to the same class. In other words, the data augmentation will not alter the class information of the input.
% % \begin{assumption}[Inter-class Disjointness]
% %  $\forall\ x,x^+ \sim p(x,x^+)$, we have $p(y|x)=p(y|x^+)$.
% %  \label{assumption:label-consistency}
% % \end{assumption}
% This is a natural assumption that is likely to hold in practice. As shown in Figure \ref{fig:positive-samples}, the widely adopted augmentations in contrastive learning \citep{simclr} like images cropping, color distortion, and horizontal flipping will hardly alter the belonging image classes. 

% Besides, 
To make this analysis technically simpler, we make another assumption that we can align positive samples perfectly by minimizing the InfoNCE loss. In practice, the alignment loss can typically be minimized up to a small error $\varepsilon$, and we have appended a more involved discussion of this weak alignment scenario in Appendix \ref{sec:appendix-weak-alignment}. For now, we focus on the simplified perfect alignment scenario.
% A more realistic (yet more technically involved) version is provided  in Appendix \ref{sec:appendix-weak-alignment}. 
% Besides, we also provide a generalized version of this guarantee under less restrictive setting (weak alignment)
% We Empirical evidence in Appendix \ref{sec:appendix-alignment}). 
% In Appendix \ref{sec:appendix-alignment}, we provide further empirical evidence showing that the alignment term
% we show that this assumption is also likely to hold in practice.
\begin{assumption}[Perfect Alignment]
At the minimizer $f^\star$ of the InfoNCE loss, we can achieve perfect alignment, \ie 
$\forall\ x,x^+\sim p(x,x^+), f^\star(x)=f^\star(x^+)$.
\label{assumption:perfect-alignment}
\end{assumption}
% \begin{remark}
% As shown in Figure \ref{fig:alignment-while-training}, the alignment term indeed will converge to perfect alignment (equals $1$) in practice.
% \end{remark}
% Combining the three assumptions above, we can see that the intra-class variance terms diminish. 
\begin{prop}
Under Assumptions \ref{assumption:intra-class connectivity} \& \ref{assumption:perfect-alignment}, by minimizing the InfoNCE loss we can conclude that the conditional variance terms vanish {at the minimizer $f^\star$}, \ie 
\begin{equation}
 \var(f^\star(x)\mid y)=0.
\end{equation}
\label{prop:variance-vanish}
\end{prop}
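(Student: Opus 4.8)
The plan is to show that perfect alignment forces the feature map $f^\star$ to be constant on each connected component of the augmentation graph, and then invoke intra-class connectivity to conclude it is constant on each class, which makes every intra-class deviation zero. First I would fix a class $k$ and two arbitrary samples $x_i, x_j$ in class $k$. By Assumption \ref{assumption:intra-class connectivity}, the subgraph $\gG_k$ is connected, so there is a path $x_i = v_{i_0} - v_{i_1} - \cdots - v_{i_L} = x_j$ in $\gG_k$. It suffices to prove that $f^\star$ takes the same value on the two endpoints of any single edge.

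Next I would handle a single edge $e_{ab}$ between $\gT$-connected vertices $x_a$ and $x_b$. By the definition of $\gT$-connectivity there exist augmentations $t_a, t_b \in \gT$ with $t_a(x_a) = t_b(x_b) =: x'$, and since $x'$ lies in $\supp(p(x^+ \mid x_a)) \cap \supp(p(x^+ \mid x_b))$, both $(x_a, x')$ and $(x_b, x')$ are valid positive pairs drawn from $p(x, x^+)$. Applying Assumption \ref{assumption:perfect-alignment} to each pair gives $f^\star(x_a) = f^\star(x') = f^\star(x_b)$. Chaining this equality along the path yields $f^\star(x_i) = f^\star(x_j)$; since $x_i, x_j$ were arbitrary in class $k$, the map $f^\star$ is constant on class $k$, hence $f^\star(x) = \E_{p(x\mid y=k)}[f^\star(x)] = \mu_k$ for every $x$ with label $k$. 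Therefore $\|f^\star(x) - \E_{p(x\mid y)}f^\star(x)\|^2 = 0$ almost surely, and taking expectations over $p(y)$ in the definition $\var(f^\star(x)\mid y) = \E_{p(y)}[\E_{p(x\mid y)}\|f^\star(x) - \E_{p(x\mid y)}f^\star(x)\|^2]$ gives $\var(f^\star(x)\mid y) = 0$.

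One technical point I would be careful about is the measure-theoretic content hidden in "$\supp$" and in the support-overlap condition: the argument above is cleanest when the training set $\gD_u$ is finite (as assumed in the setup, with $N$ samples) so that "$f^\star$ constant on a class" is a statement about finitely many points, and the expectations $\E_{p(x\mid y=k)}$ are over the empirical conditional distribution. I expect the main (minor) obstacle to be making the step "$x'$ is in the support of $p(x^+\mid x_a)$, therefore $(x_a, x')$ can be treated as an almost-sure positive pair to which perfect alignment applies" fully rigorous — in the continuous setting one needs the alignment equality to hold on the support and not merely almost everywhere, which is why the paper phrases Assumption \ref{assumption:perfect-alignment} as a pointwise statement over $\supp(p(x,x^+))$; I would simply note that this is exactly what the assumption provides and that the equivalent reformulation "$\exists\, t_i, t_j \in \gT$ with $t_i(x_i) = t_j(x_j)$" in the definition of $\gT$-connectivity sidesteps the issue entirely for finite $\gD_u$.
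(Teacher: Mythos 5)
Your proof is correct and follows essentially the same argument as the paper's: show $\gT$-connected pairs share the same representation via perfect alignment applied to the common augmented view, propagate along a path guaranteed by intra-class connectivity, and conclude the feature is constant within each class so the conditional variance vanishes. Your closing remark about the measure-theoretic subtlety of $\supp$ versus finite $\gD_u$ is a reasonable refinement, but it does not change the route.
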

\vspace{-0.1in}
Intuitively, for samples in each class $k$, if the corresponding subgraph $\gG_k$ is connected, there exists a path connecting every intra-class pairs $(x_i,x_j)$, as shown in Figure \ref{fig:augmentation-graph}. Consequently, aligning the positive pairs will also align all samples on the path, and eventually align $x_i$ and $x_j$. In this way, all intra-class samples can be clustered together and the intra-class variance shrinks to zero (under Assumption \ref{assumption:perfect-alignment}). Besides, because proper data augmentation will not cause inter-class augmentation overlap (Assumption \ref{assumption:label-consistency}), inter-class samples can be well separated with the uniformity term. As a result, we can attain alignment of intra-class samples while maximizing the uniformity of inter-class samples. According to Theorem \ref{thm:general-generalization-gap}, we will have an asymptotically closed generalization gap (with more negative samples $M\to\infty$) for the encoder that minimizes the contrastive loss.
% Formally, we show that in this way, we can close the generalization gap between pretraining and downstream tasks as in 
\begin{theorem}[Guarantees for the Optimal Encoder] 
% Assume that we can achieve perfect alignment and perfect uniformity as in \citet{wang}, based on Assumption \ref{assumption:intra-class connectivity}, we will have (nearly) closed generalization bounds for contrastive learning. Specifically, 
If Assumption \ref{assumption:label-consistency}, \ref{assumption:intra-class connectivity} \& \ref{assumption:perfect-alignment} hold and $f$ is $L$-smooth, then, for the minimizer $f^\star=\argmin\gL_{\rm NCE}(f)$, its classification risk can be upper and lower bounded by its contrastive risk as
\begin{equation}
\begin{aligned}
&\gL_{\rm NCE}(f^\star)-\gO\left(M^{-1/2}\right)\leq&\gL_{\rm CE}^{\mu}({f^\star})+\log(M/K)\leq \gL_{\rm NCE}(f^\star)+ \gO\left(M^{-1/2}\right).
\end{aligned}
\end{equation}
% where ${f}$ is the empirical estimation of $f$ with $N$ samples, and 
% $\gG(N)=O\left(R \frac{\mathcal{R}_{\mathcal{S}}(\mathcal{F})}{N}+R^{2} \sqrt{\frac{\log \frac{1}{\delta}}{N}}\right)$ denotes the generalization gap caused by the empirical estimate of $f$.
\label{thm:closed-generalization-gap}
\end{theorem}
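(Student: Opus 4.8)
The plan is to combine Theorem~\ref{thm:general-generalization-gap} with Proposition~\ref{prop:variance-vanish}. Theorem~\ref{thm:general-generalization-gap} already gives, for \emph{any} $f\in\gF$ under Assumption~\ref{assumption:label-consistency},
\[
\gL_{\rm NCE}(f)- \sqrt{\var(f(x)\mid y)} -\tfrac12\var(f(x)\mid y)-\gO(M^{-1/2})
\ \leq\ \gL_{\rm CE}^{\mu}(f)+\log(M/K)\ \leq\ \gL_{\rm NCE}(f)+\sqrt{\var(f(x)\mid y)}+\gO(M^{-1/2}).
\]
So the only thing left to do is to show that for the specific encoder $f^\star=\argmin\gL_{\rm NCE}(f)$, the conditional variance term $\var(f^\star(x)\mid y)$ is zero, and then both the $\sqrt{\var(\cdot)}$ and $\tfrac12\var(\cdot)$ terms disappear, leaving exactly the claimed two-sided bound with only the $\gO(M^{-1/2})$ slack.

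First I would check that all three hypotheses of Theorem~\ref{thm:general-generalization-gap} and Proposition~\ref{prop:variance-vanish} are in force: Assumption~\ref{assumption:label-consistency} gives the label consistency needed for Theorem~\ref{thm:general-generalization-gap}; Assumptions~\ref{assumption:intra-class connectivity} and~\ref{assumption:perfect-alignment} are exactly the hypotheses of Proposition~\ref{prop:variance-vanish}. Applying Proposition~\ref{prop:variance-vanish} to $f^\star$ yields $\var(f^\star(x)\mid y)=0$. Substituting this into the general bound collapses both variance-dependent terms to $0$, which immediately gives
\[
\gL_{\rm NCE}(f^\star)-\gO(M^{-1/2})\ \leq\ \gL_{\rm CE}^{\mu}(f^\star)+\log(M/K)\ \leq\ \gL_{\rm NCE}(f^\star)+\gO(M^{-1/2}),
\]
which is the statement. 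I should also remark that the $L$-smoothness hypothesis in Theorem~\ref{thm:closed-generalization-gap} is what licenses Assumption~\ref{assumption:perfect-alignment} (or its weak-alignment relaxation) to propagate along the augmentation-graph paths so that Proposition~\ref{prop:variance-vanish} actually applies to the optimum — in the perfect-alignment regime it plays no further role in the final inequality, but I would note where it enters in case one tracks the weak-alignment error $\varepsilon$ instead of exact alignment.

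The main obstacle, such as it is, is not in this deduction — which is essentially a one-line substitution — but in making sure Proposition~\ref{prop:variance-vanish} is genuinely applicable at $f^\star$: one must argue that the minimizer of the \emph{population} InfoNCE loss does in fact attain perfect alignment (the content of Assumption~\ref{assumption:perfect-alignment}), and that the graph-connectedness argument sketched after Proposition~\ref{prop:variance-vanish} — aligning along a path $x_i - \cdots - x_j$ forces $f^\star(x_i)=f^\star(x_j)$ for all intra-class pairs — goes through rigorously, i.e. that perfect alignment on every $\gT$-connected edge implies constancy of $f^\star$ on each connected component of the augmentation graph $\gG_k$. Since Proposition~\ref{prop:variance-vanish} is already stated and (per the excerpt) proved in the appendix, I would simply invoke it; the remaining argument is then purely a matter of plugging $\var(f^\star(x)\mid y)=0$ into Theorem~\ref{thm:general-generalization-gap}.
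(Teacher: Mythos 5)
Your proposal is correct and follows exactly the paper's own one-line proof: apply Proposition~\ref{prop:variance-vanish} (under Assumptions~\ref{assumption:intra-class connectivity} and \ref{assumption:perfect-alignment}) to conclude $\var(f^\star(x)\mid y)=0$, then substitute into Theorem~\ref{thm:general-generalization-gap} so both variance terms drop out. One minor caveat: your aside that the $L$-smoothness of $f$ is ``what licenses'' Assumption~\ref{assumption:perfect-alignment} to propagate along the augmentation graph is speculation not supported by the paper — Assumption~\ref{assumption:perfect-alignment} is simply posited, Proposition~\ref{prop:variance-vanish} needs only it plus connectivity, and the $L$-smoothness actually used in the proofs is the $1/2$-smoothness of $\operatorname{logsumexp}$ inside the lower-bound argument of Theorem~\ref{thm:general-generalization-gap}; but since you correctly note this plays no role in the final substitution, it does not affect the validity of your argument.
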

% \vspace{-0.2in}
We note that different to previous bounds that hold for any $f\in\gF$ as in Theorem \ref{thm:general-generalization-gap}, our results here only stand for the minimizer of the contrastive loss $f^\star$. This indicates that the InfoNCE loss alone cannot simply guarantee good downstream performance, and the learning dynamics matters for the contrastive learning to learn useful features. 
% As discussed above, InfoNCE is more like a surrogate loss here.

% that a small contrastive loss alone cannot guarantee good downstream performance. Instead, 
% there exists a certain gap between the contrastive and classification loss for general encoders such that not any encoder can have a closed genearalization gap the contrastive learning risk.

% In this way, we show that we can successfully close the generalization with a minimized intra-class variance, which is achieved through the augmentation overlap assumption. 

% \subsection{Discussions}
\subsection{\CC{Rethinking the Role of Data Augmentations}}

% \textbf{Invariance is not the (only) goal.} 
Our analysis above suggests a new understanding of the role of data augmentations in contrastive learning. Conventionally, the success of contrastive learning is usually attributed to learning {invariance} \wrt various data augmentations by matching positive examples. However, as shown in Proposition \ref{prop:wang-couterexample}, matching positive pairs alone is theoretically inadequate to learn useful features. Indeed, assuming that an ideal encoder that possesses invariance \emph{a priori} does exist, like invariance to translation (CNNs), rotation \citep{rotation}, and scaling \citep{scale}, do we obtain class-discriminative features simply by random initialization? Still NO, since these low-level properties are independent of high-level class information that we want to learn. Thus, 
% even though these priors are desirable, enforcing the priors \emph{alone} is not sufficient to \emph{learn} useful features for downstream classification. 
% In turn, 
the reason why contrastive learning works {cannot simply be attributed to the invariance learning principle}.

% Instead, our analysis above indicates that the role of data augmentation is not simply learning invariance, but also inducing intra-class augmentation overlap. 
% Only with intra-class sample confusion, can the alignment term of the InfoNCE loss successfully bridge the features of intra-class samples together. 
We instead believe that the role of data augmentation is to create a certain degree of ``chaos'' between different intra-class samples (Figure \ref{fig:positive-samples}) such that they become more alike (or formally, $\gT$-connected). In this way, the chaos serves as a ``ladder'' for bridging intra-class samples together when labels are absent, and the mission of the contrastive loss is to ``climb this ladder'', that is, aligning intra-class samples by aligning the overlapped positive samples, as shown in Figure \ref{fig:augmentation-graph}. Therefore, from our perspective, instance discrimination by contrastive learning is actually \textbf{a surrogate} for the classification task, and the surrogate {can} complete its misson when the ladder of chaos is complete (or formally, when intra-class connectivity holds).
% , and finally, lean class-separated features. 
% From this perspective, we can see that data a 
% Instead, without this ladder, the alignment term alone cannot learning useful features, as shown in Figure \ref{fig:wang-couterexample} (Proposition \ref{prop:wang-couterexample}).

% the intra-class samples (vertices) in the graph $\gG$ form a 
% Ideally, the support of all samples from the same class could form a simply connected region through the augmented views. Based on this observation, we propose the following assumption.

% We In general, augmentations like color distortion can increase the similarity between intra-class samples such that there 
% of augmentation overlap of augmentated images. 
% augmentation causes intra-class overlapped
% empirical evidence
% theoretical property

% \begin{figure}
%     \centering
%     % \includegraphics{}[Some Figure]
%     \includegraphics[width=.8\textwidth]{figures/aug_strength.pdf}
%     \caption{Augmentations graph with increasing augmentation strength (left to right). Each dot denotes a sample $x\in\gD_u$ and its color denotes its class. The lighter disks denote the support of the positive samples $p(x^+|x)$. We draw a solid edge for each $\gT$-connected pair. }
%     \label{fig:aug-strength}
% \end{figure}

% \section{On the Influence of Augmentation Strength}
\section{\CC{Quantifying} the Influence of Augmentation Strength}

We have shown that with appropriate augmentations, we can derive guarantees on downstream performance. However, in practice, as illustrated in Figure \ref{fig:aug-strength}, there could be cases where augmentations are either too weak (intra-class features cannot be clustered together as in Figure \ref{fig:wang-couterexample}) or too strong (inter-class features will also collapse to the same point) and lead to sub-optimal results.
% the two assumptions (label consistency and intra-class connectivity) fail to hold and lead to imperfect features. 
In this section, we further provide a quantitative analysis of how different strength of data augmentation will affect the final performance, both theoretically and empirically.
%  by 1) aligning intra-class features via augmentation overlap and 2) separating inter-class samples with disconnected supports
 
\subsection{Characterization on Random Augmentation Graph}
\label{sec:random-augmentation}

\begin{figure}
    \centering
    \includegraphics[width=\textwidth]{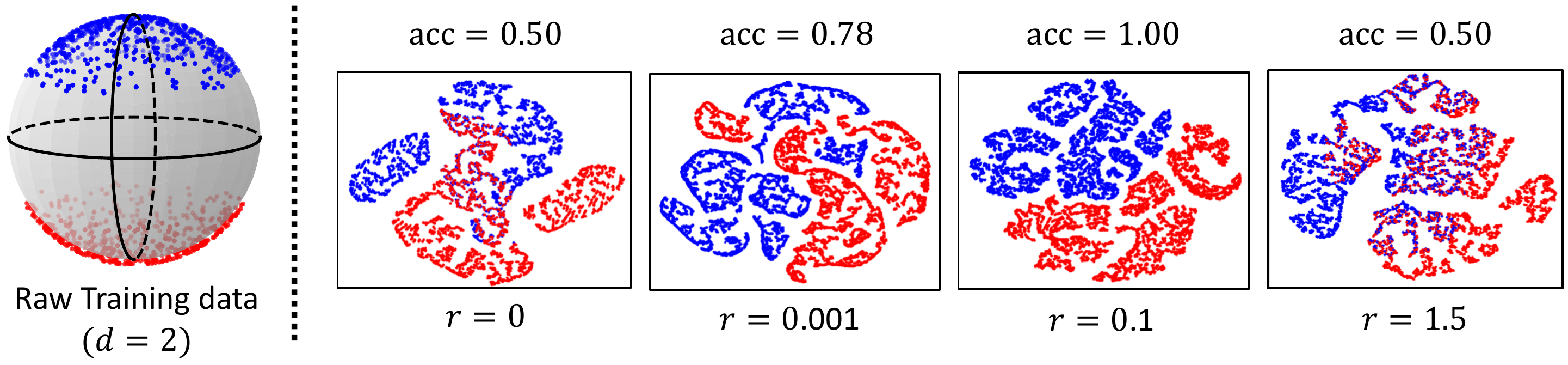}
    \caption{t-SNE visualization of features learned with different augmentation strength $r$ on the random augmentation graph experiment. Each dot denotes a sample and its color denotes its class.}
    \label{fig:random-tsne}
\end{figure}

% We assume the following data generation process of positive samples where $x^+$ are drawn based on the anchor sample $x$ instead of being independently drawn from $p(x|y)$:
% \begin{equation}
% y\to x\to x^+ \quad\Longleftrightarrow\quad p(x,x^+,y)=p(y)p(x|y)p(x^+|x),
% \end{equation}

In practice, there are various data augmentation types that are hard to be described precisely. For the ease of analysis, we consider a simple case where for each class $k$, there are  $N$ samples uniformly distributed around the cluster center $c_k$ on a hypersphere $\sS^d$. We then augment each sample $x_i$ with random samples in a hyper-disk of radius $r$ on the hypersphere.

In Appendix \ref{sec:theoretical-random-graph}, we provide theoretical analysis on how different augmentation strength (measured by $r$) will affect the connectivity of the augmentation as a function of the number of samples $N$, the position of the cluster centers $c_k$ and input dimensions $d$. {In particular, the minimal $r$ for the graph to be connected decreases as $N$ increases, so large-scale datasets can bring better connectivity. Meanwhile, the required $r$ also increases as $d$ increases, so we need more samples or stronger augmentations for large-size inputs. 
% Alternately, we could select augmentations that could create overlapped in the low-dimensional manifold to avoid the curse of dimensionality.
}
Here, we show our simulation results by applying contrastive learning to the problem above. From Figure \ref{fig:random-tsne}, we can see that when $r=0$ (no augmentation), the features are mixed together and hardly (linearly) separable, which corresponds to the under-overlap case in Figure \ref{fig:aug-strength}. As we increase $r$ from $0$ to $0.1$, the features become more and more discriminative. And when $r$ is too large {($r=1.5$)}, the inter-class features become mixed and inseparable again (over-overlap). {In Appendix \ref{sec:visualization-augmentation-graph}, we  provide visualization results of the augmentation graphs, which also align well with our analysis.}
Overall, our theoretical and {empirical} discussions verify our theory that intra-class augmentation overlap with a proper amount of data augmentation is crucial for contrastive learning to work well.

% % relative variance
% \begin{figure}[t]
%     \centering
%     \subfigure[cifar10 trained with SimCLR]{
%     % \label{Fig.fig.1}
%     \includegraphics[width=0.3\textwidth]{}}
%     \subfigure[cifar10 trained with BYOL]{
%     % \label{Fig.fig.1}
%     \includegraphics[width=0.3\textwidth]{}}
%     \subfigure[Training Process Of SimCLR]{
%     % \label{Fig.fig.1}
%     % \includegraphics[width=0.3\textwidth]{gaussian_graph/simclr -cifar100.png}}
%     % \subfigure[cifar100 trained with BYOL]{
%     % % \label{Fig.fig.1}
%     % \includegraphics[width=0.3\textwidth]{gaussian_graph/BYOL -cifar100.png}}
%     % \subfigure[SimCLR training process with optimal augmentation strength]{
%     % % \label{Fig.fig.1}
%     \includegraphics[width=0.3\textwidth]{gaussian_graph/training process.png}}
%     \caption{the relation between intra/inter distance and classification accuracy}
%     \label{fig:my_label}
% \end{figure}

\begin{figure}[t]
    \centering
    \subfigure[ACR \vs aug-strength r.]{
    \label{fig:ACR-augment-strength}
    \includegraphics[width=0.3\textwidth]{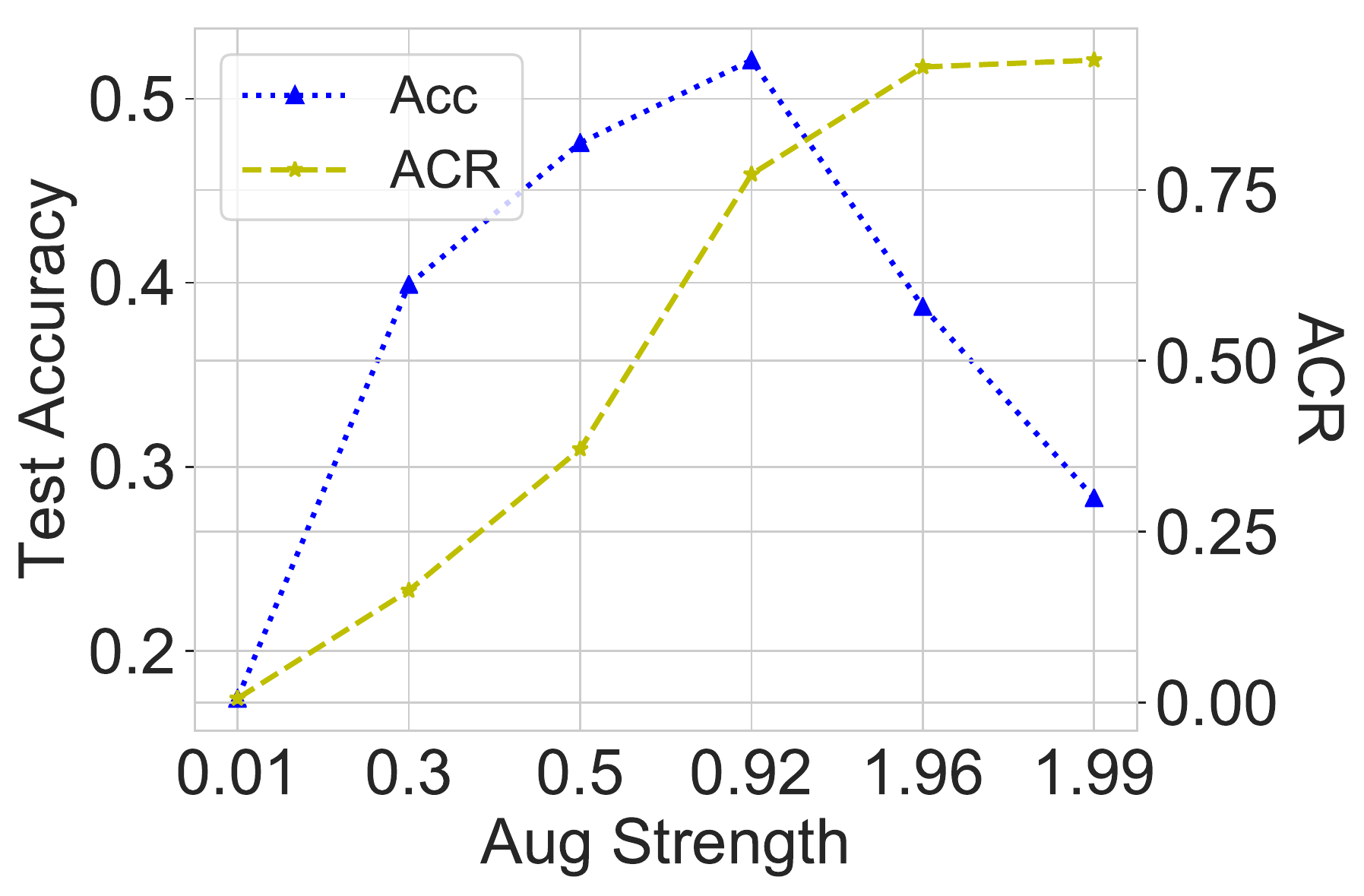}}
    \subfigure[ACR while training (r=0.01).]{
    \label{fig:ACR-r-0-01}
    \includegraphics[width=0.3\textwidth]{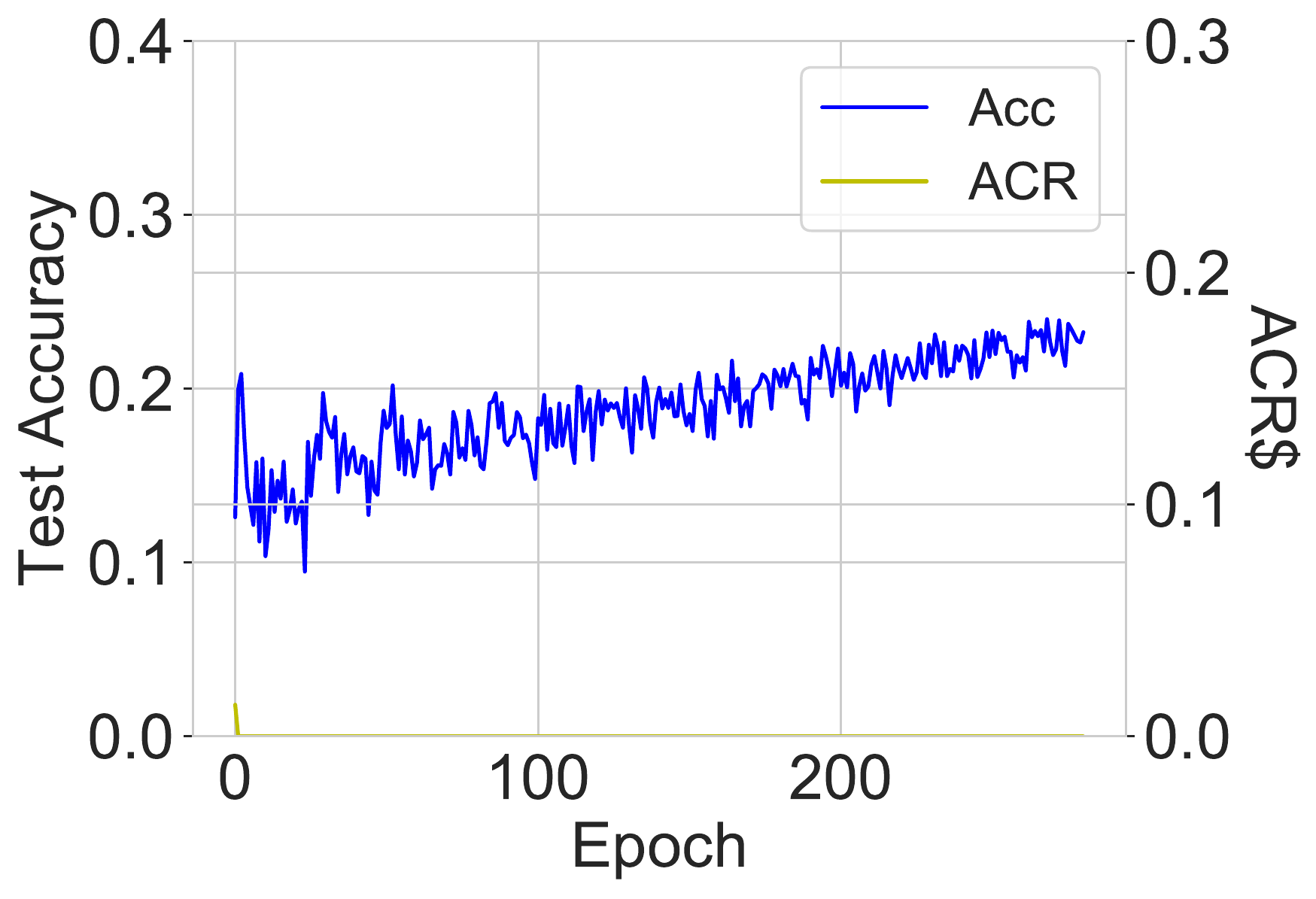}}
    \subfigure[ACR while training (r=0.92).]{
    \label{fig:ACR-r-0-92}
    % \includegraphics[width=0.3\textwidth]{gaussian_graph/simclr -cifar100.png}}
    % \subfigure[cifar100 trained with BYOL]{
    % % \label{Fig.fig.1}
    % \includegraphics[width=0.3\textwidth]{gaussian_graph/BYOL -cifar100.png}}
    % \subfigure[SimCLR training process with optimal augmentation strength]{
    % % \label{Fig.fig.1}
    \includegraphics[width=0.3\textwidth]{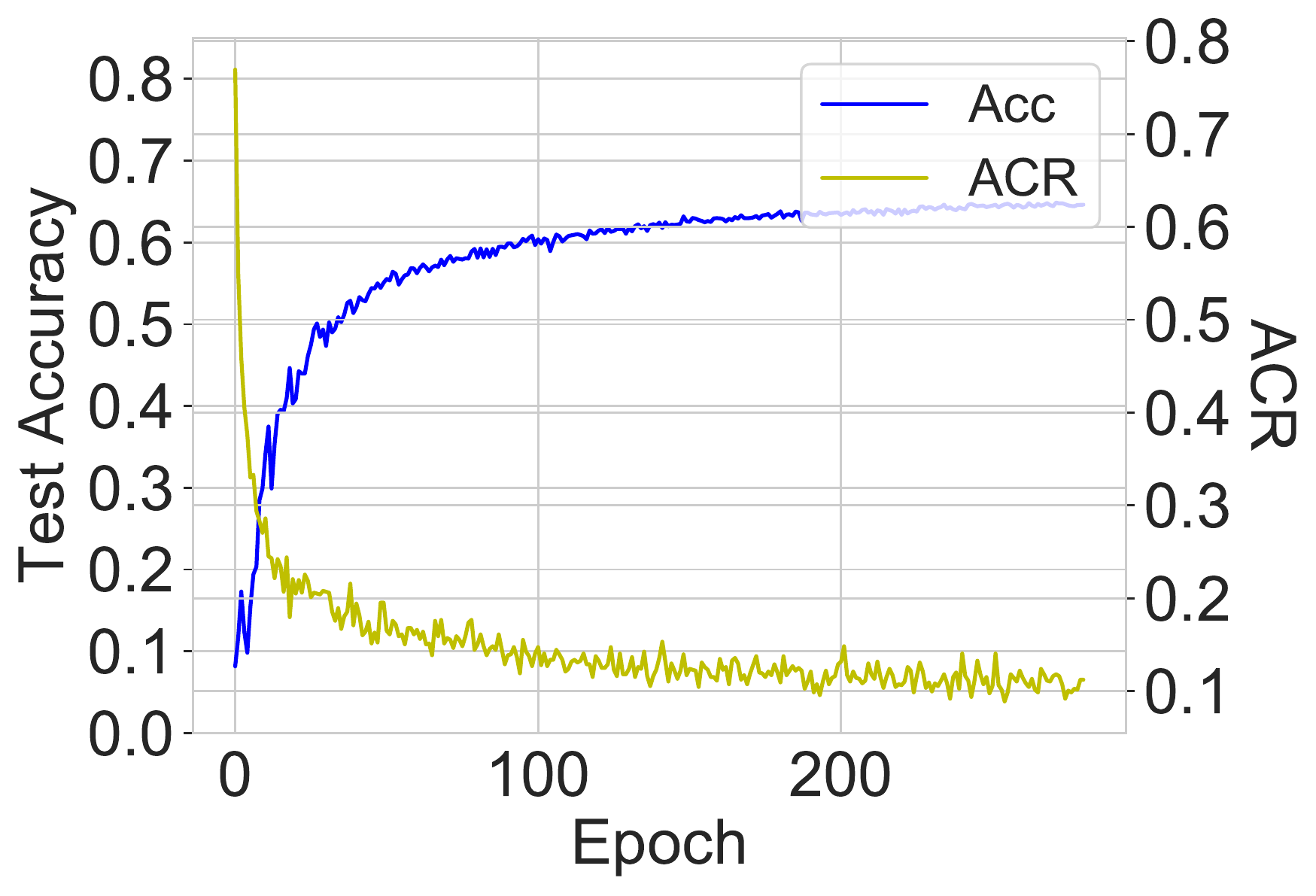}}
    \caption{(a) Average Confusion Rate (ACR) and downstream accuracy \vs different augmentation strength (before training). (b,c): ACR and downstream accuracy while training. }
    % \vspace{-0.15in}
    \label{fig:ACR}
\end{figure}
% \begin{figure}[t]
%     \centering
%     \subfigure[mixup changes wisth augmentation]{
%     % \label{Fig.fig.1}
%     \includegraphics[width=0.3\textwidth]{gaussian_graph/simclr-mixup3l.png}}
%     \subfigure[mixup ratio change of simclr during trainig process(aug strength =0.01)]{
%     % \label{Fig.fig.1}
%     \includegraphics[width=0.3\textwidth]{gaussian_graph/training process-in0.01.png}}
%     \subfigure[mixup change of simclr during trainig process(aug strength=1.96)]{
%     % \label{Fig.fig.1}
%     % \includegraphics[width=0.3\textwidth]{gaussian_graph/simclr -cifar100.png}}
%     % \subfigure[cifar100 trained with BYOL]{
%     % % \label{Fig.fig.1}
%     % \includegraphics[width=0.3\textwidth]{gaussian_graph/BYOL -cifar100.png}}
%     % \subfigure[SimCLR training process with optimal augmentation strength]{
%     % % \label{Fig.fig.1}
%     \includegraphics[width=0.3\textwidth]{gaussian_graph/training process-in1.96.png}}
%     \caption{how mixup changes performance}
%     \label{fig:my_label}
% \end{figure}

\subsection{New Surrogate Metrics for Augmentation Overlap}
\label{sec:surrogate-metrics}
% \subsection{Average Relative Confusion (ARC): An Unsupervised Metric for Evaluating Contrastive Learning}

% In the left plot of Figure \ref{fig:random-tsne}, we illustrate the training samples
% well as the desirable augmentation strength $r^\star$  graph and intra-class connectivity as in Figure \ref{fig:aug-strength}, and analyze its dependence on 
% \textbf{Average Confusion Ratio (ACR).} 
From our theory and analysis above, we see that the augmentation overlap between intra-class samples indeed matters from contrastive learning to generalize better. Inspired by this, we propose the Confusion Ratio metric as a measure of the degree of augmentation overlap. Specifically, for an unlabeled dataset $\gD_u$ with $N$ samples, we randomly augment each raw sample $x_i\in\gD_u$ for $C$ times, and get an augmented set $\widetilde{\gD}_u=\{x_{ij},i\in[N],j\in[C]\}$. Then, for each $x_{ip}\in\widetilde\gD_u$ that is an augmented view of $x_i\in\gD_u$, denoting its $k$-nearest neighbors in $\widetilde\gD_u$ in the feature space of $f$ as $\gN_k(x_{ip},f)$ and other augmented views from the same image as $\gC(x_{ip})=\{x_{ij},j\neq p\}$, we can define its Confusion Ratio (CR) as the ratio of augmented views from \textbf{different} raw samples in its $k$-nearest neighbors, 
\begin{equation}
\operatorname{CR}(x_{ij},f)=\frac{\#[\gN_k(x_{ip},f)\setminus \gC(x_{ip})]}{\# \gN_k(x_{ip},f)}\in[0,1].
\end{equation}
We also define its average as Average Confusion Ratio (ACR):
\begin{equation}
\operatorname{ACR}(f)=\E_{x_{ij}\sim\widetilde\gD_u}\operatorname{CR}(x_{ij,f}).   
\end{equation}
When augmentation overlap happens, the nearest neighbors could be augmented views from a different sample, leading to a higher ACR. Thus, ACR measures the degree of augmentation overlap, and a higher ACR indicates a higher degree of augmentation overlap. Here we take $k=1$ by default.

{Here, to measure the augmentation strength in real-world datasets, following the common practice \citep{simclr}, we adopt the $\rm RandomResizedCrop$ operator with scale range $[a,b]$ for data augmentation, and we define its strength of augmentation as $r = (1-b) + (1-a)$ (a comparison with other kinds of augmentations, \eg color jittering, can be found in Appendix \ref{sec:evaluation-augmentation}).}
As shown in Figure \ref{fig:ACR-augment-strength}, ACR (augmentation overlap) indeed increases with the strength of data augmentations, and only a moderate ACR achieves the best accuracy, which is consistent with our theory discussed above. Besides, we also plot the change of ACR along the training process in Figure \ref{fig:ACR-r-0-01} \& \ref{fig:ACR-r-0-92}. We can notice that for weak augmentations, the initial ACR is low, and it rapidly decreases to zero and seldom changes while training, which leads to poor test accuracy. Instead, with proper augmentations, the initial ACR is higher, and it gradually decreases to zero and obtains good accuracy. This is also consistent with our theory that we need a certain amount of augmentation overlap for contrastive learning to work well. At the beginning, this will lead to a higher ACR, but as training continues, 
better alignment (lower ACR) will help bring up the test accuracy.

% for contrastive learning to work, 
% we need the alignment of positive to be harder at the beginning, and finally 
% will gradually align intra-class samples 
% it increase

\textbf{Average Relative Confusion (ARC).} In the discussion above, we notice that ACR itself does not indicate the test accuracy, but the relative change of ACR before and after training can be used as such an indicator. A large change of ACR means a large change of augmentation overlap, which indicates that the contrastive loss can actually cluster intra-class samples together through overlapped views. Based on this observation, we propose Average Relative Confusion (ARC) as
\begin{equation}
\operatorname{ARC}=\frac{1-\operatorname{ACR}(f_{\rm final})}{1-\operatorname{ACR}(f_{\rm init})},
\end{equation}
a ratio calculated with the initial ACR of the initialized model $f_{\rm init}$ and the final ACR of the pretrained model $f_{\rm final}$. 
% It is easily seen that large initial ACR (high degree of  augmentation overlap at beginning) and small final ACR (good alignment at last) indicate a larger ARC score. 
% This 
A higher ARC indicates that the contrastive learning process faces a hard task (augmentation overlap) at the beginning (high initial ACR), while successfully clustering intra-class samples with good alignment of positive samples at the end (lo final ACR). 
Therefore, a higher ARC score should correspond to higher downstream accuracy.

\begin{figure}[t]
    \centering
    \subfigure[CIFAR-10 (SimCLR)]{
    % % \label{Fig.fig.1}
    \includegraphics[width=0.235\textwidth]{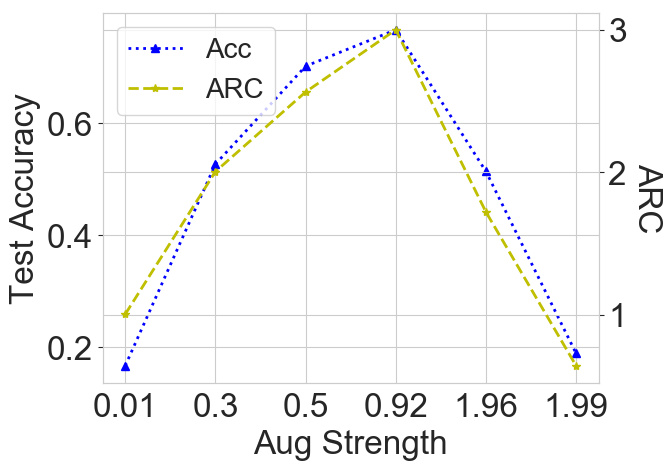}}
    \subfigure[CIFAR-10 (BYOL)]{
    % % \label{Fig.fig.1}
    \includegraphics[width=0.235\textwidth]{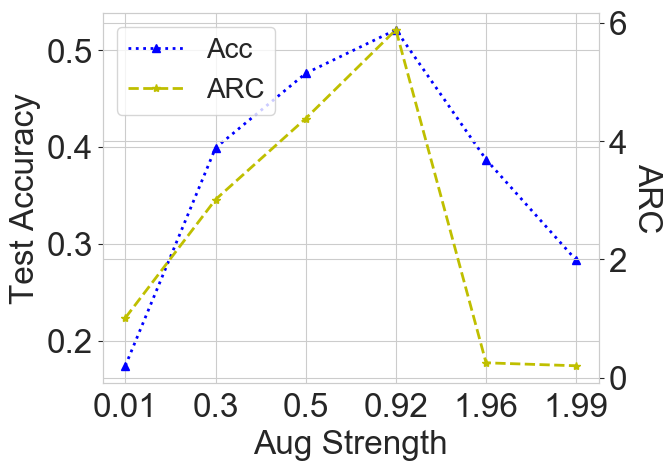}}
    \subfigure[CIFAR-100 (SimCLR)]{
    % % \label{Fig.fig.1}
    \includegraphics[width=0.235\textwidth]{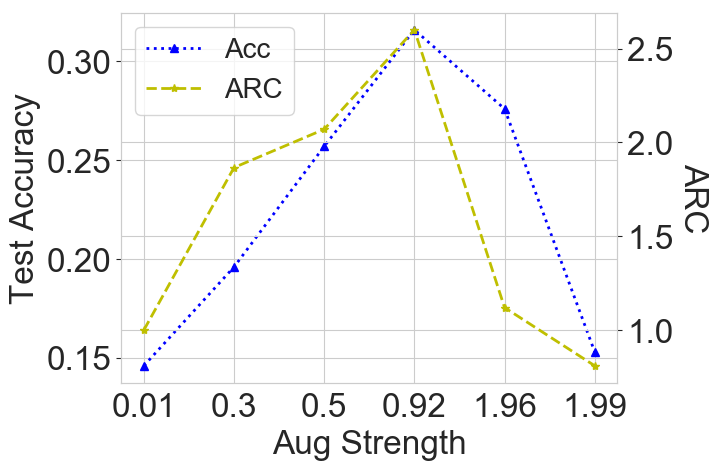}}
    \subfigure[STL-10 (SimCLR)]{
    % % \label{Fig.fig.1}
    \includegraphics[width=0.235\textwidth]{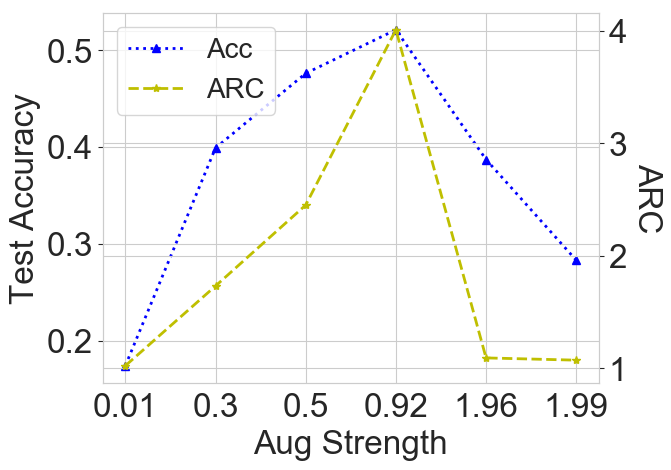}}
    % \includegraphics[width=0.3\textwidth]{gaussian_graph/simclr -cifar100.png}}
    % \subfigure[cifar100 trained with BYOL]{
    % % \label{Fig.fig.1}
    % \includegraphics[width=0.3\textwidth]{gaussian_graph/BYOL -cifar100.png}}
    % \subfigure[SimCLR training process with optimal augmentation strength]{
    % % \label{Fig.fig.1}
    % \vspace{-0.1in}
    \caption{Average Relative Confusion (ARC) and downstream accuracy \vs different augmentation strength on different datasets (CIFAR-10, CIFAR-100, and STL-10) with different contrastive learning methods: SimCLR \citep{simclr} and BYOL \citep{BYOL}.}
    \label{fig:ARC}
    \vspace{-0.1in}
\end{figure}

\begin{figure}[t]
    \centering
    \subfigure[$k=10$ ]{
    % % \label{Fig.fig.1}
    \includegraphics[width=0.3\textwidth]{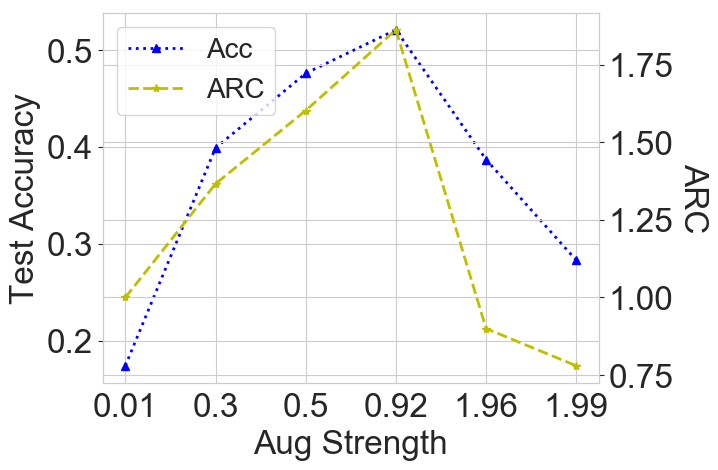}}
    \subfigure[$k=20$ ]{
    % % \label{Fig.fig.1}
    \includegraphics[width=0.3\textwidth]{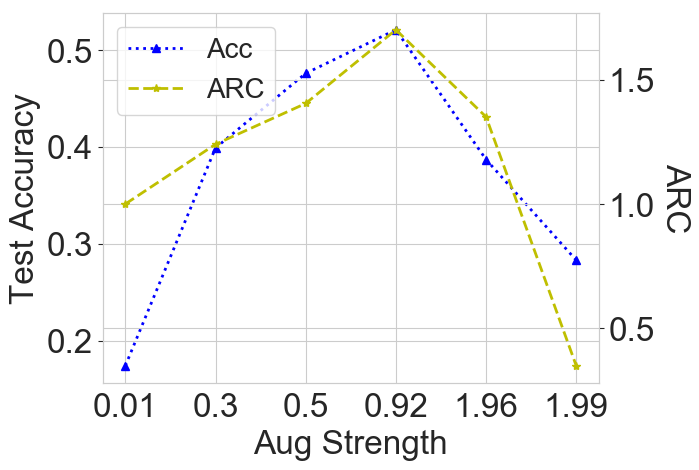}}
    \subfigure[$k=100$]{
    % % \label{Fig.fig.1}
    \includegraphics[width=0.3\textwidth]{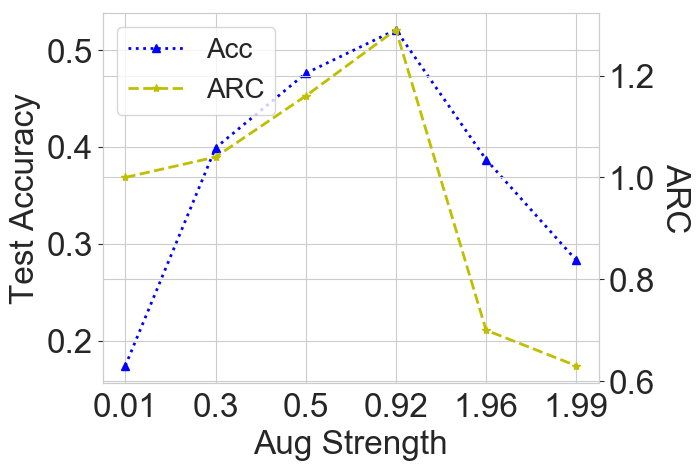}}
    % \includegraphics[width=0.3\textwidth]{gaussian_graph/simclr -cifar100.png}}
    % \subfigure[cifar100 trained with BYOL]{
    % % \label{Fig.fig.1}
    % \includegraphics[width=0.3\textwidth]{gaussian_graph/BYOL -cifar100.png}}
    % \subfigure[SimCLR training process with optimal augmentation strength]{
    % % \label{Fig.fig.1}
    % \vspace{-0.1in}
    \caption{Average Relative Confusion (ARC) and downstream accuracy \vs different augmentation strength on CIFAR-10 (SimCLR) with different number of nearest neighbors $k$.}
    \label{fig:ARC-k}
    \vspace{-0.15in}
\end{figure}

% In order to formalize the strength of augmentation, we take the scale of random\_resize\_crop as our criterion, i.e. the range of size of the origin size cropped. With the scale , we define
As shown in Figure \ref{fig:ARC} \& \ref{fig:ARC-k}, as augmentations become stronger, ARC scores indeed align well with the change of downstream accuracy across 1) different datasets, 2) different contrastive methods, and 3) different choices of $k$. This justifies our understanding of contrastive learning through augmentation overlap. 
Meanwhile, as the calculation of ARC only involves unsupervised data, it could serve as a good surrogate metric for evaluating contrastive learning without using labeled data. Compared to previous evaluation methods like linear classification (Eq.~\ref{eq:linear-evaluation}), our ARC metric is more preferable as 1) it is theoretically motivated; 2) it does not need labeled data; 3) it does not need to learn additional modules like linear classifiers or rotation tasks \citep{selfaugment}. More experimental details can be found in Appendix \ref{sec:additional-experiments}.

% \text{ where } C^{(k)}_{ij}=\{x_{ip}|x_{ij} \text{ is in the } k\text{-nearest neigbors of }x_{ij},p=1,\dots,C\}.
% \end{equation}

\section{Conclusion}
In this paper, we have proposed a new understanding of contrastive learning through a revisiting of the role of data augmentations. In particular, we notice the aggressive data augmentation applied in contrastive learning can significantly increase the augmentation overlap between intra-class samples, and as a result, by aligning positive samples, we can also cluster inter-class samples together. Based on this insight, we develop a new augmentation overlap theory that could guarantee good downstream performance without relying on conditional independence and obtain asymptotically closed gaps. With this perspective, we also characterize how different augmentation strength affects downstream performance with both random graphs and real-world datasets. Last but not least, we also develop a new surrogate metric for evaluating contrastive learning without labels and show that it aligns well with downstream performance. Overall, we believe that we pave a new way for understanding contrastive learning with insights on the designing of contrastive methods and evaluation metrics.

\section*{Acknowledgement}
Yisen Wang is partially supported by the National Natural Science Foundation of China under Grant 62006153, Project 2020BD006 supported by PKU-Baidu Fund, and Huawei Technologies Inc.
Jiansheng Yang is supported by the National Science Foundation of China under Grant No. 11961141007.
Zhouchen Lin is supported by the NSF China (No. 61731018), NSFC Tianyuan Fund for Mathematics (No. 12026606), Project 2020BD006 supported by PKU-Baidu Fund, and Qualcomm.

\bibliographystyle{iclr2022_conference}
\bibliography{main}

\appendix

\clearpage

{
% \color{blue}

% \section{More Detailed Discussions}
% \label{sec:appendix-discussions}

% \subsection{Detailed Comparison with Prior Work}
% \label{sec:additional-related-work}
% In Section \ref{sec:related-work}, we only broadly review a few 
% closely related works on the theoretical understanding of contrastive learning. Here, we provide a detailed examine of more related work.
% % include a review of additional related work and provide a clear comparison with our work.
% % Here, we provide an in-depth comparison between several related works.

% \textbf{Dissolving the large-$M$ issue.}

% \textbf{Augmentation-graph-based analysis.} We differ to \cite{haochen2021provable} mainly in three aspects:
% \begin{itemize}
%     \item 
% \end{itemize}
% Intra-class connectivity (Assumption \ref{assumption:intra-class connectivity}) implies that when dimension $k \geq 2r$ as it's assumed in Thm3.7 in \cite{haochen2021provable}, there exists at least one connected component that will be break. It's reasonably to regard that the Dirichlet conductance $\rho_{\lfloor k/2 \rfloor}$ will be a high value. But there also exists bad cases. For example, if the two separated components only have an edge between them,  the Dirichlet conductance will be an extremely small value. So a high value of the Dirichlet conductance needs more edges than minimal connected components or a larger k. 

% \section{Additional Empirical Results}
\section{Omitted Proofs}
\label{sec:ommited-proofs}
\subsection{Proof of Proposition \ref{prop:wang-couterexample}}
\begin{prop}[Class-uniform features also minimize the InfoNCE loss]
For $N$ training examples of $K$ classes, consider the case when features $\{f(x_i)\}_{i=1}^N$ are randomly distributed in $\sS^{m-1}$ with maximal uniformity while also satisfying $\forall x_i,x_i^+\sim p(x,x^+), f(x_i)=f(x^+_i)$. Because we have perfect alignment and perfect uniformity, the InfoNCE loss achieves its minimum. However, the downstream classification accuracy is at most $1/K+\varepsilon$ and $\varepsilon$ is nearly zero when $N$ is large enough.
% For $N$ training examples of $K$ classes, consider the case when features $\{f(x_i)\}_{i=1}^N$ are randomly distributed in $\sS^{m-1}$ with maximal uniformity {under perfect alignment, \ie $\forall x_i,x_i^+\sim p(x,x^+), f(x_i)=f(x^+_i)$, which has nearly optimal InfoNCE loss.
% Because we have perfect alignment and perfect uniformity, the InfoNCE loss achieves its minimum. Even so, there could still be a large gap (\eg Figure \ref{fig:wang-couterexample}) where} the downstream classification accuracy is at most $1/K+\varepsilon$ and $\varepsilon$ is nearly zero when $N$ is large enough.
% \label{prop:wang-couterexample}
\end{prop}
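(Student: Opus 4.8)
The plan is to split the statement into two parts: (i) the described feature configuration is a global minimizer of $\gL_{\rm NCE}$, and (ii) every linear head on top of it has downstream accuracy at most $1/K+\varepsilon$. For (i) I would use the alignment--uniformity decomposition obtained by moving $\log$ through the fraction in \eqref{eqn:infonce-frac},
\[
\gL_{\rm NCE}(f)=-\E_{p(x,x^+)}\big[f(x)^\top f(x^+)\big]+\E_{p(x)}\E_{\{p(x_i^-)\}}\Big[\log\textstyle\sum_{i=1}^M\exp\big(f(x)^\top f(x_i^-)\big)\Big],
\]
and minimize the two terms in turn: since $f$ maps into $\sS^{m-1}$ we have $f(x)^\top f(x^+)\le 1$ with equality iff $f(x)=f(x^+)$, so perfect alignment minimizes the first term; and using $\|u-v\|^2=2-2u^\top v$ on the sphere the second term is, up to additive constants, the Gaussian-kernel uniformity energy whose minimizer \citet{wang} identify as the uniform feature distribution --- hence, in the discrete analogue, a maximally uniform placement of the $N$ features. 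For large $N$ the two conditions are jointly satisfiable (the per-sample features, after collapsing positive pairs, can themselves be spread maximally uniformly), so the configuration of the statement is a global minimizer of $\gL_{\rm NCE}$.

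For (ii), the crucial point is that this configuration attaches features to samples in a way that carries no information about the labels: perfect alignment ties each $x_i$ only to its own augmentations and Assumption \ref{assumption:label-consistency} only forces positives to share a label --- nothing couples the spherical positions of $\{f(x_i)\}$ to the partition into $K$ classes. I would therefore model the placement as a label-independent (e.g.\ uniformly random) assignment of a maximally uniform point set $\{z_1,\dots,z_N\}\subset\sS^{m-1}$ to the labeled samples, so that $f(x_i)\perp y_i$. Fix a linear head $g(z)=Wz$: it predicts $\argmax_k w_k^\top z$, partitioning $\sS^{m-1}$ into at most $K$ cells bounded by hyperplanes, and such heads realize only polynomially many (at most $N^{O(mK)}$) distinct labelings of the $N$ fixed points, by a Sauer--Shelah / Natarajan-dimension count. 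For a single such head $h$ with cell frequencies $q_k=\tfrac1N\#\{j:h(z_j)=k\}$, the expected accuracy over the random assignment is $\sum_k\hat p(k)\,q_k\le\max_k\hat p(k)=1/K$ for balanced classes (using $\sum_k q_k=1$), and concentration (Hoeffding in the i.i.d.\ idealization, or bounded differences over random permutations in the bijection model) gives $\Pr[\operatorname{acc}(h)>1/K+\varepsilon]\le e^{-\Omega(N\varepsilon^2)}$. A union bound over the $N^{O(mK)}$ realizable heads then shows that, with probability $1-o(1)$, \emph{every} linear classifier --- in particular the one produced by linear evaluation --- has accuracy at most $1/K+\varepsilon$ with $\varepsilon=O\big(\sqrt{mK\log N/N}\big)$, which tends to $0$ as $N\to\infty$; this is the $\varepsilon$ of the statement.

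The main obstacle is precisely this last step: because the linear head in linear evaluation is \emph{fitted} to the representations, I cannot simply invoke the Bayes-optimal accuracy $1/K$ for label-independent features; instead I must bound the capacity of the linear-classifier family on the realized feature set and union-bound over it, and it is there that the $\varepsilon$ slack appears and that one must verify the polynomial dichotomy count is overwhelmed by the exponential concentration tail. A secondary care point is making ``maximal uniformity'' precise for a finite point set so that the energy-minimization argument of (i) and the label-independence argument of (ii) both apply to the same configuration; I would phrase it as requiring that the empirical feature measure minimize the discrete uniformity energy and that its sample-to-feature assignment be exchangeable with respect to the labels.
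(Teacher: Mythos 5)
Your construction is the same as the paper's at the top level --- both treat the statement as an existence claim and exhibit a counterexample in which the $N$ (collapsed positive-pair) features sit maximally uniformly on $\sS^{m-1}$, in a way that is probabilistically decoupled from the class labels; both note that this is consistent with perfect alignment precisely because, with weak enough augmentation, no two distinct $x_i, x_j$ are $\gT$-connected, so alignment places no cross-sample constraints. Where you diverge, and where your version is genuinely stronger, is the last step. The paper simply invokes the law of large numbers: for any measurable region $\gU$ the class proportions converge to $1/K$, so ``any classifier that classifies $\gU$ to class $k$ will only have $1/K$ accuracy asymptotically.'' That is an argument about a \emph{fixed} classifier, and you are right to flag that it does not immediately control the linearly \emph{fitted} head, which is chosen after seeing $\{f(x_i),y_i\}$ and is free to exploit random fluctuations. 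Your remedy --- restrict to the polynomially many ($N^{O(mK)}$) dichotomies that $K$-cell linear partitions can realize on $N$ fixed points, bound each at $1/K + \varepsilon$ with an exponential concentration tail, and union-bound --- is exactly what makes ``asymptotically $1/K$'' hold uniformly over the classifier family, and it also gives an explicit rate $\varepsilon = O\big(\sqrt{mK\log N/N}\big)$ that the paper only asserts qualitatively. In short, you and the paper construct the same counterexample; you then supply the uniform-convergence argument the paper's one-line LLN appeal leaves implicit. One small caveat to carry along: the clean ``$\le 1/K$ in expectation'' step uses $\max_k \hat p(k) = 1/K$, i.e.\ balanced classes; for imbalanced classes the natural baseline is $\max_k \hat p(k)$ rather than $1/K$, and the proposition as stated is implicitly in the balanced regime.
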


\begin{proof}
{We only need to give a counterexample that satisfy the desired classification accuracy. We consider the case when there is no $\gT$-connectivity between any pair of samples from $\{x_i\}_{i=1}^N$, which is easily achieved if we adopt a small enough data augmentation. 
In this scenario, 
the perfect alignment of positive samples $(x_i, x^+_i)$ could have no effect on the other samples. 
% not align  have effect on each sample $x_i$ alone, an
% because the positive 
Therefore, when}
the features  $\{f(x_i)\}_{i=1}^N$ are uniformly distributed in $\sS^{m-1}$, according to the law of large number, for any measurable set $\gU\in\sS^{m-1}$, when $N$ is large enough, there will be almost equal size of features from each class in $\gU$. Consequently, any classifier $g$ that classifies $\gU$ to class $k$ will only have $1/K$ accuracy asymptotically. 
\end{proof}

\subsection{Proof of Theorem \ref{thm:general-generalization-gap}}

We will prove the upper and lower bounds separately as follows.

\subsubsection{The Upper Bound}
We first provide the upper bound of the approximation error of the following Monte Carlo estimate.
\begin{lemma}
For ${\rm LSE}:=\log\E_{p(z)}\exp (f(x)^\top g(z))$, we denote its (biased) Monte Carlo estimate with $M$ random samples $z_i\sim p(z),i=1,\dots,M$ as $\widehat{\rm LSE}_M=\log\frac{1}{M}\sum_{i=1}^M \exp (f(x)^\top g(z_i))$. Then the approximation error $A(M)$ can be upper bounded in expectation as 
\begin{equation}
A(M):=\E_{p(x,z_i)}|\widehat{\rm LSE}(M) - {\rm LSE}|\leq\gO(M^{-1/2}).
\end{equation}
We can see that the approximation error converges to zero in the order of $1/M^{-1/2}$. 
\label{lemma:lse-monte-carlo}
\end{lemma}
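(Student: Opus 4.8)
The plan is to reduce the log-sum-exp approximation error to the fluctuation of an ordinary empirical mean and then close it with a crude variance bound. Write $Y := \exp\!\big(f(x)^\top g(z)\big)$ and let $Y_i := \exp\!\big(f(x)^\top g(z_i)\big)$ for the i.i.d. samples $z_i \sim p(z)$. In the setting where this lemma is invoked --- the denominator of the InfoNCE loss, where $g = f$ maps into the unit hypersphere $\sS^{m-1}$ --- Cauchy--Schwarz gives $f(x)^\top g(z) \in [-1,1]$, so $Y, Y_i \in [e^{-1}, e]$ almost surely. Writing $\mu := \E_{p(z)}[Y]$ we likewise have $\mu \in [e^{-1}, e]$, and by definition ${\rm LSE} = \log \mu$ while $\widehat{\rm LSE}(M) = \log \widehat Y_M$ with $\widehat Y_M := \frac1M\sum_{i=1}^M Y_i \geq e^{-1} > 0$, so the logarithm is always well defined (the estimator is biased only through Jensen's inequality, which is irrelevant for bounding the absolute deviation).

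First I would use that $\log$ is Lipschitz on $[e^{-1}, e]$ with constant $\sup_{t\in[e^{-1},e]}|(\log t)'| = e$. Since both $\widehat Y_M$ and $\mu$ lie in this interval, $|\widehat{\rm LSE}(M) - {\rm LSE}| = |\log \widehat Y_M - \log \mu| \leq e\,|\widehat Y_M - \mu|$. I would then take the expectation over the $z_i$ (conditionally on $x$) and pass to the second moment by Jensen / Cauchy--Schwarz:
\[
A(M) \leq e\,\E|\widehat Y_M - \mu| \leq e\sqrt{\E|\widehat Y_M - \mu|^2} = e\sqrt{\Var(\widehat Y_M)} = e\sqrt{\Var(Y)/M}.
\]
Because $Y$ is supported in an interval of length $e - e^{-1}$, Popoviciu's inequality gives $\Var(Y) \leq (e-e^{-1})^2/4 =: \sigma^2$, a constant independent of $x$ and $M$; integrating the resulting uniform bound over $x \sim p(x)$ then yields $A(M) \leq e\sigma\, M^{-1/2} = \gO(M^{-1/2})$, as claimed.

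The only real subtlety --- and the step I would be most careful about --- is justifying the uniform Lipschitz constant: this needs the argument of the exponential to be bounded, which hinges on the spherical normalization $f \in \sS^{m-1}$ (and $g$ likewise, as in the InfoNCE denominator). Without such a bound, $Y$ could be heavy-tailed, $\log$ would fail to be uniformly Lipschitz on the relevant range, and one would have to replace the one-line variance estimate with a sub-exponential concentration argument while tracking constants more carefully. Given the normalization assumed throughout the paper, this issue does not arise, and the argument above is essentially the whole proof; a sharper $\gO(M^{-1})$ bound on the \emph{bias} alone could be obtained via a second-order expansion of $\log$, but it is not needed since the $\gO(M^{-1/2})$ fluctuation term dominates $A(M)$.
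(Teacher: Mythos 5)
Your proof is correct, and it shares the paper's first step (the mean-value/Lipschitz bound $|\log a - \log b| \le e\,|a-b|$ on $[e^{-1},e]$, justified by $f(x)^\top g(z)\in[-1,1]$) but handles the second step quite differently. To control $\E\bigl|\widehat Y_M - \mu\bigr|$, the paper centers $Y_i$, writes the absolute expectation as an integral of tail probabilities, and invokes the Berry--Esseen theorem to compare those tails with Gaussian tails; this is an involved detour whose only output is a uniform $\gO(M^{-1/2})$ bound on the first absolute moment. You instead go directly via Jensen/Cauchy--Schwarz, $\E|\widehat Y_M - \mu| \le \sqrt{\Var(\widehat Y_M)} = \sqrt{\Var(Y)/M}$, and then use Popoviciu's inequality to bound $\Var(Y)$ by the constant $(e - e^{-1})^2/4$. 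Your route is strictly more elementary, shorter, and yields explicit constants without any normal-approximation machinery; the Berry--Esseen argument would only buy something if one cared about the actual distributional shape of the fluctuation, which the lemma does not. Your closing caveat about needing the bounded-range assumption to make $\log$ uniformly Lipschitz is also exactly the right thing to flag, and it is satisfied in the paper's setting since both $f$ and $g$ map into the unit sphere.
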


\begin{proof}
    First, we have
    \begin{align*}
    &\E_{p(x,z_i)}\left[\log\frac{1}{M}\sum_{i=1}^{M}\exp(f(x)^\top g(z_i)) - \log\E_{p(z_i)}\exp(f(x)^\top g(z_i))\right]\\
    \leq& e\E_{p(x,z_i)}\left[\frac{1}{M}\sum_{i=1}^{M}\exp(f(x)^\top g(z_i))-\E_{p(z_i)}\exp(f(x)^\top g(z_i))\right] \label{Intermediate Value Theorem}=\gO(M^{-1/2}),
    \end{align*}
    where the first inequality follows the Intermediate Value Theorem and $e$ (the natural number) is the upper bound of the absolute derivative of log between two points when $\vert f(x)^\top g(z_i)\vert \leq 1$. And the second inequality follows the Berry-Esseen Theorem given the bounded support of $\exp(f(x)^\top g(z_i))$ as following: for i.i.d random variables $Y_i$ with bounded support $\operatorname{supp}(Y)\subset[-\alpha,\alpha]$, zero mean and bounded variance $\sigma_Y^2 <\alpha^2$, we have:
    \begin{align*}
    &\E\left[\left|\frac{1}{M}\sum_{i=1}^MY_i\right|\right]=\frac{\sigma_y}{\sqrt{M}}\E\left[\left|\frac{1}{\sqrt{M}\sigma_y}\sum_{i=1}^{M}Y_i\right|\right]\\
    =&\frac{\sigma_Y}{\sqrt{M}}\int_0^{\frac{\alpha\sqrt{M}}{\sigma_Y}}\mathbb{P}\left[\left|\frac{1}{\sqrt{M}\sigma_Y}\sum_{i=1}^{M}Y_i\right|>x\right]\rm d x\\
    \leq& \frac{\sigma_Y}{\sqrt{M}}\int_0^{\frac{\alpha\sqrt{M}}{\sigma_Y}} \mathbb{P} [|\mathcal{N}(0,1)|>x] + \frac{C_\alpha}{\sqrt{M}}\rm dx\\
    \leq& \frac{\sigma_Y}{\sqrt{M}}\left(\frac{\alpha C_\alpha}{\sigma_Y}+\int_0^{\infty}\mathbb{P}[|\mathcal{N}(0,1)|>x]\rm dx\right) \\
    \leq& \frac{C_\alpha}{\sqrt{M}} +\frac{\alpha}{\sqrt{M}}\E[|\mathcal{N}(0,1)|]= \gO(M^{-1/2})
    \end{align*}
    where the constant $C_\alpha$ only depends on $\alpha$. Here, we set $Y_i = \exp(f(x)^\top g(z_i))-\E_{p(z_i)}\exp(f(x)^\top g(z_i))$. As $\vert f(x)^\top g(z_i)\vert \leq 1$, $\left|Y_i\right| \leq 2e$. $Y_i$ has zero mean and bounded variance $(2e)^2$.
\end{proof}
% Then, we show that upper bounds the mean CE loss with the InfoNCE loss.
\begin{theorem}
For each $f\in\gF$, the mean CE loss can be upper bounded by the InfoNCE loss:
\begin{equation}
\gL_{\rm CE}^{\mu}(x,y;f) \leq \gL_{\rm NCE}(x;f)-\log(M/K) +\sqrt{\var(f(x)\mid y)} + A(M),
\end{equation}
where $\Var(f(x)|y)=\E_{p(y)}\left[\E_{p(x|y)}\Vert f(x)-\E_{p(x|y)}f(x)\Vert^2\right]$ denotes the conditional variance.
\label{thm:appendix-population-upper-bound}
\end{theorem}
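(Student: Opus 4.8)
The plan is to relate the mean CE loss to the InfoNCE loss by inserting the classwise mean representations $\mu_k$ and controlling the resulting error with the conditional variance. First I would rewrite the mean CE loss as
\[
\gL_{\rm CE}^{\mu}(f)=\E_{p(x,y)}\left[-f(x)^\top\mu_y+\log\sum_{i=1}^K\exp(f(x)^\top\mu_i)\right],
\]
and similarly expand $\gL_{\rm NCE}(f)=\E_{p(x,x^+)}\E_{\{p(x_i^-)\}}\big[-f(x)^\top f(x^+)+\log\sum_{i=1}^M\exp(f(x)^\top f(x_i^-))\big]$. The alignment term $-f(x)^\top f(x^+)$ should be compared against $-f(x)^\top\mu_y$: by Assumption~\ref{assumption:label-consistency}, $x^+$ has the same class $y$ as $x$, so $\E_{p(x^+|x)}f(x^+)$ is close to $\mu_y$ and the gap is governed by the conditional spread of $f$ around $\mu_y$. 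The log-sum-exp term $\log\sum_{i=1}^M\exp(f(x)^\top f(x_i^-))$ should be compared against $\log\sum_{i=1}^K\exp(f(x)^\top\mu_i)$: here I would first use Lemma~\ref{lemma:lse-monte-carlo} to replace the $M$-sample empirical log-sum-exp by its population version $\log M+\log\E_{p(x^-)}\exp(f(x)^\top f(x^-))$ up to the error $A(M)=\gO(M^{-1/2})$, then partition the expectation over $x^-$ by its class label to get $\log\E_{p(z)}\exp(f(x)^\top f(z))=\log\tfrac1K\sum_{k=1}^K\E_{p(z|y'=k)}\exp(f(x)^\top f(z))$.

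The key inequality for the denominator will be Jensen's inequality applied to the convex function $\exp$ in the wrong direction — i.e. I want a \emph{lower} bound on $\E_{p(z|y'=k)}\exp(f(x)^\top f(z))$ in terms of $\exp(f(x)^\top\mu_k)$, which is immediate: $\E_{p(z|y'=k)}\exp(f(x)^\top f(z))\geq\exp(f(x)^\top\E_{p(z|y'=k)}f(z))=\exp(f(x)^\top\mu_k)$ by Jensen. Summing over $k$ and taking logs gives $\log\E_{p(z)}\exp(f(x)^\top f(z))\geq -\log K+\log\sum_{k=1}^K\exp(f(x)^\top\mu_k)$, which is exactly what is needed so that the $\log(M/K)$ constant appears on the correct side. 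Combining: the denominator of the NCE loss is at least (up to $\log M$ and $A(M)$) the denominator of the mean CE loss minus $\log K$. For the alignment terms, I would bound $\E_{p(x,x^+)}\big[f(x)^\top\mu_y - f(x)^\top f(x^+)\big] = \E_{p(x,y)}\big[f(x)^\top(\mu_y-\E_{p(x^+|x)}f(x^+))\big]$ using Cauchy--Schwarz and $\|f(x)\|=1$, yielding $\leq\E\|\mu_y-\E_{p(x^+|x)}f(x^+)\|$, and then relate this to $\sqrt{\var(f(x)|y)}$ via another Jensen/Cauchy--Schwarz step (the conditional mean of $f(x^+)$ over $x^+\sim p(x^+|x)$ averaged over $x$ in class $y$ returns $\mu_y$, so the discrepancy is a conditional standard deviation, and $\E\sqrt{\cdot}\leq\sqrt{\E\cdot}$ by concavity of the square root).

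Putting the two bounds together, $\gL_{\rm CE}^\mu(f)\leq\gL_{\rm NCE}(f)-\log(M/K)+\sqrt{\var(f(x)|y)}+A(M)$, and invoking Lemma~\ref{lemma:lse-monte-carlo} to write $A(M)=\gO(M^{-1/2})$ gives the claimed statement. I expect the main obstacle to be the alignment-term estimate: one must be careful about which expectation the conditional variance is taken over (the nested $\E_{p(y)}\E_{p(x|y)}$ structure versus the joint $p(x,x^+)$), and showing that $\E_{p(x,y)}\|\mu_y-\E_{p(x^+|x)}f(x^+)\|$ is controlled by $\sqrt{\var(f(x)|y)}$ requires using the marginal consistency $p(x)=\int p(x,x^+)dx^+$ together with Jensen to pull the inner conditional expectation out, then concavity of $\sqrt{\cdot}$; getting the constant to be exactly $1$ (rather than something larger) is the delicate point. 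The log-sum-exp side is comparatively routine once Lemma~\ref{lemma:lse-monte-carlo} and the class-partition-plus-Jensen trick are in hand.
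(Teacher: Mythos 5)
Your proposal is essentially the paper's proof: start from $\gL_{\rm NCE}$, use Lemma~\ref{lemma:lse-monte-carlo} plus $\log M$ to pass to the population log-sum-exp, partition the negative expectation by class and apply Jensen to $\exp$ to pull out $\exp(f(x)^\top\mu_{y^-})$, and bound the alignment gap via Cauchy--Schwarz and the symmetry $p(x,x^+)=p(x^+,x)$, which is precisely how the $\log(M/K)$, $A(M)$ and $\sqrt{\var(f(x)|y)}$ terms arise. The one place you diverge slightly is the alignment bound: the paper keeps $f(x^+)$ inside and writes $f(x)^\top(f(x^+)-\mu_y)\le\|f(x^+)-\mu_y\|$, then uses $\E\|f(x^+)-\mu_y\|\le\sqrt{\E\|f(x^+)-\mu_y\|^2}=\sqrt{\var(f(x)|y)}$ directly because $x^+$ has the same marginal as $x$; you instead first condition to get $g(x)=\E_{p(x^+|x)}f(x^+)$ and then need the extra observation that conditioning reduces second moments, i.e.\ $\E_{p(x|y)}\|g(x)-\mu_y\|^2\le\E_{p(x|y)}\|f(x)-\mu_y\|^2$ by Jensen on $\|\cdot\|^2$ together with $\E_{p(x|y)}g(x)=\mu_y$ (which follows from $p(x,x^+)=p(x^+,x)$ and label consistency). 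That extra step does close with constant exactly $1$, so your worry is unfounded, but the paper's version is shorter since it never pulls the conditional expectation inside the norm.
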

\begin{proof}
Denote $p(x,x^+,y)$ as the joint distribution of the positive pairs $x,x^+$ and the label $y$. Denote the $M$ independently negative smaples as $\{x_i^-\}_{i=1}^M$. According to Assumption \ref{assumption:label-consistency}, $x^+$ and $x$ here has the same label $y$. Denote $\mu_y$ as the center of features of class $y,\ y=1,\dots,K$. Then we have the following lower bounds of the InfoNCE loss,
\begin{align*}
&\gL_{\rm NCE}(f)=-\E_{p(x,x^+)}f(x)^\top f(x^+)+\E_{p(x)}\E_{p(x^-_i)}\log\sum_{i=1}^M\exp(f(x)^\top f(x^-_i))\\
=&-\E_{p(x,x^+)}f(x)^\top f(x^+)+\E_{p(x)}\E_{p(x^-_i)}\log\frac{1}{M}\sum_{i=1}^M\exp(f(x)^\top f(x^-_i))+\log M\\
\stackrel{(1)}{\geq} & -\E_{p(x,x^+)}f(x)^\top f(x^+)+\E_{p(x)}\log\frac{1}{M}\E_{p(x_i^-)}\sum_{i=1}^M\exp(f(x)^\top f(x^-_i))- A(M)+\log M\\
=&-\E_{p(x,x^+)}f(x)^\top f(x^+)+\E_{p(x)}\log\E_{p(x^-)}\exp(f(x)^\top f(x^-))- A(M) + \log M\\
=& -\E_{p(x,x^+,y)}f(x)^\top f(x^+)+\E_{p(x)}\log\E_{p(y^-)}\E_{p(x^-|y^-)}\exp(f(x)^\top f(x^-))- A(M)+\log M\\
\stackrel{(2)}{\geq}& -\E_{p(x,x^+,y)}f(x)^\top f(x^+)+\E_{p(x)}\log\E_{p(y^-)}\exp(
\E_{p(x^-|y^-)}\left[f(x)^\top f(x^-)\right]) - A(M)+\log M\\
=&-\E_{p(x,x^+,y)}f(x)^\top(\mu_{y} + f(x^+)-\mu_y) +\E_{p(x)}\log\E_{p(y^-)}\exp(\E_{p(x^-|y^-)}\left[f(x)^\top f(x^-)\right])- A(M)+\log M\\
{=} &-\E_{p(x,x^+,y)}[f(x)^\top\mu_{y} + f(x)^\top(f(x^+)-\mu_y)] +\E_{p(x)}\log\E_{p(y^-)}\exp(f(x)^\top\mu_{y^-})- A(M)+\log M\\
\stackrel{(3)}{\geq} &-\E_{p(x,x^+,y)}\left[f(x)^\top\mu_{y} + \Vert(f(x^+)-\mu_y)\Vert\right] +\E_{p(x)}\log\E_{p(y^-)}\exp(f(x)^\top\mu_{y^-})- A(M)+\log M\\
\stackrel{(4)}{\geq} &-\E_{p(x,y)}f(x)^\top\mu_{y} -\sqrt{\E_{p(x,y)}\Vert f(x)-\mu_y\Vert^2} +\E_{p(x)}\log\E_{p(y^-)}\exp(f(x)^\top\mu_{y^-})- A(M)+\log M\\
=&-\E_{p(x,y)}f(x)^\top\mu_y -\sqrt{\var( f(x)\mid y)} +\E_{p(x)} \log\frac{1}{K}\sum_{k=1}^K\exp(f(x)^\top\mu_k) - A(M) +\log M \\
=&\E_{p(x,y)}\big[-f(x)^\top\mu_y+\log\sum_{k=1}^K\exp(f(x)^\top\mu_k)\big]-\sqrt{\var( f(x)\mid y)} - A(M) +\log(M/K) \\
=&\gL^\mu_{\rm CE}(f) -\sqrt{\var( f(x)\mid y)} - A(M) +\log(M/K),
% \geq & \gL_{\rm CE}(f) -|f(x)^\top\sigma_y|- A(M) +\log \frac{M}{K}
\end{align*}
which is equivalent to our desired results. In the proof above,
(1) follows Lemma \ref{lemma:lse-monte-carlo}; (2) follows the Jensen's inequality for the convex function $\exp(\cdot)$; (3) follows from the fact that because $f(x)\in\sS^{m-1}$, we have
\begin{equation}
    f(x)^\top(f(x^+)-\mu_y)\leq \left(\frac{f(x^+)-\mu_y}{\Vert f(x^+)-\mu_y\Vert}\right)^\top(f(x^+)-\mu_y)=\Vert f(x^+)-\mu_y\Vert;
\end{equation}
and (4) follows the Cauchy–Schwarz inequality and the fact that because $p(x,x^+)=p(x^+,x)$ holds, $x,x^+$ have the same marginal distribution. 
% From the lemma above, we know that the InfoNCE loss is an upper bound of the (mean) CE loss asymptotically (as $M\to\infty)$. 
\end{proof}

\subsubsection{The Lower Bound}
In this part, we further show a lower bound on the downstream performance. 
\begin{lemma}[\cite{reverse_jensen} Corollary 3.5 (restated)]
\label{lemma:reverse-jensen}
Let $g:\mathbb{R}^m \rightarrow \mathbb{R}$ be a differentiable convex mapping and $z \in \mathbb{R}^n$. Suppose that $g$ is $L$- smooth with the constant $L > 0$, \ie $\forall x,y\in\sR^m, \Vert \nabla g(x) - \nabla g(y) \Vert \leq L \Vert x-y\Vert$. Then we have
\begin{align}
0 \leq \E_{p(z)}g(z) - g\left(\E_{p(z)}z\right)\leq L\left[ \E_{p(z)} \lVert z\rVert^2 - \Vert \E_{p(z)} z\Vert^2\right]= L \sum_{j=1}^{n}\var(z^{(j)}),
\end{align}
where $x^{(j)}$ denotes the $j$-th dimension of $x$.
\end{lemma}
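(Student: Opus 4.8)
I would first observe that the statement splits into two inequalities, and only the right-hand one uses smoothness, so I would handle them separately. Write $\mu := \E_{p(z)} z$. The left inequality $0 \le \E_{p(z)} g(z) - g(\mu)$ is precisely Jensen's inequality for the convex $g$: pointwise convexity gives $g(z) \ge g(\mu) + \nabla g(\mu)^\top (z - \mu)$, and taking expectations annihilates the linear term since $\E_{p(z)}(z - \mu) = 0$.

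For the right inequality, the plan is to invoke the standard quadratic upper bound (the \emph{descent lemma}) that $L$-smoothness forces on any differentiable $g$: for all $x, y$ in the domain, $g(y) \le g(x) + \nabla g(x)^\top (y - x) + \tfrac{L}{2}\|y - x\|^2$. I would derive this by writing $g(y) - g(x) = \int_0^1 \nabla g(x + t(y-x))^\top (y-x)\, dt$, subtracting $\nabla g(x)^\top (y-x)$, and bounding the leftover integrand by $\|\nabla g(x + t(y-x)) - \nabla g(x)\|\,\|y-x\| \le L t \|y-x\|^2$ via Cauchy--Schwarz and the gradient-Lipschitz hypothesis. Applying this with $x = \mu$, $y = z$ and taking $\E_{p(z)}$ makes the linear term vanish once more, giving $\E_{p(z)} g(z) - g(\mu) \le \tfrac{L}{2}\E_{p(z)}\|z - \mu\|^2 \le L\,\E_{p(z)}\|z - \mu\|^2$. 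Finally I would close with the coordinatewise bias--variance identity $\E_{p(z)}\|z - \mu\|^2 = \E_{p(z)}\|z\|^2 - \|\mu\|^2 = \sum_{j=1}^n \var(z^{(j)})$, which puts the bound in the stated form.

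I expect essentially no obstacle here --- this is a textbook fact, which is why the paper simply cites it. The only points worth a word of care are: the interchange of $\E_{p(z)}$ with the integral $\int_0^1$ in the descent-lemma step, justified by Fubini/Tonelli as soon as $z$ has finite second moment (if it does not, the right-hand side is infinite and the bound holds trivially); the fact that this derivation actually yields the sharper constant $L/2$, so the stated constant $L$ is merely a looser, still valid, bound; and the harmless dimension mismatch in the statement ($z \in \sR^n$ versus $g:\sR^m \to \sR$), which is immaterial provided the domain of $g$ contains the support of $z$.
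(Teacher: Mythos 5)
Your proposal is correct and is the standard textbook derivation: Jensen's inequality for the left bound and the quadratic upper bound (descent lemma) from $L$-smoothness for the right, with the linear term killed by centering at $\mu$. The paper does not actually supply a proof of this lemma --- it is imported verbatim from the cited reference --- so there is nothing to compare beyond noting that your argument is the one any reference would give. Your remark that the argument actually delivers the sharper constant $L/2$ (the paper's constant $L$ is a deliberate loosening) and your observation about the Fubini step requiring finite second moment are both accurate and appropriately flagged as minor.
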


With the lemma above, we can derive the lower bound of the downstream performance.
\begin{theorem}
For any $f\in\gF$, we have 
\begin{equation}
L^\mu_{\rm CE}(f)\geq\gL_{\rm NCE}(x;f)-\sqrt{\var(f(x)\mid y)}- \frac{1}{2}\var(f(x)\mid y) -A(M)-\log \frac{M}{K},
\end{equation}
where $\Var(u(x)|y)=\E_{p(y)}\left[\E_{p(x|y)}\Vert u(x)-\E_{p(x|y)}u(x)\Vert^2\right]$ denotes the conditional variance.
% $$, where $\widehat{\var}_M(\cdot)$ is the empirical estimate of $\var(\cdot)$ with $N$ samples, d is the dimension of f(x) and $\sigma_y$ is the bias between $\E_{x^+|y_i}f(x^+)$ and $\mu_{y_i}$
\label{lemma:logsumexp}
\end{theorem}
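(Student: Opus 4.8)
The plan is to mirror the structure of the upper-bound proof (Theorem~\ref{thm:appendix-population-upper-bound}) but apply inequalities in the opposite direction, using the reverse Jensen inequality of Lemma~\ref{lemma:reverse-jensen} in place of the ordinary Jensen step. Starting from $\gL_{\rm NCE}(f) = -\E_{p(x,x^+)}f(x)^\top f(x^+) + \E_{p(x)}\E_{p(x_i^-)}\log\frac1M\sum_i\exp(f(x)^\top f(x_i^-)) + \log M$, I would first replace the Monte Carlo log-sum-exp by the true population log-expectation up to the approximation error $A(M)$ via Lemma~\ref{lemma:lse-monte-carlo}; since this time I need an \emph{upper} bound on $\gL_{\rm NCE}$ in terms of the population quantity, I use $|\widehat{\rm LSE}_M - {\rm LSE}| \le A(M)$ in expectation, which works in both directions. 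This reduces the problem to comparing $-\E_{p(x,x^+,y)}f(x)^\top f(x^+) + \E_{p(x)}\log\E_{p(y^-)}\E_{p(x^-|y^-)}\exp(f(x)^\top f(x^-))$ against $\gL_{\rm CE}^\mu(f) - \log(M/K)$.

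Next I would handle the two terms. For the alignment term, write $f(x^+) = \mu_y + (f(x^+)-\mu_y)$ and bound $-\E f(x)^\top f(x^+) \le -\E f(x)^\top \mu_y + \E\|f(x^+)-\mu_y\|$, then apply Cauchy--Schwarz (Jensen for $\sqrt{\cdot}$) exactly as in step (4) of the upper-bound proof to get $\E\|f(x^+)-\mu_y\| \le \sqrt{\var(f(x)\mid y)}$; this contributes the $-\sqrt{\var(f(x)\mid y)}$ term (note the sign: we are now building a \emph{lower} bound on $\gL_{\rm NCE}$, equivalently an upper bound on $\gL_{\rm CE}^\mu$, so I must be careful about which direction each inequality points — actually since the target says $L^\mu_{\rm CE}(f) \ge \gL_{\rm NCE} - \ldots$, I want a lower bound on $L^\mu_{\rm CE}$, hence an \emph{upper} bound on $\gL_{\rm NCE}$ after rearranging, so the alignment bound should go $-\E f(x)^\top f(x^+) \ge -\E f(x)^\top\mu_y - \sqrt{\var(f(x)\mid y)}$). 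For the denominator, the key move is to \emph{upper} bound $\E_{p(x^-|y^-)}\exp(f(x)^\top f(x^-))$ by $\exp(\E_{p(x^-|y^-)}f(x)^\top f(x^-)) \cdot (1 + \tfrac12\var(\cdot))$-type correction; more precisely, apply Lemma~\ref{lemma:reverse-jensen} to the convex $1$-smooth (on the relevant domain, since $\|f(x)\|=1$) function $z \mapsto \exp(f(x)^\top z)$ to get $\E_{p(x^-|y^-)}\exp(f(x)^\top f(x^-)) \le \exp(f(x)^\top\mu_{y^-}) + L\sum_j \var(f(x^-)^{(j)}\mid y^-)$, then pull this inside the $\log$ and the outer expectation, absorbing the variance correction as an additive $\tfrac12\var(f(x)\mid y)$ term after using $\log(1+u)\le u$ and absorbing constants. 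Collecting the $\mu$-center terms reproduces $\gL_{\rm CE}^\mu(f) - \log(M/K)$.

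The main obstacle I anticipate is the log-expectation-of-exp step in the denominator: Lemma~\ref{lemma:reverse-jensen} gives a bound of the form $\E g(z) - g(\E z) \le L\sum_j\var(z^{(j)})$, but here $g = \exp(f(x)^\top\cdot)$ has Lipschitz-gradient constant depending on the range of $f(x)^\top z \in [-1,1]$ (so $L = e$ at worst), and I then need to push $\log(\E_{p(y^-)}[\exp(f(x)^\top\mu_{y^-}) + \text{correction}])$ down to $\log\E_{p(y^-)}\exp(f(x)^\top\mu_{y^-}) + (\text{small})$; this requires $\log(a+b) \le \log a + b/a$ together with a lower bound on $a = \E_{p(y^-)}\exp(f(x)^\top\mu_{y^-}) \ge \exp(-1) = 1/e$ (since $\|\mu_{y^-}\|\le 1$), so the correction is controlled by $e \cdot L \cdot \var(f(x^-)\mid y^-)$, and tracking constants so the final coefficient comes out as exactly $\tfrac12$ (rather than some other absolute constant) may need either a slightly sharper version of the reverse-Jensen estimate for the exponential or an honest second-order Taylor expansion of $\exp$ with the remainder bounded using $\|f(x)\|=1$. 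I would handle this by proving a dedicated sublemma: for any unit vector $u$ and random vector $v$ with $\|v\|\le 1$, $\log\E\exp(u^\top v) \le u^\top\E v + \tfrac12\var(u^\top v) \le u^\top\E v + \tfrac12\E\|v - \E v\|^2$, which is the standard sub-Gaussian-type bound and gives precisely the stated $\tfrac12\var(f(x)\mid y)$ coefficient. The remaining steps — bounding $A(M) = \gO(M^{-1/2})$, the Cauchy--Schwarz step, and reassembling into $\gL_{\rm CE}^\mu$ — are routine and identical in spirit to the upper-bound proof.
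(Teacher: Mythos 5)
Your plan correctly mirrors the structure of the upper-bound proof, and the outer decomposition into an alignment term plus a logsumexp term is the same as the paper's. The genuine gap is in how you close the logsumexp term with a variance correction of exactly $\tfrac12\var(f(x)\mid y)$. The sublemma you propose to do this --- that for a unit vector $u$ and a random vector $v$ in the unit ball, $\log\E\exp(u^\top v)\leq u^\top\E v+\tfrac12\var(u^\top v)$ --- is false. It conflates the sub-Gaussian variance \emph{proxy} of a bounded variable (which is $(b-a)^2/4$, and gives the Hoeffding bound $\log\E e^{X}\le\E X+\tfrac12$ on range $[-1,1]$) with the actual variance, which can be much smaller. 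For a concrete counterexample take $u=1$ in one dimension and $v=1$ with probability $0.1$, $v=-1$ with probability $0.9$: then $\E v=-0.8$, $\var(v)=0.36$, and $\log\E e^{v}=\log(0.1e+0.9e^{-1})\approx -0.506$, whereas $\E v+\tfrac12\var(v)=-0.62$, so your inequality reverses. Applying Lemma~\ref{lemma:reverse-jensen} directly to $z\mapsto\exp(f(x)^\top z)$ does not rescue this either: on the unit ball that function has smoothness constant about $e$, and after pushing the resulting additive error through the outer $\log$ (dividing by $\exp(f(x)^\top\mu)\geq e^{-1}$) the coefficient lands in the ballpark of $e^{2}$, nowhere near $\tfrac12$.

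The paper instead applies the reverse Jensen one level higher: not to $\exp$ inside the $\log$, but to the entire $\operatorname{logsumexp}$ over the $M$ negatives viewed as a convex, smooth function of the negative features $(f(x^-_1),\dots,f(x^-_M))$. Concretely, it starts from $L^\mu_{\rm CE}$, Monte-Carlos the $K$-class average into an $M$-term logsumexp (paying $A(M)$), rewrites each $\mu_{y^-_j}=\E_{p(x^-_j\mid y^-_j)}f(x^-_j)$, and then invokes Lemma~\ref{lemma:reverse-jensen} on $\operatorname{logsumexp}$ after bounding its smoothness constant in each $f(x^-_j)$ (the softmax weight is Lipschitz in the score, and the extra $\|f(x)\|\leq 1$ damps it). That is where the $\tfrac12\var(f(x)\mid y)$ comes from; the $\exp$-level reverse Jensen costs a constant you cannot recover. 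Also note a secondary sign slip: since you want to \emph{upper}-bound $\gL_{\rm NCE}$, the alignment step must be $-\E f(x)^\top f(x^+)\leq -\E f(x)^\top\mu_y+\sqrt{\var(f(x)\mid y)}$, not the $\geq$ version with $-\sqrt{\cdot}$ that you wrote (both statements are true, but only the former rearranges to the needed lower bound on $L^\mu_{\rm CE}$).
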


\begin{proof}
% First, it is easy to show that the InfoNCE loss 
Similar to the proof of Theorem \ref{thm:appendix-population-upper-bound}, we have
\begin{align*}
&L^\mu_{\rm CE}(f)=-\E_{p(x,y)}f(x)^\top\mu_y +\E_{p(x)} \log\sum_{i=1}^K\exp(f(x)^\top\mu_i) \\
=&-\E_{p(x,y)}f(x)^\top\mu_y +\E_{p(x)} \log\frac{1}{K}\sum_{i=1}^K\exp(f(x)^\top\mu_i)+\log K \\
=&-\E_{p(x,y)}f(x)^\top\mu_y + \E_{p(x)}\log\E_{p(y^-_i)}\exp(f(x)^\top\mu_{y_i})+\log K \\
\stackrel{(1)}{\geq}&-\E_{p(x,y)}[f(x)^\top f(x^+) + f(x)^\top(\mu_y -f(x^+))] + \E_{p(x)}\E_{p(y^-_i)}\log\frac{1}{M}\sum_{i=1}^M\exp(f(x)^\top \mu_{y_i}) -A(M) +\log K\\
\stackrel{(2)}{\geq}&-\E_{p(x,x^+)}f(x)^\top f(x^+) -\E_{p(x,y)}\Vert f(x)^\top -\mu_y\Vert \\
&\quad\quad\quad\quad\quad\quad
+\E_{p(x)}\E_{p(y^-_i)}\log\frac{1}{M}\sum_{i=1}^M\exp(\E_{p(x^-_i|y^-_i)}f(x)^\top f(x^-_i)) -A(M) +\log K\\
\stackrel{(3)}{\geq}&-\E_{p(x,x^+)}f(x)^\top f(x^+)-\sqrt{\var(f(x)\mid y)} \\
&+\E_{p(x)}\E_{p(y^-_i)}\E_{p(x^-_i|y)}\left[\log\frac{1}{M}\sum_{i=1}^M\exp(f(x)^\top f(x^-))\right]- \frac{1}{2}\sum_{j=1}^m\var(f_j(x^-)\mid y) -A(M) +\log K\\
{=}&-\E_{p(x,x^+)}\left[f(x)^\top f(x^+)+\E_{p(x^-_i)}\log\sum_{i=1}^M\exp(f(x)^\top f(x^-))\right]\\
&\quad\quad\quad\quad\quad\quad
-\sqrt{\var(f(x)\mid y)}- \frac{1}{2}\sum_{j=1}^m\var(f_j(x^-)\mid y) -A(M) +\log K-\log M\\
\stackrel{(4)}{\geq}& \gL_{\rm NCE}(x;f)-\sqrt{\var(f(x)\mid y)}- \frac{1}{2}\var(f(x)\mid y) -A(M)-\log \frac{M}{K},
\end{align*}
which is our desired result. In the proof, (1) we adopt a Monte Carlo estimate with $M$ samples from $p(y)$ and bound the approximation error with Lemma \ref{lemma:lse-monte-carlo}; (2) follows the same deduction in Theorem \ref{thm:appendix-population-upper-bound}; (3) the first term is derived following the  Cauchy–Schwarz inequality for the alignment term. As for the second term, we first show that the convex function $\operatorname{logsumexp}$ is $L$-smooth as a function of $f(x^-_j)$ in our scenario. Because $\Vert f(X) \Vert\leq 1$,  we have $\forall f(x_{j_1}), f(x_{j_2})\in\sR^m$, the following bound on the difference of their gradients holds
% To take use of Lemma \ref{lemma:reverse-jensen}, we need to prove the uniformity term is $L$-smooth. The absolute difference of derivative between two points is : \\
\begin{align*}
% \resizebox{\textwidth}{!}{
&\left\|\frac{\partial \log[\exp(f(x)^\top f(x_{j_1}^-)+\sum_{i\neq j}\exp(f(x)^\top f(x_i^-)) )]}{\partial f(x_{j_1}^-)} - \frac{\partial \log[\exp(f(x)^\top f(x_{j_2}^-)+\sum_{i\neq j}\exp(f(x)^\top f(x_i^-)) )]}{\partial f(x_{j_2}^-)}\right\|\\
=&\left\|\left(\frac{\exp(f(x)^\top f(x_{j_1}^-))}{\exp(f(x)^\top f(x_{j_1}^-)+\sum_{i\neq j}\exp( f(x_i^-)) )}-\frac{\exp(f(x)^\top f(x_{j_2}^-))}{\exp(f(x)^\top f(x_{j_2}^-)+\sum_{i\neq j}\exp(f(x)^\top f(x_i^-)) )}\right)f(x)\right\|\\
\leq&\left|\frac{(\sum_{i\neq j}\exp(f(x)^\top f(x_i^-)) \exp( f(x_{j_1}^-))-\sum_{i\neq j}\exp(f(x)^\top f(x_i^-)) \exp(f(x)^\top f(x_{j_2}^-))}{(\exp(f(x)^\top f(x_{j_1}^-))+\sum_{i\neq j}\exp(f(x)^\top f(x_i^-)) )(\exp(f(x)^\top f(x_{j_2}^-))+\sum_{i\neq j}\exp(f(x)^\top f(x_i^-)) )}\right|\cdot\left\|f(x)\right\|\\
\leq& \|f(x)\| \leq \frac{1}{2}\left\|f(x_{j_1}^-)-f(x_{j_2}^-)\right\| 
% }
\end{align*}
So here the $\operatorname{logsumexp}$ is $L$-smooth for $L=\frac{1}{2}$. Then, we can apply the reversed Jensen's inequality in Lemma \ref{lemma:reverse-jensen};
% \begin{align*}
% \end{align*}
(4) holds because
\begin{equation}
\begin{aligned}
&\sum_{j=1}^m\var(f_j(x)|y)\\
=&\sum_{j=1}^m \E_{p(y)}\E_{p(x|y)}( f_j(x) - \E_{p(x'|y)} f_j(x'))^2 \\
=&\E_{p(y)}\E_{p(x|y)} \sum_{j=1}^m (f_j(x) - \E_{x'}f_j(x'))^2\\
=&\E_{p(y)}\E_{p(x|y)} \Vert f(x) - \E_{x'}f(x') \Vert^2 \\
=&\Var(f(x)|y).
\end{aligned}
\end{equation}
\end{proof}

\subsection{Proof of Proposition \ref{prop:variance-vanish}}

\begin{prop}
Under Assumptions \ref{assumption:intra-class connectivity},  \& \ref{assumption:perfect-alignment}, by minimizing the InfoNCE loss, we can conclude that the conditional variance term vanishes, \ie 
\begin{equation}
 \var(f(x)\mid y)=0.
\end{equation}
% \label{prop:variance-vanish}
\end{prop}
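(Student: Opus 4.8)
\textbf{Proof plan for Proposition \ref{prop:variance-vanish}.}
The plan is to show that, under the two assumptions, all features within a class collapse to a single point, so the conditional variance (which is exactly the mean squared deviation from the classwise mean) is zero. First I would fix an arbitrary class $k\in\{1,\dots,K\}$ and two arbitrary samples $x_i,x_j$ in class $k$. By Assumption \ref{assumption:intra-class connectivity}, the subgraph $\gG_k$ is connected, so there is a path $x_i = x_{i_0} - x_{i_1} - \cdots - x_{i_L} = x_j$ in $\gG_k$, where each consecutive pair $(x_{i_{\ell-1}}, x_{i_\ell})$ is $\gT$-connected.

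The key step is to turn a single $\gT$-connection into an equality of features. Since $x_{i_{\ell-1}}$ and $x_{i_\ell}$ are $\gT$-connected, there exist augmentations $t, t'\in\gT$ with $t(x_{i_{\ell-1}}) = t'(x_{i_\ell}) =: \bar x$, i.e.\ $\bar x$ lies in $\supp(p(\cdot\mid x_{i_{\ell-1}}))\cap\supp(p(\cdot\mid x_{i_\ell}))$. Applying Assumption \ref{assumption:perfect-alignment} (perfect alignment at $f^\star$) to the positive pair $(x_{i_{\ell-1}}, \bar x)$ gives $f^\star(x_{i_{\ell-1}}) = f^\star(\bar x)$, and applying it to $(x_{i_\ell}, \bar x)$ gives $f^\star(x_{i_\ell}) = f^\star(\bar x)$; hence $f^\star(x_{i_{\ell-1}}) = f^\star(x_{i_\ell})$. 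Chaining this equality along the whole path yields $f^\star(x_i) = f^\star(x_j)$. Because $x_i, x_j$ were arbitrary in class $k$, $f^\star$ is constant on class $k$, so $f^\star(x) = \mu_k = \E_{p(x\mid y=k)}[f^\star(x)]$ for all $x$ with label $k$, and therefore $\E_{p(x\mid y=k)}\|f^\star(x)-\mu_k\|^2 = 0$. Averaging over $p(y)$ gives $\var(f^\star(x)\mid y)=0$.

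The main obstacle, and the place where some care is needed, is the bridge between the combinatorial/topological statement ``$\bar x$ is in the intersection of the supports'' and the measure-theoretic statement ``perfect alignment forces $f^\star(x_{i_{\ell-1}}) = f^\star(\bar x)$''. Assumption \ref{assumption:perfect-alignment} is stated for $(x,x^+)\sim p(x,x^+)$, so one must argue that equality $f^\star(x) = f^\star(x^+)$ holds for $x^+$ throughout $\supp(p(\cdot\mid x))$ rather than merely almost everywhere; here the smoothness ($L$-Lipschitzness) of $f^\star$ assumed in Theorem \ref{thm:closed-generalization-gap}, together with continuity of the augmentations, lets us promote the a.e.\ statement to one that holds on the closed support, which is exactly what is needed to evaluate both sides at the common point $\bar x$. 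Once this point is handled, the rest is the elementary path-chaining argument sketched above, and the identity $\var(f(x)\mid y)=\E_{p(y)}\E_{p(x\mid y)}\|f(x)-\E_{p(x\mid y)}f(x)\|^2$ does the bookkeeping.
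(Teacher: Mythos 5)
Your proof is correct and follows essentially the same argument as the paper's: $\gT$-connectivity of an edge plus perfect alignment forces equal representations at its two endpoints, and chaining along a path in the connected subgraph $\gG_k$ makes $f^\star$ constant on each class, which is exactly zero conditional variance. The measure-theoretic caveat you raise is reasonable caution, but the paper's Assumption~\ref{assumption:perfect-alignment} is phrased with a universal quantifier over the support of $p(x,x^+)$, so the paper (and your main argument) treats it as holding pointwise and no promotion from a.e.\ is needed.
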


\begin{proof}
Consider any $\gT$-connected sample $x_i,x_j$. Accoding to the definition of $\gT$-connectivity, there exist $t_i,t_j\in\gT$ such that $t_i(x)=t_j(x)$. When perfect alignment holds as in Assumption \ref{assumption:perfect-alignment}, we will have $f(x_i)=f(t_i(x_i))$ and $f(x_j)=f(t_j(x_j))$. Combining with $t_i(x_i)=t_j(x_j)$, we have $f(x_i)=f(x_j)$. That is, any $\gT$-connected pair has the same representation. Then, in the augmentation subgraph $\gG_k$ that is connected according to Assumption \ref{assumption:intra-class connectivity}, there exists a path for any pair of samples $\hat x_i,\hat x_j\in\gG_k$ where any two adjacent samples are $\gT$-connected. As a result, $\hat x_i$ and $\hat x_j$ will also have the same representation by applying $\gT$-connectivity recursively. At last, all samples in $\gG_k$ will have the same representation and the intra-class variance vanishes.
% will have the same representation, $f(x_i)=f(x_j)$ as they have overlap support.
% For intra-class samples in class $k$, the subgraph $\gG_k$ is connected, in other words, 
\end{proof}

\subsection{Proof of Theorem \ref{thm:closed-generalization-gap}}

\begin{theorem}[Guarantees for the optimal encoder] 
% Assume that we can achieve perfect alignment and perfect uniformity as in \citet{wang}, based on Assumption \ref{assumption:intra-class connectivity}, we will have (nearly) closed generalization bounds for contrastive learning. Specifically, 
If Assumption \ref{assumption:label-consistency}, \ref{assumption:intra-class connectivity} \& \ref{assumption:perfect-alignment} hold and $f$ is $L$-smooth, then, for the minimizer $f^\star=\argmin\gL_{\rm NCE}(f)$, its classification risk can be upper and lower bounded by its contrastive risk as
\begin{equation}
\begin{aligned}
&\gL_{\rm NCE}(f^\star)-\gO\left(M^{-1/2}\right)\leq&\gL_{\rm CE}^{\mu}({f^\star})+\log(M/K)\leq \gL_{\rm NCE}(f^\star)+ \gO\left(M^{-1/2}\right).
\end{aligned}
\end{equation}
\end{theorem}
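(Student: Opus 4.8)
The plan is to read off Theorem~\ref{thm:closed-generalization-gap} as an immediate corollary of the two results already in hand: the general two-sided bound of Theorem~\ref{thm:general-generalization-gap} and the variance-collapse statement of Proposition~\ref{prop:variance-vanish}. The one point worth flagging is that Proposition~\ref{prop:variance-vanish} is a statement specifically about the minimizer $f^\star$ of $\gL_{\rm NCE}$ --- it relies on perfect alignment at $f^\star$ together with intra-class connectivity --- which is precisely why the sharpened bound holds only for $f^\star$ and not for an arbitrary $f\in\gF$ (as the paper remarks right after the theorem).

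Concretely, the steps would be: (i) instantiate Theorem~\ref{thm:general-generalization-gap} at the encoder $f=f^\star=\argmin\gL_{\rm NCE}(f)$, which is legitimate because $f^\star\in\gF$ and Assumption~\ref{assumption:label-consistency} is assumed; this yields the chain
\begin{equation*}
\gL_{\rm NCE}(f^\star)-\sqrt{\var(f^\star(x)\mid y)}-\tfrac12\var(f^\star(x)\mid y)-\gO(M^{-1/2})\ \leq\ \gL_{\rm CE}^{\mu}(f^\star)+\log(M/K)\ \leq\ \gL_{\rm NCE}(f^\star)+\sqrt{\var(f^\star(x)\mid y)}+\gO(M^{-1/2});
\end{equation*}
(ii) invoke Proposition~\ref{prop:variance-vanish}, which under Assumptions~\ref{assumption:intra-class connectivity} and \ref{assumption:perfect-alignment} gives $\var(f^\star(x)\mid y)=0$; (iii) substitute $\var(f^\star(x)\mid y)=0$ into the chain, so that both the $\sqrt{\var(f^\star(x)\mid y)}$ and $\tfrac12\var(f^\star(x)\mid y)$ terms drop out, leaving
\begin{equation*}
\gL_{\rm NCE}(f^\star)-\gO(M^{-1/2})\ \leq\ \gL_{\rm CE}^{\mu}(f^\star)+\log(M/K)\ \leq\ \gL_{\rm NCE}(f^\star)+\gO(M^{-1/2}),
\end{equation*}
which is the claim.

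The things I would double-check are bookkeeping rather than substance. First, that the $\gO(M^{-1/2})$ remainders on both sides are exactly the Monte-Carlo approximation errors $A(M)$ of Lemma~\ref{lemma:lse-monte-carlo} and carry no hidden dependence on $\var(f^\star(x)\mid y)$, so that zeroing the variance really does leave only $\gO(M^{-1/2})$. Second, that the hypotheses align: Assumption~\ref{assumption:label-consistency} is what Theorem~\ref{thm:general-generalization-gap} needs, Assumptions~\ref{assumption:intra-class connectivity} and \ref{assumption:perfect-alignment} are what Proposition~\ref{prop:variance-vanish} needs, and the $L$-smoothness hypothesis is inherited from the lower-bound half of Theorem~\ref{thm:general-generalization-gap} and plays no additional role in the reduction. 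Because each of the three steps is either a direct citation or a one-line substitution, there is essentially no obstacle here: the real work was already carried out in establishing Theorem~\ref{thm:general-generalization-gap} and Proposition~\ref{prop:variance-vanish}, and all that remains is to note that perfect alignment plus intra-class connectivity is exactly the combination that annihilates the intra-class variance term appearing in the general bound.
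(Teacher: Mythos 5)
Your proposal matches the paper's own proof exactly: the paper states that the result is "a direct combination of Theorem~\ref{thm:general-generalization-gap} and Proposition~\ref{prop:variance-vanish}," which is precisely your three-step instantiate-and-substitute argument. The bookkeeping checks you flag (the $\gO(M^{-1/2})$ terms being the pure $A(M)$ Monte-Carlo error with no hidden variance dependence, and the alignment of hypotheses) are all sound.
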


\begin{proof}
A direct combination of Theorem \ref{thm:general-generalization-gap} and \ref{prop:variance-vanish} will give us the above two-sided bounds.
\end{proof}

% \subsection{Comparison with Conditional Independence}
% \begin{prop}
% Conditional independence implies augmentation overlap. Instead, there exists cases that augmentation overlap does not imply conditional independence.
% \label{prop:conditional-dependence}
% \end{prop}

% \begin{proof}
% When the conditional independence holds, \ie $p(x,x^+|y)=p(x|y)p(x^+|y)$, we have $p(x|y)$  
% \end{proof}

\section{Generalized Guarantees under Weak Alignment}
\label{sec:appendix-weak-alignment}

In Section \ref{sec:intra-class-connectivity}, we have shown that with perfect alignment (Assumption \ref{assumption:perfect-alignment}), the variance terms in the bounds of Theorem \ref{thm:general-generalization-gap} can be minimized to zero, and consequently, the upper and lower bounds can be asymptotically closed. Nevertheless, in practice, due to the constraint of hypothesis class $\gF$ and optimization algorithms, we typically cannot achieve the exact minimizer, \ie a perfect degree of alignment. This motivates us to consider a less restrictive setting, namely the $\varepsilon$-weak alignment assumption, where the alignment error could be as large as $\varepsilon$. 
\begin{definition}[Weak Alignment]
A mapping $f$ satisfies $\varepsilon$-weak alignment if $\forall\ x,x^+\sim p(x,x^+), \Vert f(x)-f(x^+)\Vert\leq\varepsilon$.
% of the InfoNCE loss, we can achieve perfect alignment, \ie 
% $\forall\ x,x^+\sim p(x,x^+), f^\star(x)=f^\star(x^+)$.
\label{definition:weak-alignment}
\end{definition}
For any $\varepsilon$-weak alignment $f\in\gF$, we have the following bounds on its downstream risk.
\begin{theorem}[Guarantees under weak alignment] If Assumption \ref{assumption:label-consistency}, \ref{assumption:intra-class connectivity} hold, then $\forall f\in\gF$ satisfying $\varepsilon$-weak alignment, its classification risk can be upper and lower bounded by its contrastive risk as
\begin{equation}
\begin{aligned}
&\gL_{\rm NCE}(f)-D\varepsilon-\frac{1}{2}D^2\varepsilon^2-\gO\left(M^{-1/2}\right)\\
\leq&\gL_{\rm CE}^{\mu}({f})+\log(M/K)
\leq\gL_{\rm NCE}(f) + {D}\varepsilon + \gO\left(M^{-1/2}\right),
\end{aligned}
\end{equation}
where $D$ denotes the maximal diameter of the intra-class augmentation graphs $\{\gG_k,k=1,\dots,K\}$ and $m$ denotes the output dimension of the encoder $f$.
\label{thm:weak-generalization-gap}
\end{theorem}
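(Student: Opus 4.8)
The plan is to reduce Theorem~\ref{thm:weak-generalization-gap} to Theorem~\ref{thm:general-generalization-gap}: the latter already holds for every $f\in\gF$ (it needs only Assumption~\ref{assumption:label-consistency}) and is phrased in terms of the conditional variance $\var(f(x)\mid y)$, so it suffices to replace that variance by an explicit bound in $\varepsilon$ and $D$. Concretely, the core of the proof is a quantitative version of Proposition~\ref{prop:variance-vanish}: under Assumptions~\ref{assumption:label-consistency} and \ref{assumption:intra-class connectivity} together with $\varepsilon$-weak alignment,
\[
\var(f(x)\mid y)\ \le\ D^2\varepsilon^2 .
\]
Given this, plugging $\sqrt{\var(f(x)\mid y)}\le D\varepsilon$ and $\tfrac12\var(f(x)\mid y)\le\tfrac12 D^2\varepsilon^2$ into the upper and lower bounds of Theorem~\ref{thm:general-generalization-gap}, and keeping the $\log(M/K)$ offset and the $\gO(M^{-1/2})$ Monte-Carlo term verbatim, produces exactly the two claimed inequalities, so the whole argument rests on the displayed variance bound.

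To prove that bound I would argue in three steps. \textbf{Step 1 (per-edge estimate).} If $x_i,x_j\in\gD_u$ are $\gT$-connected, choose $t_i,t_j\in\gT$ with $t_i(x_i)=t_j(x_j)=:x^\ast$; then $(x_i,x^\ast)$ and $(x_j,x^\ast)$ both lie in $\supp(p(x,x^+))$, so $\varepsilon$-weak alignment (Definition~\ref{definition:weak-alignment}) and the triangle inequality give $\Vert f(x_i)-f(x_j)\Vert\le\Vert f(x_i)-f(x^\ast)\Vert+\Vert f(x^\ast)-f(x_j)\Vert\le 2\varepsilon$, i.e. traversing one edge of the augmentation graph moves the representation by $\gO(\varepsilon)$. \textbf{Step 2 (chaining).} Fix a class $k$. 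By Assumption~\ref{assumption:intra-class connectivity} the subgraph $\gG_k$ is connected, so any two intra-class samples $x,x'$ are joined by a path of length at most $D$; applying Step~1 to each edge along such a path and using the triangle inequality yields $\Vert f(x)-f(x')\Vert=\gO(D\varepsilon)$, so all class-$k$ representations lie in a ball of radius $\gO(D\varepsilon)$. \textbf{Step 3 (distances to variance).} Using the identity $\var(f(x)\mid y)=\tfrac12\,\E_{p(y)}\E_{p(x\mid y)}\E_{p(x'\mid y)}\Vert f(x)-f(x')\Vert^2$ (equivalently, $\Vert f(x)-\mu_y\Vert=\Vert\E_{p(x'\mid y)}[f(x)-f(x')]\Vert\le\E_{p(x'\mid y)}\Vert f(x)-f(x')\Vert$ by Jensen), the Step~2 bound on pairwise distances immediately upgrades to $\var(f(x)\mid y)\le D^2\varepsilon^2$, up to a numerical constant absorbed into the statement's convention for the diameter.

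I expect Step~2 to be the main obstacle, in the sense that it is where Assumption~\ref{assumption:intra-class connectivity} must be used in its strong form: the alignment error must accumulate only \emph{linearly} along the connecting path and be controlled by the finite diameter $D$, not by the dataset size $N$, so the existence of \emph{short} connecting paths (not merely connectivity) is essential. A secondary technical point is that $\var(f(x)\mid y)$ is a population quantity over $p(x\mid y)$ whereas the augmentation graph is built on the finite sample $\gD_u$; as implicitly done throughout Section~\ref{sec:intra-class-connectivity}, one assumes the vertices of $\gG_k$ cover (or are $\gT$-connected to) $\supp(p(x\mid y))$, so that the per-class radius bound transfers to the population expectation. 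Tracking the exact constant linking "edges along a path" to the coefficient of $\varepsilon$ is then routine once the linear-accumulation structure is fixed.
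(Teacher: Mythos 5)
Your proposal is correct and matches the paper's argument: the paper likewise reduces Theorem~\ref{thm:weak-generalization-gap} to Theorem~\ref{thm:general-generalization-gap} by proving $\var(f(x)\mid y)\le D^2\varepsilon^2$ via pairwise distances along paths in the intra-class augmentation graphs, then substituting $\sqrt{\var}\le D\varepsilon$ and $\tfrac12\var\le\tfrac12 D^2\varepsilon^2$ into the two bounds. The one place where you are actually more careful than the paper is your Step~1: you correctly note that a single $\gT$-edge between $x_i$ and $x_j$ passes through a shared augmented view $x^\ast$, so weak alignment is invoked \emph{twice} and the per-edge cost is $2\varepsilon$, giving $2D\varepsilon$ overall; the paper's proof writes the chaining bound in one stroke as $\Vert f(x)-f(x')\Vert\le D\sup\Vert f(x)-f(x^+)\Vert\le D\varepsilon$, which silently absorbs that factor of $2$ (equivalently, treats the diameter as counting alignment hops through the augmented views rather than edges of $\gG_k$ as defined). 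Your Step~3 also uses the clean $U$-statistic identity $\var(f(x)\mid y)=\tfrac12\E\E\Vert f(x)-f(x')\Vert^2$ rather than the Jensen chain $\Vert f(x)-\E_{x'}f(x')\Vert^2\le\E_{x'}\Vert f(x)-f(x')\Vert^2\le\max\Vert f(x)-f(x')\Vert^2$ used in the paper; both give the same order and the choice is cosmetic. Finally, your remark that the graph $\gG_k$ is built on the finite sample $\gD_u$ while $\var(f(x)\mid y)$ is a population quantity is a legitimate point the paper leaves implicit, and stating the required coverage assumption, as you do, is the right way to close that gap.
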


In this way, we extend the guarantees developed for optimal encoders (Theorem \ref{thm:closed-generalization-gap}) to even non-minimizers $f\in\gF$ as long as it could align the positive samples within error $\varepsilon$.

\begin{figure}[h]
    \centering
    \subfigure[Diameter $D$ \vs augmentation strength $r$.]{
    % \label{Fig.fig.1}
    \includegraphics[width=0.4\textwidth]{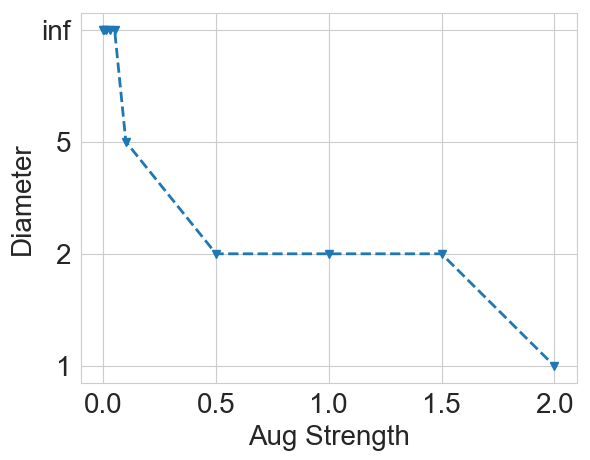}}
    \subfigure[Diameter $D$ \vs the number of samples.]{
    % \label{Fig.fig.1}
    \includegraphics[width=0.55\textwidth]{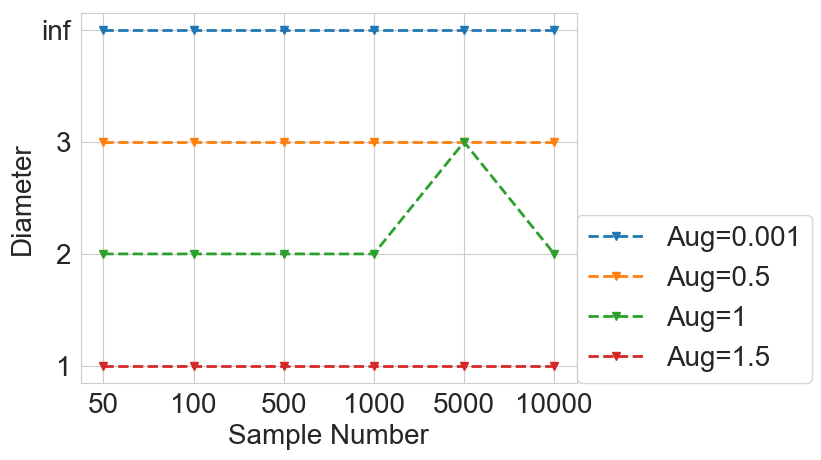}}
    % \includegraphics[width=0.3\textwidth]{gaussian_graph/simclr -cifar100.png}}
    % \subfigure[cifar100 trained with BYOL]{
    % % \label{Fig.fig.1}
    % \includegraphics[width=0.3\textwidth]{gaussian_graph/BYOL -cifar100.png}}
    % \subfigure[SimCLR training process with optimal augmentation strength]{
    % % \label{Fig.fig.1}
    \caption{{Evaluation of the maximal diameter $D$ as a function of different augmentation strength (a) and different number of samples (b) on the synthetic data in Section \ref{sec:random-augmentation}.}}
    \label{fig:connected-components}
\end{figure}

% y could align the positive samples up to precision $\varepsilon$. 

\textbf{Empirical Verification.} Besides the alignment error, we could notice this relaxation also introduces the dependence on an additional parameter $D$, the maximal diameter of the intra-class augmentation graphs. As shown in Figure \ref{fig:connected-components}, when the augmentation is very weak, the intra-class graph is not connected and the diameter is $\infty$. Then, by applying stronger augmentations, $D$ will become smaller and smaller, and finally converge to $1$ (fully connected). Besides, increasing the number of samples, ranging from $50$ to $10,000$, does not have a large impact on $D$ in practice. Given these facts, we could reasonably assume that $D$ is bounded and has a relatively small value with properly chosen augmentations. As a result, with a bounded diameter $D$, a small alignment error $\varepsilon$ will guarantee a small generalization gap between the upstream and downstream tasks. This generalizes Theorem \ref{thm:closed-generalization-gap} by quantifying the generalization gap under weak alignment.

\begin{proof} 
% With $\varepsilon$-weak alignment and triangle inequalit. 
Consider any pair of samples $(x,x')$ from the same class $y$, and the positive sample of $x$ as $x^+$. As intra-class connectivity holds, $x$ and $x'$ are connected, and the maximal length of the path from $x$ to $x'$ is $D$. Therefore, under the $\varepsilon$-weak alignment that 
\begin{equation}
 \forall x,x^+\sim p(x,x^+), \|f(x)-f(x^+)\|\leq\varepsilon,
\end{equation}
we can bound the representation distance between $x$ and $x'$ by the triangular inequality
% we can bound 
% Because the maxi the positive because the ma we have
\begin{equation}
\Vert f(x) - f(x') \Vert {\leq} D\sup_{p(x,x^+|y\sim p(x,x^+)} \Vert f(x) - f(x_+)\Vert {\leq} D\varepsilon. 
\label{eq:any-pair-bound}
\end{equation}
% (1) follows the triangle inequality and (2) follows the definition of $\varepsilon$. 
With the inequality above, we can bound the variance terms in Theorem \ref{thm:general-generalization-gap}. In particular, the conditional variance can be bounded as
\begin{equation}
\begin{aligned}
&\Var(f(x) \mid y)\\
{=}& \E_{p(y)}\E_{p(x|y)}\Vert f(x) - \E_{x'}f(x')\Vert^2\\
{=}& \E_{p(y)}{\E_{p(x|y)}\Vert\E_{x'}f(x) - f(x')\Vert^2}\\
{\leq}& \E_{p(y)}{\E_{p(x|y)}\E_{p(x'|y)}\Vert f(x) - f(x')\Vert^2}\\
{\leq}& \E_{p(y)}\max_{x,x'\sim p(x|y)} \Vert f(x) - f(x')\Vert^2\\
\stackrel{(1)}{\leq}& \E_{p(y)}{D^2\varepsilon^2}=D^2\varepsilon^2
\end{aligned}
\label{eq:var-bound}
\end{equation}
where (1) follows Eq.~\ref{eq:any-pair-bound}. 
% Similarly, we can also bound the element-wise variance terms,
% \begin{equation}
% \begin{aligned}
% \sum_{j=1}^{m}{\Var(f_j(x) \mid y)}
% % \stackrel{(1)}{\leq} {m\sum_{j=1}^m\var(f_j(x)|y)}
% \stackrel{(2)}{=}{\var(f(x)|y)} 
% \stackrel{(3)}{\leq}{D^2\varepsilon^2}, 
% \end{aligned}
% \label{eq:element-var-bound}
% \end{equation}
% % where (1) employs the Cauchy–Schwarz inequality, (2) holds because
% and (3) follows Eq.~\ref{eq:var-bound}. 
At last, we can bound the variance items in Theorem \ref{thm:general-generalization-gap} with Eq.~\ref{eq:var-bound}, arrive at the desired bounds
\begin{align*}
&\gL_{\rm NCE}(f)-D\varepsilon-\frac{1}{2}D^2\varepsilon^2-\gO\left(M^{-1/2}\right)\\
\leq&\gL_{\rm CE}^{\mu}({f})+\log(M/K)
\leq\gL_{\rm NCE}(f) + {D}\varepsilon + \gO\left(M^{-1/2}\right),
\end{align*}    
which conclude our proof.
\end{proof}

\section{Additional Empirical Evidence}

\subsection{Further Evaluation of ARC Metric}
\label{sec:evaluation-augmentation}

% \textbf{Synthetic Data.}
% As $r=3$ may be too large for our experinmet in Section 5.1. So we complement the experiment results(Figure \ref{fig:tsne:3-1.5}). We add experiment that $r=0.5$,  $r=1.1$ and  $r=1.5$.

% \textbf{Real-world data.}
In the main text, we study the effect of different strength of RandomResizedCrop on the downstream accuracy as our proposed metrics (ACR and ARC), which help verify our theory. Nevertheless, in practice, the augmentations adopted in contrastive learning is composed of a list of different kinds of augmentations. Therefore, in this part, we further study the effect of other types of data augmentations, and we show that our ARC metric is also effective for evaluating not only other kinds of data augmentations, but also their composed ones. 
% as well as the composed ones from the perspective of our proposed metrics. 

% \begin{wrapfigure}{r}{.35\textwidth}
\begin{figure}[h]
    \centering
    % \vspace{2in}
    % % \label{Fig.fig.1}
    \includegraphics[width=0.35\textwidth]{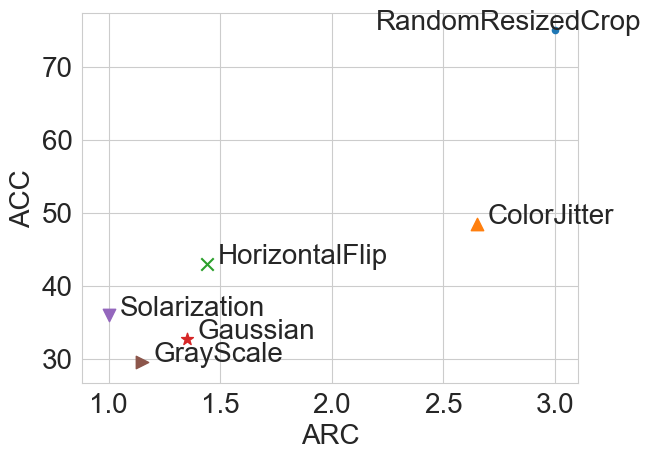}
    % \subfigure[SimCLR training process with optimal augmentation strength]{
    % % \label{Fig.fig.1}
    \caption{{Downstream accuracy (ACC) \vs Average Relative Confusion (ARC) for different types of augmentations in SimCLR on CIFAR-10.}}
    \label{fig:different-kinds-augmentation}
\end{figure}

\textbf{Comparing different kinds of augmentations.} We begin by comparing the four kinds of data augmentations adopted in SimCLR \citep{simclr}: RandomResizedCrop, ColorJitter, Grayscale, etc. For a fair comparison, we apply each one \emph{alone} for contrastive learning, and evaluate both the downstream accuracy and ARC. From Figure \ref{fig:ACC-ARC}, we can conclude that among the six kinds of augmentations, RandomResizedCrop is the most important augmentation, and  ColorJitter is the second. The rest of them are less powerful, as they cannot even learn useful features by themselves. We can also see that our ARC metric aligns well with the downstream accuracy for different kinds of augmentations.

\begin{figure}[h]
    \centering
    \subfigure[Brightness.]{
    % \label{Fig.fig.1}
    \includegraphics[width=0.35\textwidth]{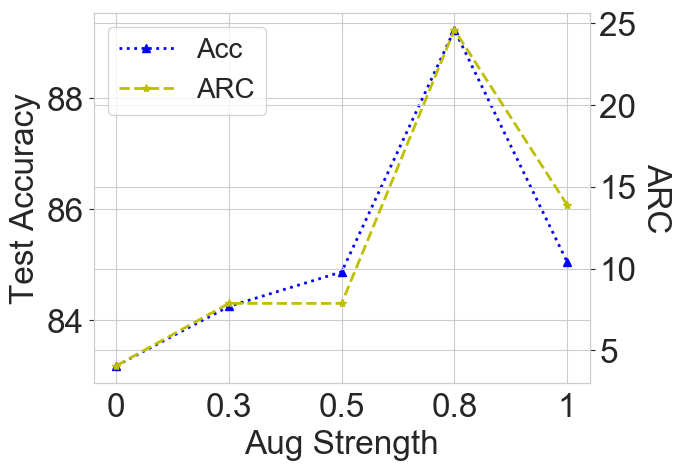}}
    \subfigure[Saturation. ]{
    % \label{Fig.fig.1}
    \includegraphics[width=0.35\textwidth]{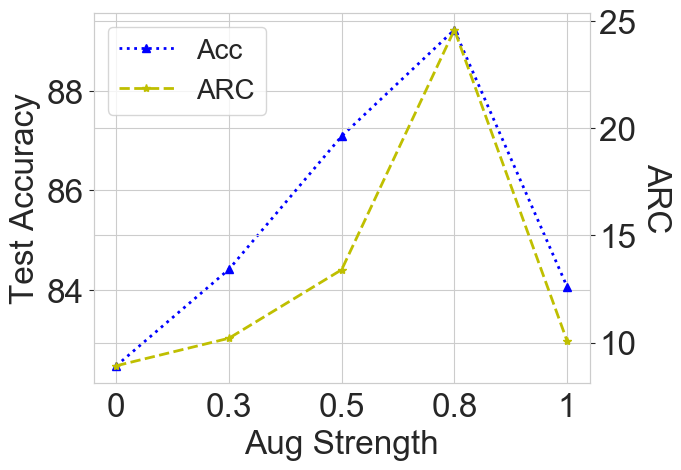}}
    \subfigure[Contrast.]{
    % \label{Fig.fig.1}
    \includegraphics[width=0.35\textwidth]{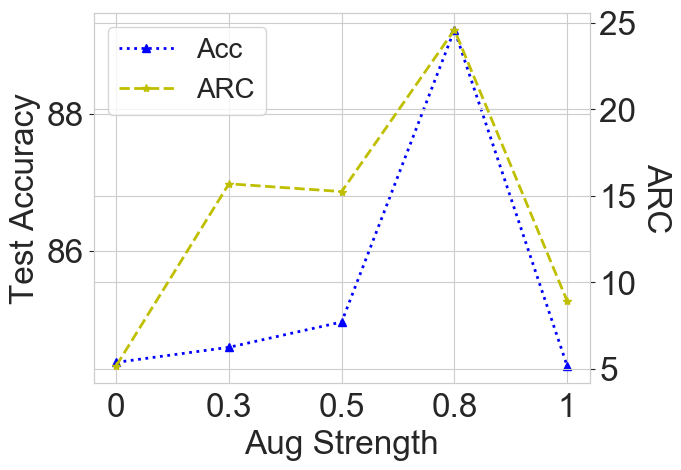}}
    \subfigure[Hue.]{
    % \label{Fig.fig.1}
    \includegraphics[width=0.35\textwidth]{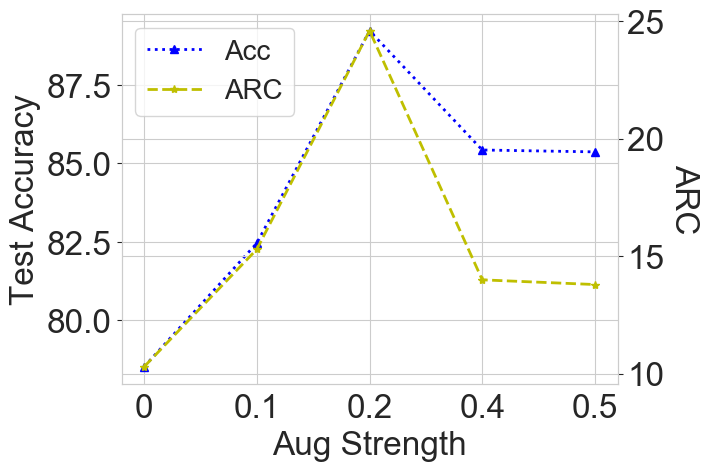}}
    % \includegraphics[width=0.3\textwidth]{gaussian_graph/simclr -cifar100.png}}
    % \subfigure[cifar100 trained with BYOL]{
    % % \label{Fig.fig.1}
    % \includegraphics[width=0.3\textwidth]{gaussian_graph/BYOL -cifar100.png}}
    % \subfigure[SimCLR training process with optimal augmentation strength]{
    % % \label{Fig.fig.1}
    \caption{{Average Relative Confusion (ARC) \vs downstream accuracy with different augmentation strength on four different kinds of color jittering operations.}}
    \label{fig:colorjiter}
\end{figure}

\textbf{Comparing ColorJitter with different strength.} Based on the observation above, as we have discussed RandomResizedCrop in Section \ref{sec:surrogate-metrics}, we now choose ColorJitter, the second important augmentation, as another kind of augmentation for the study of different augmentation strength. Specifically, we study the four parameters of brightness, contrast, saturation, and hue, where a large value corresponds a large degree of augmentation. Note that we also adopt the default augmentations in SimCLR while only changing the parameters of ColorJitter (different to the setup in Figure \ref{fig:different-kinds-augmentation}). As shown in Figure \ref{fig:colorjiter}, there is also a reverse-U curve like that in RandomResizedCrop, and the sweet spot is usually achieved with 0.8, which corresponds to the default of choice in SimCLR (which is selected with exhausted hyperparameter search). Meanwhile, our ARC metric still aligns well with the downstream accuracy for different strength of different kinds of color jittering, which demonstrates its wide applicability.

\begin{figure}[h]
    \centering
    % \subfigure[ARC and ACC corelation(before log)L]{
    % % \label{Fig.fig.1}
    % \includegraphics[width=0.4\textwidth]{}}
    % \subfigure[ARC and ACC corelation(after log) ]{
    % % \label{Fig.fig.1}
    \includegraphics[width=0.35\textwidth]{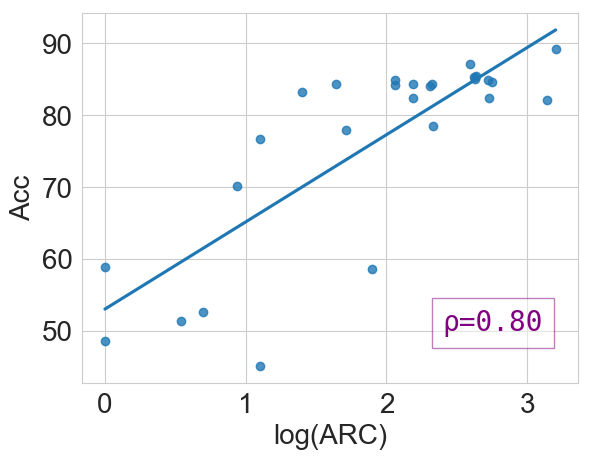}
    % \subfigure[SimCLR training process with optimal augmentation strength]{
    % % \label{Fig.fig.1}
    \caption{{Downstream accuracy (ACC) \vs the logarithm of Average Relative Confusion (ARC) on a composition of  RandomResizedCrop and ColorJitter with different strength. Experiments are conducted on CIFAR-10 with SimCLR.}}
    \label{fig:ACC-ARC}
    % \vspace{-0.5in}
\end{figure}

\textbf{Comparing composed augmentations.} In the above discussion, we focus on the effect of a single kind of augmentations. Here, we show that our ARC metric is still effective for evaluating the composition of different augmentations. Notably, it is hard to define a metric of augmentation strength in this case, as the effect of different augmentations could be nested. Nevertheless, we can still draw a ``ACC - log(ARC)'' plot to show the correlation between the downstream accuracy (ACC) and our ARC metric, where each point denotes a model trained with randomly selected parameters of RandomResizedCrop \emph{and} ColorJitter. As shown in Figure \ref{fig:ACC-ARC}, we can see there is indeed a strong correlation between the two metrics, with a Pearson correlation coefficient $\rho=0.80$. Therefore, our metric can be used for selecting different kinds of augmentations as well as their compositions in an unsupervised fashion.

\subsection{Visualization of Augmentation Graph} 
\label{sec:visualization-augmentation-graph}

For a more intuitive and practical understanding of our augmentation overlap theory developed in Section \ref{sec:overlap-theory}, we visualize of the augmentation graphs on both synthetic data (Section \ref{sec:random-augmentation}) and real-world data (Section \ref{sec:surrogate-metrics}). 
% yield nearly optimal performance and get close to 

\begin{figure}[h]
% {0.5\textwidth}
    \centering
    \subfigure[$r=0, \text{acc}=0.50.$ ]{
    % \label{Fig.fig.1}
    \includegraphics[width=0.23\textwidth]{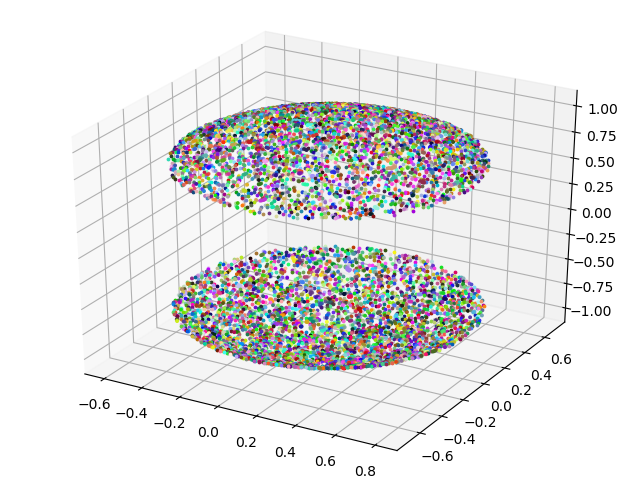}}
    \hfill
    \subfigure[$r=e^{-3},\text{acc}=0.78$. ]{
    % \label{Fig.fig.1}
    \includegraphics[width=0.23\textwidth]{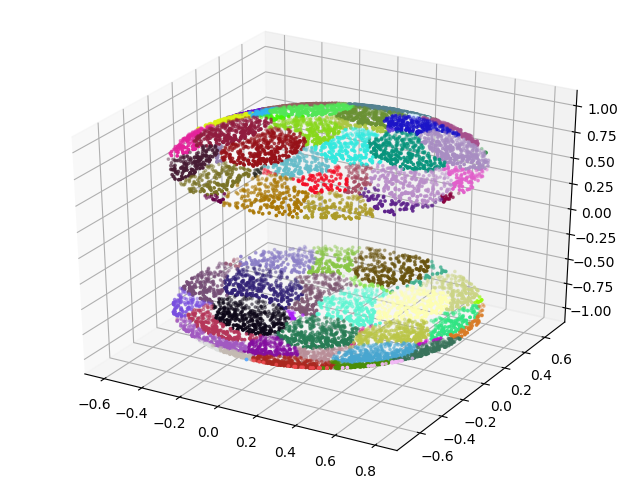}}
    \hfill
    \subfigure[$r=0.1,\text{acc}=1.00$.]{
    % \label{Fig.fig.1}
    \includegraphics[width=0.23\textwidth]{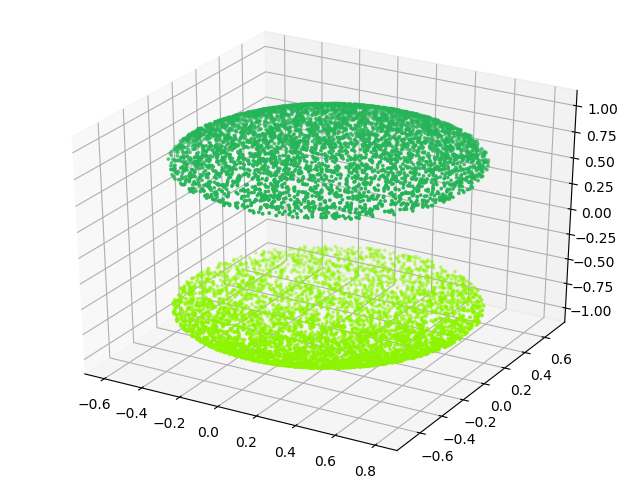}}
    \hfill
    \subfigure[$r=1.5,\text{acc}=0.50$.]{
    % \label{Fig.fig.1}
    \includegraphics[width=0.23\textwidth]{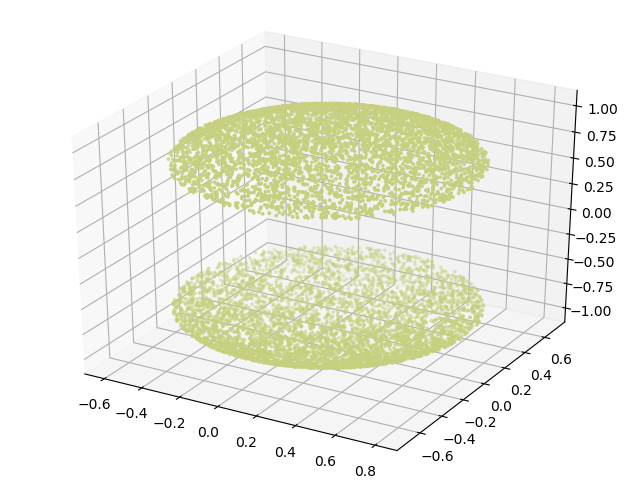}}
    % \includegraphics[width=0.3\textwidth]{gaussian_graph/simclr -cifar100.png}}
    % \subfigure[cifar100 trained with BYOL]{
    % % \label{Fig.fig.1}
    % \includegraphics[width=0.3\textwidth]{gaussian_graph/BYOL -cifar100.png}}
    % \subfigure[SimCLR training process with optimal augmentation strength]{
    % % \label{Fig.fig.1}
    \caption{{Visualization of the augmentation graph with different augmentation strength $r$ on the synthetic data described in Section \ref{sec:random-augmentation}. Each color denotes a connected component. The corresponding t-SNE visualization and test accuracy (of contrastive learning) can be found in Figure \ref{fig:random-tsne}.}}
    \label{fig:synthetic-augmentation-graph}
\end{figure}
\textbf{Synthetic data.} 
% In addition to our augmentation overlap theory in Section \ref{sec:random-augmentation} and Appendix \ref{sec:theoretical-random-graph}, we calculate connected components both in our toy model experiment and real-world datasets to observe whether it's consistent with our theory. 
Following the setting of experiments in Section \ref{sec:random-augmentation}, we construct the adjacent matrix of different samples, calculate its connected components, and visualize it in Figure \ref{fig:synthetic-augmentation-graph} with different colors. It shows that when there is no augmentation, \ie $r=0$, each sample is a connected component alone, and the number of connected components is the same as the number of samples $N$. As we increase the augmentation strength, samples will be connected together through the augmented views. In particular, when $r=0.1$, the whole intra-class samples are connected while inter-class samples are separated, which exactly satisfy our assumptions on intra-class connectivity and label consistency, respectively. Therefore, this is the perfect overlap as desired, and indeed, as shown in Figure \ref{sec:random-augmentation}, contrastive learning on it obtains 100\% test accuracy. When we keep increasing the augmentation strength to be as large as $1.5$, inter-class samples also become connected and inseparable, leading to a random guess in test accuracy (50\%). This shows that the relationship between the augmentation graph and the downstream performance aligns well with our augmentation overlap theory.

\begin{figure}[h]
    \centering
    % \captionsetup{justification=centering}    
    \subfigure[Under-overlap augmentation graph (r=0.01, {acc}=0.25).]{
    % \label{Fig.fig.1}
    \includegraphics[width=0.32\textwidth]{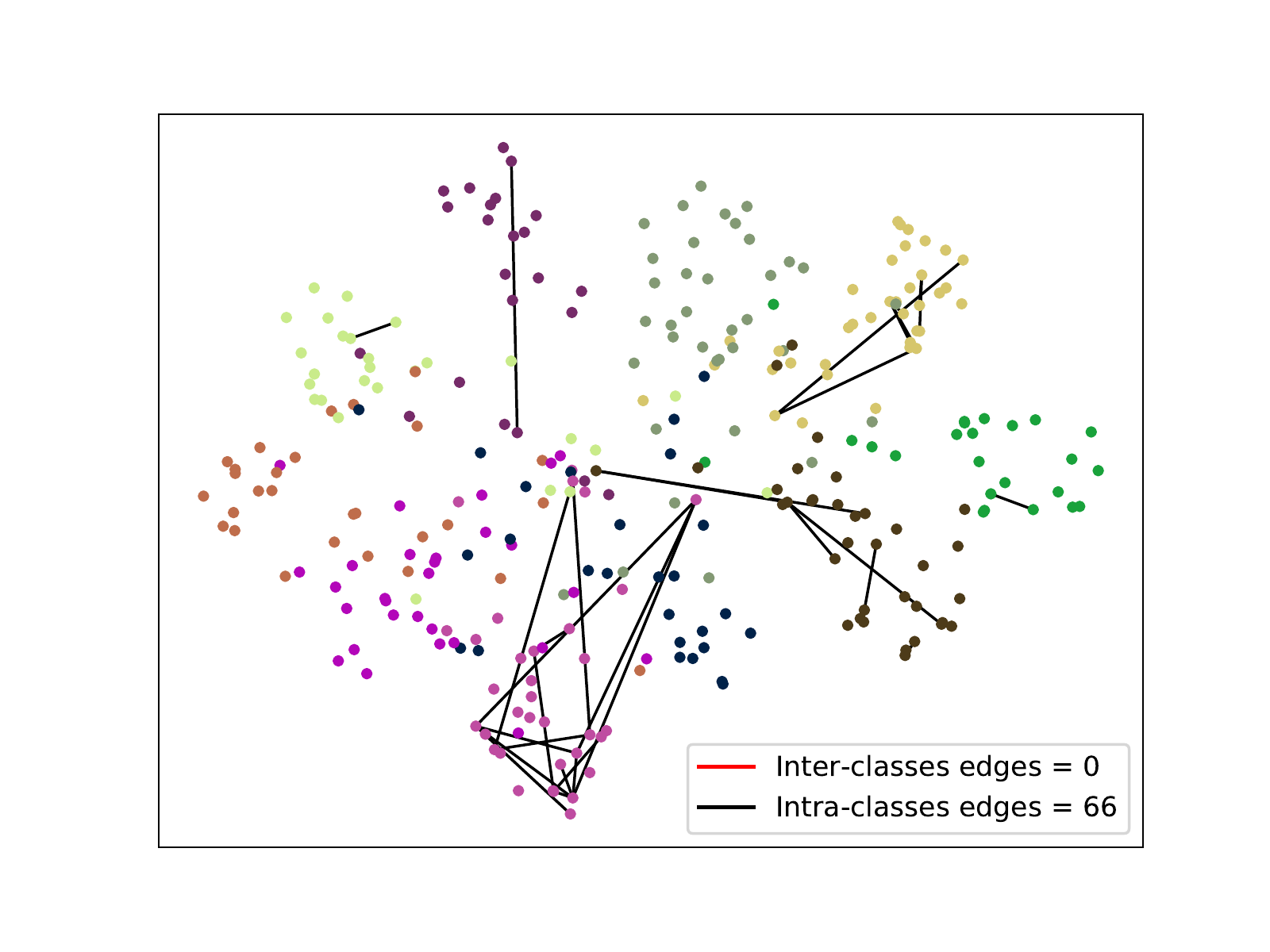}}
    \hfill
    \subfigure[Proper overlap augmentation graph (r=0.92, {acc}=0.75).]{
    % \label{Fig.fig.1}
    \hfill
    \includegraphics[width=0.32\textwidth]{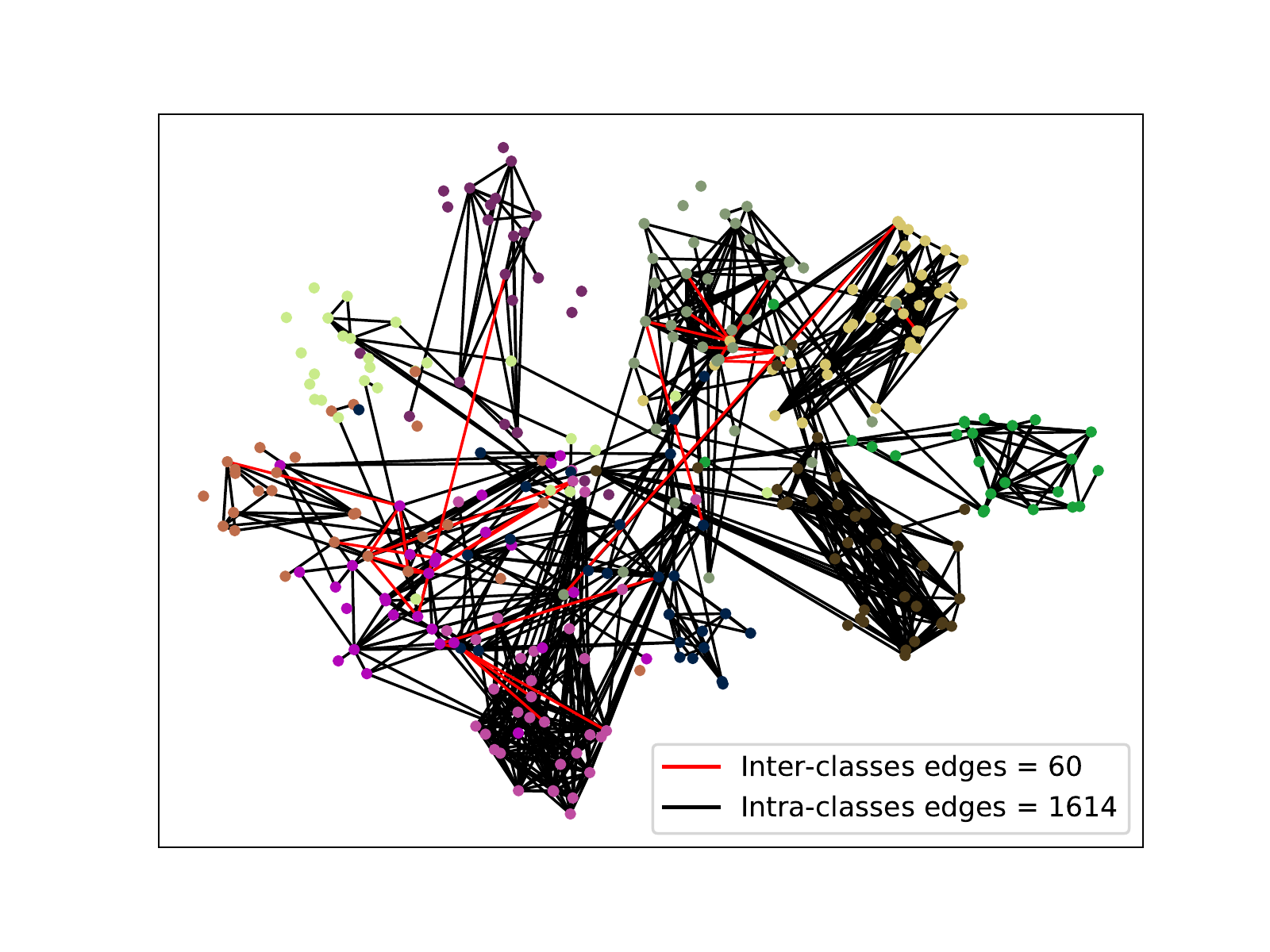}}
    \subfigure[Over-overlap augmentation graph (r=1.96, {acc}=0.29).]{
    % \label{Fig.fig.1}
    \includegraphics[width=0.32\textwidth]{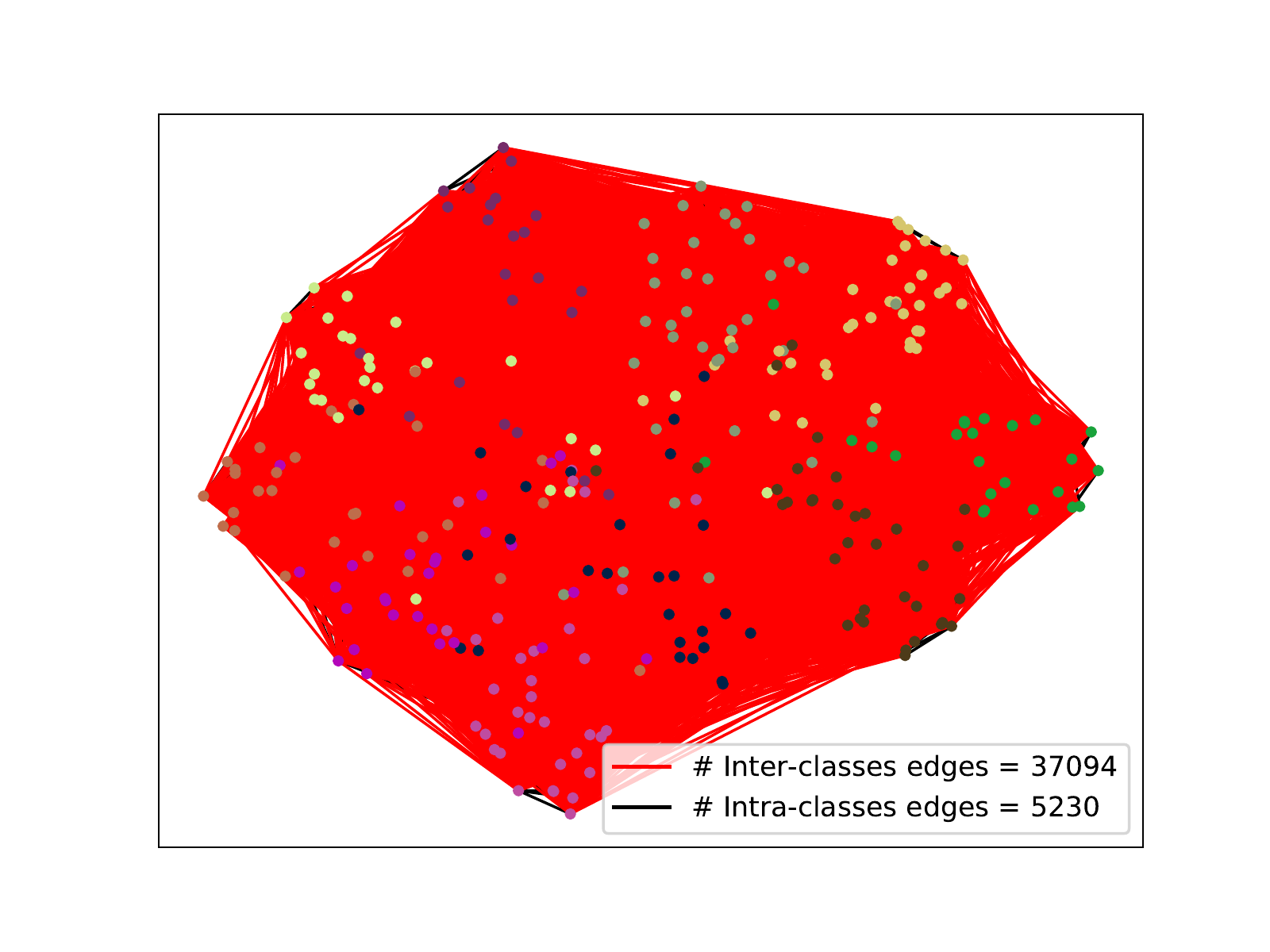}}
    % \subfigure[SimCLR training process with optimal augmentation strength]{
    % % \label{Fig.fig.1}
    \caption{The augmentation graph of CIFAR-10 with different strength $r$ of RandomResizedCrop as in Section \ref{sec:surrogate-metrics}. We choose a random subset of test images, randomly augment each one for 20 times. Then, we calculate the sample distance in the representation space as in prior work like FID \citep{FID}, and draw edges for image pairs whose smallest view distance is below a small threshold. Afterwards, we visualize the samples with t-SNE and color intra-class edges in \textbf{black} and inter-class edges in \textbf{\textcolor{red}{red}} and report their frequencies.
    % The downstream classification accuracy of them is (a) 24.69\% (b) acc = 75.01 (c) acc = 28.78.
    }
    \label{fig:cifar-10-augmentation-graph}
\end{figure}

\textbf{Real-world data.} For the ease of analysis, our augmentation overlap theory adopts a simplified scenario by assuming label consistency (Assumption \ref{assumption:label-consistency}) and intra-class connectivity (Assumption \ref{assumption:intra-class connectivity}), and we have verified their feasibility on the synthetic data. In comparison, these assumptions cannot hold exactly on real-world data as the chosen augmentations could be sub-optimal. Nevertheless, as shown in the augmentation graphs of CIFAR-10 (Figure \ref{fig:cifar-10-augmentation-graph}), our assumptions could still approximately hold: with a properly chosen augmentation strength, the inter-class connections will be much less frequent than intra-class connections: 96.4\% edges are intra-class edges. Further considering the continuity property and extrapolation ability of deep neural networks, these approximate conditions could still achieve close performance to the optimal performance guaranteed under the exact conditions. Besides, we also have similar conclusions for the under-overlap and over-overlap scenarios: 1) the lack of enough augmentations produces only a few edges in the augmentation graph, as a result, even though all edges are intra-class edges, the downstream performance is still poor (25\% test accuracy); 2) too strong augmentations instead produce too many inter-class edges (87.6\%), which laso leads to poor downstream accuracy (29\%). This highlights that our assumptions on label consistency and intra-class connectivity are indeed effective guidelines for the designing of contrastive methods. 
% In Figure \ref{fig:visualization toy model}, we visualize the relationship between connected components and augmentation strength in our toy model (Section \ref{sec:random-augmentation}). We can clearly observe the changing process with the enhance of augmentation strength. 
% In Figure \ref{fig:visulization mnist}, we visualize augmentation graph in MNIST. We can find that the edges between nodes from the same class are much more frequent than that from that between nodes from different classes. However, as the real-world data is much more complex and L2 distance doesn't work well with pictures, it's hard to achieve perfect overlap.  
}

\section{Theoretical Characterization of Augmentation Strength}
\label{sec:theoretical-random-graph}

Following the setting in Section \ref{sec:random-augmentation}, we can take the radius $r$ as a notation of augmentation strength, and analyze its effect on the connectivity of the corresponding augmentation graph. 

\begin{theorem}
For $N$ random samples taken from a class, while gradually increasing the augmentation strength $r$, we have the following results.
\begin{enumerate}[label=(\alph*)]
    \item \textbf{Under-overlap.} When $0\leq r\leq r_1=\frac{[(d/2)!]^{\frac{1}{d}}}{\sqrt{\pi}}{(\frac{1}{d})!}(\frac{S}{N-1})^{\frac{1}{d}}[1-\frac{1/d+1/d^2}{2(N-1)}+O(\frac{1}{(N-1)^2})]$, where $r_1$ is the minimal distance between $N$ samples, all samples (vertices) in the augmentation graph will be isolated. As a result, the learned features could be totally random as in Proposition \ref{prop:wang-couterexample}. Instead, if $r\geq r_1$, there are at least two intra-class samples are $\gT$-connected and enjoy the same representation.
    \item \textbf{Perfect overlap.} When $r\geq r_2=\frac{[(d/2)!]^{\frac{1}{d}}}{\sqrt{\pi}}\frac{(N-2+1/d)!}{(N-2)!}(\frac{S}{N-1})^{\frac{1}{d}}[1-\frac{1/d+1/d^2}{2(N-1)}+O(\frac{1}{(N-1)^2})]$, where $r_2$ is the maximal distance between $N$ samples, all samples in the augmentation graph will be $\gT$-connected. As a result, the classwise connectivity in Assumption \ref{assumption:intra-class connectivity} will be guaranteed.
    \item \textbf{Over-overlap.} When $0\leq r<r_3=\frac{1}{2}\min_{i,j}\Vert c_i-c_j\Vert-1$, where $r_3$ is the (asymptotic) minimal distance between samples from different classes, the label consistency is guaranteed. Otherwise, when the augmentation is too large, \eg $r>r_3$, there will be inter-class augmentation overlap and Assumption \ref{assumption:label-consistency} not longer holds.
\end{enumerate}
\label{theorem:augmentation-strength}
\end{theorem}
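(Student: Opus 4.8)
The plan is to reduce all three parts to elementary facts about a random geometric graph on the per-class point sets, and then evaluate two order-statistic expectations in closed form. First, $x_i$ and $x_j$ are $\gT$-connected iff their augmentation supports overlap, and since each support is a (geodesic) disk of radius $r$ this holds iff $\mathrm{dist}(x_i,x_j)\le 2r$ (triangle inequality for ``only if'', the midpoint for ``if''). So $\gG(\gD_u,\gT)$ is exactly the random geometric graph on $\{x_i\}$ with radius $2r$, and the three claims become: (a) $\gG$ is edgeless iff $2r$ lies below the nearest-neighbour distance of every sample — so the features are unconstrained as in Proposition~\ref{prop:wang-couterexample} — with a formula for the expected threshold $r_1$, and an edge (hence, under perfect alignment, a shared representation, by the argument of Proposition~\ref{prop:variance-vanish}) appears at the closest pair once $2r$ reaches it; (b) $\gG_k$ is connected once some class-$k$ sample is within $2r$ of all the others (a spanning star), with a formula for the expected ``reach'' $r_2$; (c) there is no inter-class edge iff $2r$ lies below the minimal inter-class distance, which is pinned down by the class centers as $N\to\infty$.

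For (a), model the $N$ intra-class samples as i.i.d.\ uniform in a region of $d$-volume $S$, so each other sample lies within distance $\rho$ of a fixed one with probability $c_d\rho^d/S$, where $c_d=\pi^{d/2}/\Gamma(1+d/2)$ is the unit $d$-ball volume; the combinatorial half of the claim is then immediate. For the numerical value, $\E[\text{NN dist}]=\int_0^{(S/c_d)^{1/d}}(1-c_d\rho^d/S)^{N-1}\,d\rho$, and the substitution $u=c_d\rho^d/S$ turns this into a Beta integral equal to $(1/d)!\,\frac{[(d/2)!]^{1/d}}{\sqrt{\pi}}\,S^{1/d}\,\Gamma(N)/\Gamma(N+1/d)$; expanding $\Gamma(N)/\Gamma(N+1/d)=(N-1)^{-1/d}\bigl[1-\tfrac{1/d+1/d^2}{2(N-1)}+O((N-1)^{-2})\bigr]$ — via $\Gamma(N)=(N-1)\Gamma(N-1)$ and $\Gamma(z+a)/\Gamma(z)=z^{a}\bigl(1+\tfrac{a(a-1)}{2z}+\cdots\bigr)$ at $z=N-1,\ a=1+1/d$ — reproduces $r_1$ exactly.

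For (b), conditioning on a fixed class-$k$ sample, the farthest of the other $N-1$ lies within $\rho$ with probability $(c_d\rho^d/S)^{N-1}$, so $\E[\text{reach}]=\int_0^{(S/c_d)^{1/d}}\bigl(1-(c_d\rho^d/S)^{N-1}\bigr)\,d\rho=(S/c_d)^{1/d}\,\tfrac{N-1}{N-1+1/d}$, which rewrites with the same Gamma-ratio expansion as the stated $r_2$; once $2r$ exceeds it, every $\gG_k$ is connected, so Assumption~\ref{assumption:intra-class connectivity} holds and, with perfect alignment, Proposition~\ref{prop:variance-vanish} and Theorem~\ref{thm:closed-generalization-gap} apply. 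For (c), as $N\to\infty$ the class-$k$ samples densely fill their unit-radius region around $c_k$, so the infimum of inter-class distances tends to $\min_{i\neq j}\|c_i-c_j\|-2$; the radius-$r$ disks of an inter-class pair meet iff this is $\le 2r$, i.e.\ iff $r\ge\tfrac12\min_{i\neq j}\|c_i-c_j\|-1=r_3$, so $r<r_3$ guarantees label consistency (Assumption~\ref{assumption:label-consistency}) while $r>r_3$ lets it fail.

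\textbf{Main obstacle.} The geometric/combinatorial content is routine; the real work is the two expectations. The delicate point is that the identity ``measure of a radius-$\rho$ ball $=c_d\rho^d/S$'' has to be used well past the small-$\rho$ regime — for $r_2$ the ball covers a constant fraction of the region — so one must either take the per-class region to be a flat $d$-cell of volume $S$ (where this is exact up to the domain scale) or work on the round sphere and verify that the geodesic-cap corrections sit inside the stated $O((N-1)^{-2})$ error; beyond that, one must keep the Gamma-ratio asymptotics uniform in $d$ and line up each closed form with the precise order statistic it is meant to be (single-sample nearest-neighbour distance for $r_1$, single-sample reach for $r_2$, asymptotic inter-class gap for $r_3$), which is exactly where the loose ``minimal/maximal distance'' wording in the statement needs to be made precise.
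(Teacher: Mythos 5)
Your proposal takes essentially the same route as the paper: both reduce (a) and (b) to the expected $k$-th nearest-neighbor distance in a random geometric graph (with $k=1$ and $k=N-1$, respectively), the paper simply citing the Percus--Martin order-statistic formula where you re-derive the two needed cases via the Beta-integral and Gamma-ratio expansion. Your version is more careful on two points the paper glosses over: you correctly identify the edge threshold as $2r$ (overlap of two radius-$r$ disks) rather than $r$, and you observe that $r$ exceeding a single sample's reach yields a spanning star — which is all Assumption~\ref{assumption:intra-class connectivity} needs — whereas the paper's proof incorrectly asserts full (complete-graph) connectivity; you also supply an argument for part (c), which the paper's proof omits entirely. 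Your flagged caveat about conflating a single point's nearest-neighbor/reach distance with the global minimum/maximum pairwise distance is a real imprecision, but it is inherited from the theorem statement and is present in the paper's proof as well.
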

In the theorem above, we show that the proper augmentation strength is a function of the number of samples $N$ and the input dimensions $d$. In particular, for each $x$,  as $N$ increases, there will be more natural examples and we only need a smaller $r$ to obtain an overlap sample. Instead, as $d$ increases, due to the curse of dimensionality, there will be less samples within the same distance, thus it requires a larger $r$. 

Nevertheless, we actually only need the augmentation sub-graph $G_k$ to be connected, instead of being fully connected as in Theorem \ref{theorem:augmentation-strength} (b). While the connectivity is hard to analyze in the finite sample scenario $(N<\infty)$,  we have the following asymptotic property as $N\to\infty$.

\begin{theorem}
For $N$ uniformly distributed samples defined in $\sR^d$ as above, we denote the minimal augmentation strength needed for connectivity as a function of $N$: $c_N=\inf \{r>0:G^{(r)}_k \text{ is connected}\}$, and the minimal augmentation strength needed for avoiding isolated points as a function of $N$: $d_N = \inf \{r>0: \text{every vertex at least has a neighbour}\}$. $V_u$ is the volume of unit hyperball. Then we have the following asymptotic result:
\begin{equation}
\forall\ d\geq 2,\ \lim\limits_{N \to \infty}\left(c_N^d\frac{N^2}{\log N}\right)=\lim\limits_{N \to \infty}\left(d_N^d\frac{N^2}{\log N}\right)= 2\frac{(1-1/d)S}{V_u}. 
\end{equation}
\label{theorem:augmetantion-infinity}
\end{theorem}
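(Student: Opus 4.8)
The plan is to read $G_k^{(r)}$ as a random geometric graph: its vertices are $N$ points drawn i.i.d.\ uniformly from the sampling region around $c_k$ (a subset of $\sS^d$ of total measure $S$), and two vertices are adjacent exactly when their radius-$r$ augmentation caps overlap, i.e.\ when their geodesic distance falls below the threshold induced by $r$. In this regime the argument reduces to two facts: (A) if $r_N^d N^2/\log N \to L' < L:=2(1-1/d)S/V_u$, then with high probability $G_k^{(r_N)}$ has an isolated vertex, hence is disconnected; and (B) if $r_N^d N^2/\log N \to L'' > L$, then with high probability $G_k^{(r_N)}$ is connected. Deterministically $\{r: G_k^{(r)}\text{ connected}\}\subseteq\{r: \text{every vertex has a neighbour}\}$, so $d_N\le c_N$, and adjacency is monotone in $r$; thus Fact~A forces $\liminf$ of both $c_N^d N^2/\log N$ and $d_N^d N^2/\log N$ to be $\ge L$, while Fact~B forces the corresponding $\limsup$ to be $\le L$ (using $d_N\le c_N$ for the $d_N$ side). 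A final Borel--Cantelli step along a geometric subsequence $N_j$, interpolating between consecutive $N_j$ by monotonicity in $r$ and continuity of the rescaling, upgrades the in-probability statements to the almost-sure limits claimed.

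First I would establish Fact~A through the number $X$ of isolated vertices. A vertex $x_i$ is isolated iff none of the other $N-1$ points lies in the set $R(x_i)\subseteq\sS^d$ of sites whose augmentation cap meets that of $x_i$, so $\E[X]=N\,\E_{x_1}\!\big[(1-|R(x_1)|/S)^{N-1}\big]$. Expanding $|R(x_1)|$ via the local geometry --- a ball-volume term $V_u(\rho r)^d$ for $x_1$ in the interior, and a strictly smaller, geometry-dependent fraction of it for $x_1$ within distance $O(r)$ of the boundary of the sampling region --- and carrying out the resulting integral over $x_1$, one finds that the boundary layer dominates and supplies the factor $(1-1/d)$, the two-cap overlap condition the factor $2$, so that $\E[X]\to\infty$ below $L$ and $\E[X]\to 0$ above $L$. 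The lower bound is then completed by a second-moment estimate: the isolation indicators are only weakly correlated --- two vertices at distance $\gg r$ are asymptotically independent, and there are only $O(N\cdot Nr^d)=o(N^2)$ ``close'' pairs --- so $\Var(X)=o(\E[X]^2)$ and $X\ge 1$ w.h.p.\ by Chebyshev.

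The harder and decisive step is Fact~B: that above the threshold the graph is not merely free of isolated vertices but actually connected. I would bound the probability that $G_k^{(r_N)}$ is disconnected: on that event there is a connected component $C$ with $1\le|C|\le N/2$; the case $|C|=1$ is exactly $\E[X]\to 0$ from the previous paragraph, and for $2\le|C|=k\le N/2$ a union bound over the identity of $C$ gives $\sum_{k=2}^{\lfloor N/2\rfloor}\binom{N}{k}\Pr[\{x_1,\dots,x_k\}\text{ spans a component}]$. For a fixed $k$-set this probability factors into the chance that those $k$ points induce a connected subgraph (of order $(Nr_N^d)^{k-1}$ up to combinatorial constants) times the chance $(1-|N_r(\{x_1,\dots,x_k\})|/S)^{N-k}$ that every remaining point avoids the $r$-neighbourhood of the set; the worst case, a tightly clustered $k$-set, keeps that neighbourhood of volume $\Theta(r_N^d)$, so each summand is at most $N^{k}(Nr_N^d)^{k-1}N^{-\gamma}$ with $\gamma$ proportional to $L''/L>1$ and hence as large as needed, and for larger, more diffuse $k$-sets an isoperimetric estimate on the sampling region (a spread-out component is enclosed by an even larger point-free ``moat'') makes those terms negligible. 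Summing over $k$ gives $o(1)$, which is Fact~B. I expect the genuine obstacles to be (i) controlling the boundary layer sharply enough to pin the constant at $2(1-1/d)S/V_u$ rather than only its order, and (ii) making the component union bound uniform across the whole range $2\le k\le N/2$, where naive bounds degrade and one needs the isoperimetric input on the cap.
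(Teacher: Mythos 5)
Your plan diverges from the paper in one clear respect: the paper does not prove anything here at all, but simply invokes the published result of Penrose (1999), whose Theorem~1.1 gives the strong-law threshold $c_N^d N^2/\log N \to 2(1-1/d)S/V_u$ for connectivity and whose Theorem~1.2 gives the same limit for the largest nearest-neighbour link $d_N$, for points uniformly distributed on a smooth $d$-dimensional region. You instead reconstruct the argument from first principles. Your sketch --- first and second moment method on the count of isolated vertices to locate the threshold, a union bound over candidate small components of size $2\le k\le N/2$ to show connectivity above threshold, the deterministic inequality $d_N\le c_N$ to transfer between the two quantities, and a Borel--Cantelli step along a geometric subsequence with monotonicity-in-$r$ interpolation to upgrade to an almost-sure limit --- is in substance the strategy of Penrose's own proof, so you are re-deriving the cited theorem rather than taking a genuinely new route. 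The trade-off is the expected one: the citation is shorter and transfers responsibility for the delicate constants to the reference, while your version is self-contained but inherits all the technical burden. You correctly identify exactly where that burden sits: (i) extracting the sharp constant $2(1-1/d)S/V_u$ requires an honest treatment of the boundary layer (the $(1-1/d)$ factor is precisely the boundary contribution) and of the overlap convention that produces the leading $2$, and (ii) the union bound over components must be made uniform over the full range of $k$ using an isoperimetric estimate. Your outline is sound, but as written it stops short of nailing the constant, which is the entire content of the theorem as stated; the paper sidesteps this by citing Penrose, and you should either do the same or commit to carrying out the boundary-layer computation in full.
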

From the theorem we can see that $c^d_N$ decreases in the order of ${\Theta}\left(\sqrt[\leftroot{-2}\uproot{2}d]{\frac{\log N}{N^2}}\right)$ as $N\to\infty$. First, this result is aligned with the empirical finding that self-supervised learning can benefit more from large scale dataset \citep{simclrv2}. Second, it also indicate a curse of dimensionality that the required augmentation strength is exponentially large.

\subsection{Proof of Theorem \ref{theorem:augmentation-strength}}

\begin{proof}
From definition and notation in section 4. We can construct an augmentation Graph $\gG(\gD,\gT)$ given N random samples. We define $D_k$ as the distance from a random point to its k-th nearest neighbour. \cite{PERCUS1998424} discuss $D_k$ in random grpah and give the estimation of that: 
 \begin{equation}
     \begin{aligned}
     D_k \approx \frac{[(d/2)!]^{\frac{1}{d}}}{\sqrt{\pi}}\frac{(k-1+1/d)!}{(k-1)!}(\frac{S}{N-1})^{\frac{1}{d}}[1-\frac{1/d+1/d^2}{2(N-1)}+O(\frac{1}{(N-1)^2})]
     \end{aligned}
 \end{equation}
 where d is the dimension of hypersphere and N is the number of random points. When $r < D_1$ there is no edge in the graph. So the class is separated. When $r > D_{N-1}$, any pair of vertexes have an edge between them,so the graph is full connected.
 \end{proof}
 
\subsection{Proof of Theorem \ref{theorem:augmetantion-infinity}}

 \begin{proof}
Denote
 \begin{equation}
 \begin{aligned}
     c_N = \inf \{r_i>0:G_N(V,E,r_i) \text{is connected}\}.
 \end{aligned}
 \end{equation}

 With Theorem 1.1 from \cite{10.1214/aop/1022677261} and features are uniformly distributed in the surface of unit hypersphere, $V_u$ denotes to the volume of unit hypershpere
 \begin{equation}
 \begin{aligned}
 \lim\limits_{N \to \infty}(c_N^d\frac{N^2}{\log N})= 2\frac{(1-\frac{1}{d})S}{V_u}, d\geq 2
 \end{aligned}
 \end{equation}
 $\exists N_0$ when $N>N_0$, and augmentation strength is larger than $(\frac{2(d-1)S\log N}{2N^2V_u d})^{\frac{1}{d}}+\epsilon_1$, the graph is connected,i.e the class is overlapped.\\
 Then we want specify the case the class will be depart begin with some concepts in graph theory.
 The largest nearest-neighbor link: For a give edge distance x and for each i= 1,...,n,let 
 \begin{equation}
     \begin{aligned}
     \operatorname{deg}\ U_{N,j} = \sum\limits_{1\leq j \neq k\leq N} 1_\{\Arrowvert U_j - U_k \Arrowvert \leq r_i\}
     \end{aligned}
 \end{equation}
 to be the degree of the vertex $U_j$ in the random graph $G_N(V,E,r_i)$, and let
 \begin{equation}
     \begin{aligned}
     \delta_N(r_i) = \min\{\operatorname{deg}\ U_{m,1}(x),...,\operatorname{deg}\ U_{N,N}(x)\}
     \end{aligned}
 \end{equation}
 be the minimum vertex degree.Define the largest nearest-neighbor link, the smallest edge distance for which each vertex has at least one neighbor
 \begin{equation}
     \begin{aligned}
     d_N = \inf\{r_i:\delta_N(r_i)\geq 1\}
     \end{aligned}
 \end{equation}

 With Theorem 1.2 from \cite{10.1214/aop/1022677261},
 \begin{equation}
     \limsup\limits_{N \to \infty}(d_N^d\frac{N^2}{\log N})= 2\frac{(1-\frac{1}{d})S}{V_u},d\geq 2
 \end{equation}
 $\exists N_0'$ when $N>N_0'$, and augmentation strength is less than $(\frac{2(d-1)S\log N}{2N^2V_u d})^{\frac{1}{d}}-\epsilon_2$, there will be at least 1 isolated point which is not connected to any other point,i.e the class is departed.\\
 Thus $\exists N_1 = \max(N_0,N_0')$,when $N>N_1$, if augmentation strength is larger than $(\frac{2(d-1)S\log N}{2N^2V_u d})^{\frac{1}{d}}+\epsilon_1$, the graph is connected, if augmentation strength is less than $(\frac{2(d-1)S\log N}{2N^2V_u d})^{\frac{1}{d}}-\epsilon_2$, there will be at least 1 isolated point.
\end{proof}

\section{Additional Experimental Details}

\label{sec:additional-experiments}

\subsection{Simulation on Random Augmentation Graph}

% To verify our theorem, we make simulation experiment whose setting is similar to our analysis in Appendix B. 
Following our setting in Section \ref{sec:random-augmentation}, we consider a binary classification task with InfoNCE loss. We generate data from two uniform distribution on a unit ball $\sS^2$ in the $3$-dimensional space. One center is $(0,0,1)$ and another is $(0,0,-1)$. The area of both parts are 1. We take 5000 {samples} as train set and 1000 samples as test set. For the encoder class $\mathcal{F}$, we use a single-hidden-layer neural network with softmax activation, and we use InfoNCE loss to optimize it.

\subsection{Experiments on Real-world Datasets}

To better understand and verify our theorem, we conduct experiments on real-world datasets, including CIFAR-10, CIFAR-100 and STL-10. We use SimCLR \citep{simclr} and BYOL \cite{BYOL} as our training framework and use ResNet18 as our network. For CIFAR-10 and CIFAR-100, we adopt $C=10$ augmentations for each image, and search neural neighbors in the entire augmented dataset. For STL-10, we adopt $C=6$ due to its relatively large size.

% As we manily specify on how augmentation strength, we adopt the RandomResizedCrop operation with scale range [a,b] for data augmentation. and we define its strength of augmentation as $r=(1-b)+(1-a)$. We focus on how augmentation strength influences the performance of classification tasks and other metric introduced in Section \ref{sec:}, including ACR and ARC. Initially we want to observe how $\rm ACR_1$ changes during training process and how augmentation strength influences it. Then we test $\rm ARC_1$ of different augmentation strength on cifar10, cifar100 and stl10. As we find it is strongly related to augmentation strength and classification performance, we test diiferent k for $\rm ARC$, including k=10, k=20 and k=100.

\end{document}